\newcommand{\ie}{{i.e.}}
\newcommand{\bm}{{\boldmath}}
\newcommand{\etal}{{et al. }}
\newcommand{\hide}[1]{}
\newcommand{\inred}[1]{\textcolor{red}{#1}}
\newcommand*{\Scale}[2][4]{\scalebox{#1}{$#2$}}
\newcommand{\norm}[1]{\left\lVert#1\right\rVert}
\newtheorem{problem}{Problem}
\newtheorem{assumption}{Assumption}
\newcommand{\RNum}[1]{\uppercase\expandafter{\romannumeral #1\relax}}
\DeclarePairedDelimiter\ceil{\lceil}{\rceil}
\DeclarePairedDelimiter\floor{\lfloor}{\rfloor}
\begin{document}

\title{Stable Prediction across Unknown Environments\thanks{We are grateful for helpful comments from Vitor Hadad.}}

\author{\name Kun Kuang\thanks{Equal Contribution} \email kkun2010@gmail.com \\
       \addr Department of Computer Science\\
       Tsinghua University
       \AND
       \name Ruoxuan Xiong\footnotemark[2] \email rxiong@stanford.edu \\
       \addr Department of Management Science \& Engineering\\
       Stanford University
       \AND
       \name Peng Cui \email cuip@tsinghua.edu.cn \\
       \addr Department of Computer Science\\
       Tsinghua University
       \AND
       \name Susan Athey \email athey@stanford.edu \\
       \addr Graduate School of Business\\
       Stanford University
       \AND
       \name Bo Li \email libo@sem.tsinghua.edu.cn \\
       \addr School of Economics and Management\\
       Tsinghua University}

\hide{
\author{\name Marina Meil\u{a} \email mmp@stat.washington.edu \\
       \addr Department of Statistics\\
       University of Washington\\
       Seattle, WA 98195-4322, USA
       \AND
       \name Michael I.\ Jordan \email jordan@cs.berkeley.edu \\
       \addr Division of Computer Science and Department of Statistics\\
       University of California\\
       Berkeley, CA 94720-1776, USA}
       }

\editor{XXX}

\maketitle

\begin{abstract}%   <- trailing '%' for backward compatibility of .sty file
In many machine learning applications, the training distribution used to learn a probabilistic classifier differs from the testing distribution on which the classifier will be used to make predictions.
Traditional methods correct the distribution shift by reweighting the training data with the ratio of the density between test and training data. But in many applications training takes place without prior knowledge of the testing. 
Recently, methods have been proposed to address the shift by learning causal structure, but they rely on the diversity of multiple training data to a good performance, and have complexity limitations in high dimensions.
In this paper, we propose a novel Deep Global Balancing Regression algorithm to jointly optimize a deep auto-encoder model and a global balancing model for stable prediction across unknown environments.
The global balancing model constructs balancing weights that facilitate estimating of partial effects of features, a problem that is challenging in high dimensions, and thus helps to identify stable, causal relationships between features and outcomes. The deep auto-encoder model is designed to reduce the dimensionality of the feature space, thus making global balancing easier. We show, both theoretically and with empirical experiments, that our algorithm can make stable predictions across unknown environments.
\end{abstract}

\begin{keywords}
  Stability, Stable Prediction, Unknown Environments, Confounder Balancing, Causal Relationship.
\end{keywords}

\section{Introduction}

Predicting unknown outcome values based on their observed features using a model estimated on a training data set is a common statistical problem. Many machine learning and data mining methods have been proposed and shown to be successful when the test data and training data come from the same distribution.
However, the best-performing models for a given distribution of training data typically exploit subtle statistical relationships among features, making them potentially more prone to prediction error when applied to test data sets where, for example, the joint distribution of features differs from that in the training data. 
Therefore, it can be useful to develop predictive algorithms that are robust to shifts in the environment, particularly in application areas where models can not be retrained as quickly as the environment changes, \ie, online prediction.

\hide{
 \todo{Perhaps give some specific cases where that might be, from cases where model training is computationally expensive, where it is expensive to gather new outcome data, or cold start problem in a new geography, or online environment like newspaper where a viral story can bring in a different distribution of customers}
}

Recently, many methods (\cite{shimodaira2000improving,bickel2009discriminative,sugiyama2008direct,huang2007correcting,dudik2006correcting,liu2014robust}) have been proposed to address this problem.
The main idea of these methods is to reweight training data with a density ratio, so that its distribution can become more closely aligned with the distribution of test data. The methods have achieved good performance for correcting for distribution shift, but they require prior knowledge of the test distribution when estimating the density ratio. 

For the case of unknown test data, some researchers have proposed learning methods where training takes place across multiple training datasets.
By exploring the invariance across multiple datasets, Peters \etal (\cite{peters2016causal}) proposed an algorithm to identify causal features, and Rojas-Carulla \etal (\cite{rojas2015causal}) proposed a causal transform framework to learn invariant structure.
Similarly, domain generalization methods (\cite{muandet2013domain}) try to learn an invariant representation of data.
The performance of these methods relies on the diversity of their multiple training data, and they cannot address distribution shifts which do not appear in their training data.
Moreover, most of these methods are highly complex, with training complexity growing exponentially with the dimension of the feature space in the worst case, which is not acceptable in high dimensional settings.

\hide{
An alternative way to improve the stability of prediction algorithms is to isolate the impact of each individual feature.
With a small number of discrete features and a large enough dataset, simple estimation methods such as ordinary least squares can accomplish this goal.
However, with a larger set of features, there will be many combinations of features without sufficient data for stable prediction.
In such cases, alternative approaches customized for high-dimensional settings are required.
Here, we use an approach motivated by variable balancing strategies used for estimating average treatment effects in observational studies.
Existing variable balancing methods attempt to construct weights that balance the distribution of covariates between a treatment and a control group. 
They either employ propensity scores \cite{rosenbaum1983central,lunceford2004stratification,austin2011introduction,kuang2017treatment}, or optimize balancing weights directly \cite{hainmueller2012entropy,zubizarreta2015stable,athey2016approximate,kuang2017estimating}.
These methods provide an efficient approach to estimate causal effects with a small number of treatment variables in observational studies, but most of them can not handle well settings with many variables that may all have a causal effect, and they apply to the case where the analyst has prior knowledge of which variables have a causal effect and which are potential confounders. Moreover, these methods cannot be directly applied for prediction problem.

\hide{
In this paper, we propose an algorithm based on reweighting observations in the training data in order to isolate the impact of each individual feature. With a small number of discrete features and a large enough dataset, simple estimation methods such as ordinary least squares accomplish this goal; indeed, if there is sufficient data to estimate the conditional mean of the outcome given each combination of features, it is possible to estimate the incremental impact of each feature holding fixed the others.  However, with a larger set of features (and thus feature combinations), there will be many combinations of features without sufficient data to estimate the conditional mean. In such cases, alternative approaches customized for high-dimensional settings are required.  Here, we use
an approach motivated by balancing estimation strategies used in for estimating average treatment effects in observational studies.
Existing variable balancing methods  for estimating average treatment effects attempt to construct weights that balance the distribution of covariates between a treatment and a control group. These methods either employ propensity scores \cite{rosenbaum1983central,lunceford2004stratification,austin2011introduction,kuang2017treatment}, or optimize balancing weights directly \cite{hainmueller2012entropy,zubizarreta2015stable,athey2016approximate,kuang2017estimating}.
These methods provide an efficient approach to estimate causal effects with a small number of treatment variables in observational studies, but most of them can not handle well settings with many variables that may all have a causal effect (at best, they are designed for the case with high-dimensional confounders but a small number of treatment variables), and they apply to the case where the analyst has prior knowledge of which variables have a causal effect and which are potential confounders.
}

\hide{\todo{check that is correct across all papers cited and/or narrow the claim} }

In this paper, we focus on an environment where features fall into one of two categories: either the features have a causal effect on the outcome, or they are confounders; we further assume that there are no unobserved confounders, so that it is possible to estimate the causal effect of each causal variable when all covariates are adequately controlled for.  Thus, the challenge is to do so as well as possible when there are many features and perhaps limited data, and further, when the analyst does not have prior knowledge of which features have a causal effect and which do not.
To address this problem, we propose a Global Balancing Regression (GBR) algorithm for stable prediction. But it's challengeable for GBR to address the high-dimensional features and their nonlinear interactions. Therefore, we propose a \textbf{Deep} Global Balancing Regression (DGBR) algorithm based on GBR for stable prediction.
The framework is illustrated in Figure~\ref{fig:framework}, which consists of three (jointly optimized) sub-models: (i) a deep auto-encoder to reduce the dimensionality of the features, (ii) construction of balancing weights that enable the effect of each covariate to be isolated, and (iii) estimation of a predictive model using the encoded features and balancing weights.
We prove that our algorithm can make a stable prediction across unknown environments with both theoretical analysis and empirical experiments.
The experimental results on both synthetic and real world datasets demonstrate that our algorithm outperforms all the baselines for the stable prediction problem.

In summary, the contributions of this paper are listed as follows:
\begin{itemize}
\item We investigate the problem of stable prediction across unknown environments, where the distribution of agnostic test data might be very different with the training data.
\item We propose a novel DGBR algorithm to jointly optimize deep auto-encoder for dimension reduction and global balancing for estimation of causal effects, and simultaneously address the stable prediction problem.
\item We give theoretical analysis on our proposed algorithm and prove that our algorithm can make a stable prediction across unknown environments by global balancing.
\item The advantages of our DGBR algorithm are demonstrated on both synthetic and real world datasets.
\end{itemize}
}

In this paper, we focus on an environment where the expected value of the outcome conditional on all covariates is stable across enrivonments.  Further, covariates fall into one of two categories: for the first category, the conditional expectation has a non-zero dependence on the covariates; we call these ``causal'' variables, although in some applications they might better be described as variables that have a structural relationship with the outcome.  For example, ears, noses, and whiskers are structural features of cats that are stable across different environments where images of animals may be taken.  A second category of variable are termed ``noisy variables,'' which are variables that are correlated with either the causal variables, the outcome, or both, but do not themselves have a causal effect on the outcome; conditional on the full set of causal variables, they do not affect expected outcomes. Further, we consider a setting where the analyst may not know a prior which variables fall into each category.  Finally, we assume that there are no unobserved confounders, so that it is possible to estimate the causal effect of each causal variable with a very large dataset when all covariates are adequately controlled for.  We focus on settings when there are many features and perhaps limited data.

One way to improve the stability of prediction algorithms in such a setting is to isolate the impact of each individual feature.  If the expectation of the outcome conditional on covariates is stable across environments, and variability in the joint distribution of features is the source of instability, then the stable prediction problem can be solved by estimating the conditional expectation function accurately.  
With a small number of discrete features and a large enough dataset, simple estimation methods such as ordinary least squares can accomplish this goal.  If there is a larger number of features but only a few matter for the conditional expectation (that is, the true outcome model is sparse), regularized regression can be applied to consistently estimate the conditional expectation function.  
However, with a larger set of causal features relative to the number of observations, regularized regression will no longer consistently estimate partial effects.  For example, LASSO will omit many variables from the regression, while the coefficients on included variables depend on the covariance of the outcome with the omitted variables as well as on the covariance between the omitted and included variables.  This results in instability: if the covariance among features differs across environments, then prediction based on such a model will be unstable across environments. 
In such high-dimensional cases, alternative approaches are required.  

Here, we use an approach motivated by the literature on causal inference, where variable balancing strategies are used for estimating the average effect of changing a single binary covariate (the treatment). Causal inference methods optimize a different objective than prediction-based methods; they prioritize consistent estimation of treatment effects over prediction in a given training data set. The methods are designed for a scenario where the analyst has domain knowledge about which variable has a causal effect, so that the focus of the analysis is on estimating the effect of the treatment in the presence of other features which are known to be confounders (variables that affect both treatment assignment and potential outcomes). Indeed, only after controlling for confounders can the difference in the expectation of the outcome between treatment and control groups be interpreted as a treatment effect.  One approach to estimating treatment effects in the presence of confounders is to use variable balancing methods, which
attempt to construct weights that balance the distribution of covariates between a treatment and a control group. 
They either employ propensity scores (\cite{rosenbaum1983central,lunceford2004stratification,austin2011introduction,kuang2017treatment,kuang2016steering}), or optimize balancing weights directly (\cite{hainmueller2012entropy,zubizarreta2015stable,athey2016approximate,kuang2017estimating}).
These methods provide an efficient approach to estimate causal effects with a small number of treatment variables in observational studies, but most of them can not handle well settings where there may be many causal variables and the analyst does not know which ones are causal; as such, existing covariate balancing methods do not immediately extend to the general stable prediction problem.  

\hide{
, and they apply to the case where the analyst has prior knowledge of which variables have a causal effect and which are potential confounders
}
Inspired by balancing methods from the causal inference literature, we propose a Deep Global Balancing Regression (DGBR) algorithm for stable prediction. The framework is illustrated in Figure~\ref{fig:framework}, which consists of three (jointly optimized) sub-models: (i) a deep auto-encoder to reduce the dimensionality of the features, (ii) construction of balancing weights that enable the effect of each covariate to be isolated, and (iii) estimation of a predictive model using the encoded features and balancing weights.  As this algorithm explicitly prioritizes covariate balancing (at the expense of a singular focus on predictive accuracy in a given training dataset), it is able to achieve greater stability than a purely predictive model.
Using both empirical experiments and theoretical analysis, we establish that our algorithm achieves stability in prediction across unknown environments.
The experimental results on both synthetic and real world datasets demonstrate that our algorithm outperforms all the baselines for the stable prediction problem.

In summary, the contributions of this paper are listed as follows:
\begin{itemize}
\item We investigate the problem of stable prediction across unknown environments, where the distribution of agnostic test data might be very different with the training data.
\item We propose a novel DGBR algorithm to jointly optimize deep auto-encoder for dimension reduction and global balancing for estimation of causal effects, and simultaneously address the stable prediction problem.
\item We give theoretical analysis on our proposed algorithm and prove that our algorithm can make a stable prediction across unknown environments by global balancing.
\item The advantages of our DGBR algorithm are demonstrated on both synthetic and real world datasets.
\end{itemize}

The rest of the paper is organized as follows.
Section 2 reviews the related work.
In Section 3, we give problem formulation and introduce our DGBR algorithm.
Section 4 gives the optimization and discussion on our algorithm.
Section 5 gives the theoretical analysis on our algorithm.
Section 6 gives the experimental results.
Finally, Section 7 concludes.

\section{Related Work}
In this section, we investigate the previous related work, including literatures on covariate shift, variable balancing, and invariant learning.

The covariate shift literature (\cite{shimodaira2000improving}) focuses on settings where the data distribution for training is different than the data distribution for testing.
To correct for the differences, (\cite{shimodaira2000improving}) introduced the idea of reweighting samples in training data by the ratio of the density in the testing data to the density in the training data. 
A variety of techniques have been proposed to estimate the density ratio, including discriminative estimation (\cite{bickel2009discriminative}), Kullaback-Leibler importance estimation (\cite{sugiyama2008direct}), kernel mean matching (\cite{huang2007correcting} \cite{yu2012analysis}), maximum entropy methods (\cite{dudik2006correcting}), minimax optimization (\cite{wen2014robust}), and robust bias-aware approach (\cite{liu2014robust}).
These methods achieved good performance for correcting for covariate shifts, but most of them require prior knowledge of testing distribution to estimate the density ratio.
In contrast, we focus on the stable prediction across unknown environments in this paper.
\hide{
Unfortunately, it is infeasible in many real applications, for example prediction on streaming test data or agnostic test data.
In this paper, we propose a deep global balancing algorithm for stable prediction across agnostic test data.
}

Adjusting for confounders is a key challenge for estimating causal effects in observational studies.
To precisely estimate causal effects in the presence of many confounders, covariate balancing methods have been proposed (\cite{kuang2017estimating,kuang2017effective,kuang2017treatment,athey2016approximate,zubizarreta2015stable,hainmueller2012entropy,rosenbaum1983central}).
In a seminal paper, Rosenbaum and Rubin (\cite{rosenbaum1983central}) proposed to achieve variable balancing by reweighting observations 
by the inverse of propensity score.
Kuang et al. (\cite{kuang2017treatment}) proposed a data-driven variable decomposition method for variable balancing.
Li et al. (\cite{li2017matching}) bal- anced the variables by matching on their nonlinear representation.
Hainmueller (\cite{hainmueller2012entropy}) introduced entropy balancing method for variable balancing across a range of statistical tasks.
%Zubizarreta \cite{zubizarreta2015stable} learnt a stable balancing weights via minimizing its variance and adjust them for variable balancing.
Athey \etal (\cite{athey2016approximate}) proposed approximate residual balancing algorithm, which, motivated by doubly robust approaches, combines outcome modeling using the LASSO with balancing weights constructed to approximately balance covariates between treatment and control groups.
Kuang \etal (\cite{kuang2017estimating}) proposed a differentiated variable balancing algorithm by jointly optimizing sample weights and variable weights.
These methods provide an effective way to estimate causal effects in observational studies, but they are limited to estimate causal effect of one variable, and are not designed for the case with many causal variables; further, the methods assume that the analyst has prior knowledge of which covariates have a causal effect and which do not.
\hide{
In this paper, by adapting the existing variable balancing technique, we propose a deep global balancing algorithm to explore causal relationship between all observed variables and response variable, and make a stable prediction across environments.
}

Recently, some methods have been proposed to make prediction on agnostic test data using the method of invariant learning.
Peters \etal (\cite{peters2016causal}) proposed an algorithm to identify causal predictors by exploring the invariance of the conditional distribution of the outcome across multiple training datasets.
Rojas-Carulla \etal (\cite{rojas2015causal}) proposed a causal transfer framework to identify invariant predictors across multiple datasets and then use them for prediction.
Similarly, domain generalization (\cite{muandet2013domain}) methods estimate an invariant representation of data by minimizing the dissimilarity across training domains.
Invariant learning methods can be used to estimate a model that will in principle perform well for an unknown test dataset, but the performance of these methods relies on the diversity of their multiple training data, and they cannot address the distribution shift which does not appear in their training data.
\hide{
Otherwise, most of these methods are highly complex with training complexity as $2^{p}$ in the worst case.
}

\hide{
In this paper, by adapting variable balancing technique, we propose a deep global balancing algorithm for stable prediction across environments, and our algorithm needs only one dataset for training.  
}
\hide{
However, similar to the invariant learning literature, we rely on multiple datasets constructed by the analyst in order to tune the model.  In our method, the analyst artificially creates differences across training environments.\todo{Kun: check you agree with this description  Also the paper doesn't give many details about how to do this or why it matters.}
}
\section{Problem and Our Algorithm}

In this section, we first give problem formulation, then introduce the details of our deep global balancing regression algorithm. Finally, we give theoretical analysis about our proposed algorithm.

\subsection{Problem Formulation}

Let $\mathcal{X}$ denote the space of observed features and $\mathcal{Y}$ denote the outcome space. For simplicity, we consider the case where the features have finite support, which without loss
of generality can be represented as a set of binary features: $\mathcal{X}=\{0,1\}^p$. We also focus on the case where the outcome space is binary: $\mathcal{Y}=\{0,1\}$. 
We define an \textbf{environment} to be a joint distribution $P_{XY}$ on $\mathcal{X} \times \mathcal{Y}$, and let $\mathcal{E}$ denote the set of all environments.
In each environment $e\in \mathcal{E}$, we have dataset $D^{e} = (\mathbf{X}^{e}, Y^{e})$, where $\mathbf{X}^e \in \mathcal{X}$ are predictor variables and $Y^e \in \mathcal{Y}$ is a response variable.  The joint distribution of features and outcomes on $(\mathbf{X},Y)$ can vary across environments: $P^{e}_{XY} \neq P^{e'}_{XY}$ for $e,e'\in\mathcal{E}$, and $e\neq e'$.

\begin{table}[tbp]
\centering
\caption{Symbols and definitions.}
\label{tab:symbols}
%\resizebox{!}{2.0cm}
%{
\begin{tabular}{|c|l|}
\hline
Symbols & Definitions \\
\hline
\hline
$n$ & Sample size \\
\hline
$p$ & Dimension of features \\
\hline
$\mathbf{X}=\{\mathbf{S},\mathbf{V}\}\in\{0,1\}^p$ & Features\\
\hline
$\mathbf{S}\in\{0,1\}^{p_s}$ & Stable features\\
\hline
$\mathbf{V}\in\{0,1\}^{p_v}$ & Noisy features\\
\hline
$Y\in\{0,1\}$ & Outcome \\
\hline
$W\in {\mathbb{R}^+}^{n\times 1}$ & Global sample weights\\
\hline
$\phi(\cdot)$ & Embedding function \\
\hline
\end{tabular}
%}
\end{table}

In this paper, our goal is to learn a predictive model, which can make a stable prediction across unknown environments. Before giving problem formulation, we first define $Average\_Error$ and $Stability\_Error$ across environments of a predictive model as:
\begin{eqnarray}
\label{metrics:acc} \Scale[0.9]{Average\_Error} \!\!\!\!\!\! &=& \!\!\!\!\!\! \Scale[0.9]{\frac{1}{|\mathcal{E}|}\sum_{e \in \mathcal{E}}Error(D^e)},\\
\label{metrics:stb}  \Scale[0.9]{Stability\_Error} \!\!\!\!\!\! &=& \!\!\!\!\!\! \Scale[0.8]{\sqrt{\frac{1}{|\mathcal{E}|-1}\sum_{e \in \mathcal{E}}\left(Error(D^e)-Average\_Error\right)^{2}}},
\end{eqnarray}
where $|\mathcal{E}|$ refers to the number of environments, and $Error(D^e)$ represents the predictive error on dataset $D^e$ from environment $e$. 

\hide{
It should be clear that the definition of Stability depends on a prior notion of the set of environments that might be relevant as well as how those environments are weighted.  The algorithm we propose is less sensitive to this choice--we use it in tuning model parameters but not in the core algorithm--and we leave the study of how sensitive our tuning is to the specification of the set of environments to future work.
}

In this paper, we define Stability (\cite{yu2013stability}) by $Stability\_Error$. The smaller $Stability\_Error$, the better a model is ranked in terms of Stability.
Then, we define the stable prediction problem as follow:

\begin{problem}[Stable Prediction] 
\textbf{Given} one training environment $e\in \mathcal{E}$ with dataset $D^{e}=(\mathbf{X}^{e},Y^{e})$, the task is to \textbf{learn} a predictive model to predict across unknown environment $\mathcal{E}$ with not only small $Average\_Error$ but also small $Stability\_Error$. 
\end{problem}

Suppose $\mathbf{X} = \{\mathbf{S} ,\mathbf{V} \}$. We define $\mathbf{S}$ as \emph{stable features}, and refer to the other features $\mathbf{V} = \mathbf{X} \backslash \mathbf{S}$ as \emph{noisy features}, where the following assumption gives their defining properties:

\begin{assumption}
\label{asmp:stable}
There exists a probability mass function $P(y|s)$ such that for all environments $e\in\mathcal{E}$, $Pr(Y^e=y|\mathbf{S}^e=s,\mathbf{V}^e=v)=Pr(Y^e=y|\mathbf{S}^e=s)=P(y|s)$.
\end{assumption}

\hide{
Denote $E$ as the environment variable. Under assumption \ref{asmp:stable}, the conditional distribution of $Y$ given $\mathbf{S}=s$ in an environment $E=e$, $P(Y|\mathbf{S}=s, E=e)$, is the same $\forall e$. However, the conditional distribution $P(Y|\mathbf{V}=v, E=e)$ may vary in different environments.
}

\hide{identifying stable features and capturing structure between the stable features and the outcome.
Thus, there are two key challenges for addressing the stable prediction problem.
The first challenge is how to precisely identify the stable features $\mathbf{S}$ from whole observed variables $\mathbf{X}$, especially in high dimensional settings.
The second challenge is how to capture the underlying structure between stable features $\mathbf{S}$ and response variable $Y$, which could be highly non-linear in real applications.  Our algorithm does not separate those goals into distinct steps, but accomplishes both goals (approximately) in a single algorithm.}

With Assumption \ref{asmp:stable}, we can address the stable prediction problem by building a model that learns the stable function $P(y|s)$. To understand the content of Assumption \ref{asmp:stable}, without loss of generality we can write a generative model for the outcome unit $i$ in environment $e$ with stable features $s$, where $h(\cdot)$ is a known function to account for discreteness of $Y$:
\[Y^e_i(s)=h(g(s)+\epsilon^e_{s,i}), \text{ and } Y_i^e = Y^e_i(\mathbf{S}_i) = h(g(\mathbf{S}_i) + \epsilon^e_{{\mathbf{S}_i},i}).\]
$Y^e_i(s)$ is the outcome that would occur for unit $i$ in environment $e$ if the input is equal to $s$.  If we allow $\epsilon^e_{s,i}$ to be correlated with the unit's features $\mathbf{X}_i$ in arbitrary ways, Assumption \ref{asmp:stable} may fail, for example if $\mathbf{V}_i^e$ is positively correlated with $\epsilon^e_{s,i}$ then units with higher values of $\mathbf{V}^e_i$ would have higher than average values of $Y_i^e$, so that $\mathbf{V}_i^e$ would be a useful predictor in a given environment, but that relationship might vary across environments, leading to instability.  If we first impose the condition that for each $s$, $\epsilon^e_{s,i}$ is independent of $\mathbf{V}^e_i$ conditional on $\mathbf{S}^e_i$, then given the model specification, $\mathbf{V}^e_i$ is no longer needed as a predictor for outcomes conditional on $\mathbf{S}^e_i$.  If we
second impose the condition that for each $s$, $\epsilon^e_{s,i}$ is independent of $\mathbf{S}^e_i$ conditional on $\mathbf{V}^e_i$, then instability in the distribution of $\epsilon^e_{s,i}$ across environments will not affect $Pr(Y^e=y|\mathbf{S}^e=s,\mathbf{V}^e=v)$.
Maintaining the first condition, the second condition is sufficient not only for Assumption \ref{asmp:stable} but also to enable consistent estimation of $g(\cdot)$ using techniques from the causal inference literature in a setting with sufficient sample size and when the analyst has prior knowledge of the set of stable features; we propose a method that will estimate $g$ without prior knowledge of which features are stable.   
We also observe that a stronger but simpler condition can replace the second condition to guarantee Assumption \ref{asmp:stable}, namely that the distribution of $\epsilon^e_{s,i}$ does not vary with $\{e,s\}$.
Fig. \ref{fig:graph} illustrates three relationships between predictor variables $\mathbf{X^e} = \{\mathbf{S}^e, \mathbf{V}^e\}$ and response variable $Y^e$ consistent with the conditions, including $\mathbf{S}\perp \mathbf{V}$, $\mathbf{S}\rightarrow \mathbf{V}$, and $\mathbf{V}\rightarrow \mathbf{S}$.

\begin{figure}[tb]
\centering
\subfloat[$\mathbf{S}\perp \mathbf{V}$ \label{fig:s0v}]{
  \includegraphics[width=1.1in]{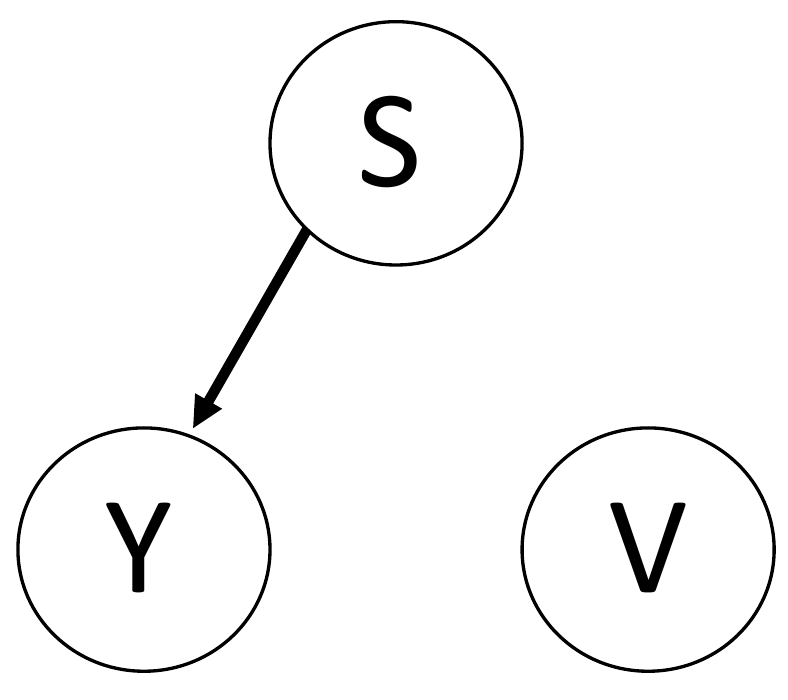}
}
\subfloat[$\mathbf{S}\rightarrow \mathbf{V}$ \label{fig:s2v}]{
  \includegraphics[width=1.1in]{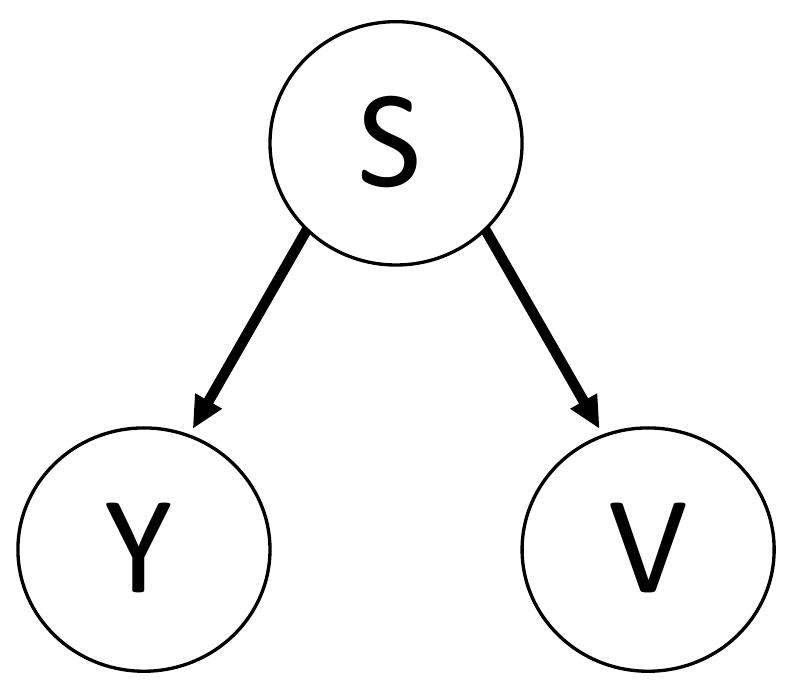}
}
\subfloat[$\mathbf{V}\rightarrow \mathbf{S}$ \label{fig:v2s}]{
  \includegraphics[width=1.1in]{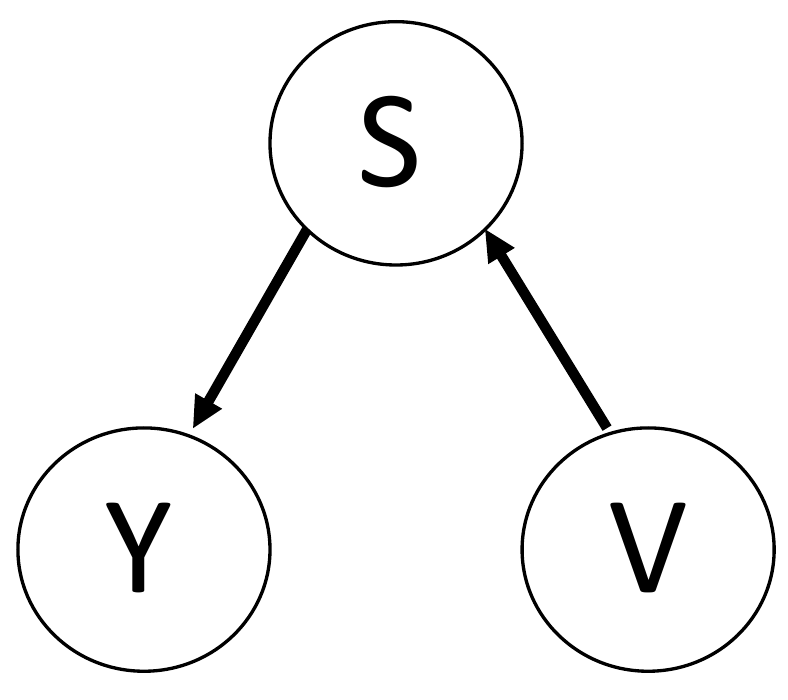}
}
\caption{Three diagrams for stable features $\mathbf{S}$, noisy features $\mathbf{V}$, and response variable $Y$.}
\label{fig:graph}
\end{figure}

\hide{
In practice, the relationship between $\mathbf{X}$ and $Y$ we observed might be very different with the unseen ground truth $Y=f(\mathbf{S})$ because of the selection bias induced by data sampling or data collection, which could generate spurious correlation between $\mathbf{X}$ and $Y$. 
Therefore, to precisely identify stable features $\mathbf{S}$ for stable prediction, one has to remove the spurious correlation between $\mathbf{X}$ and $Y$ from observed training dataset.
}

\subsection{The Model}

\subsubsection{Framework}

\begin{figure}[t]
\centering
\includegraphics[width=4.0in]{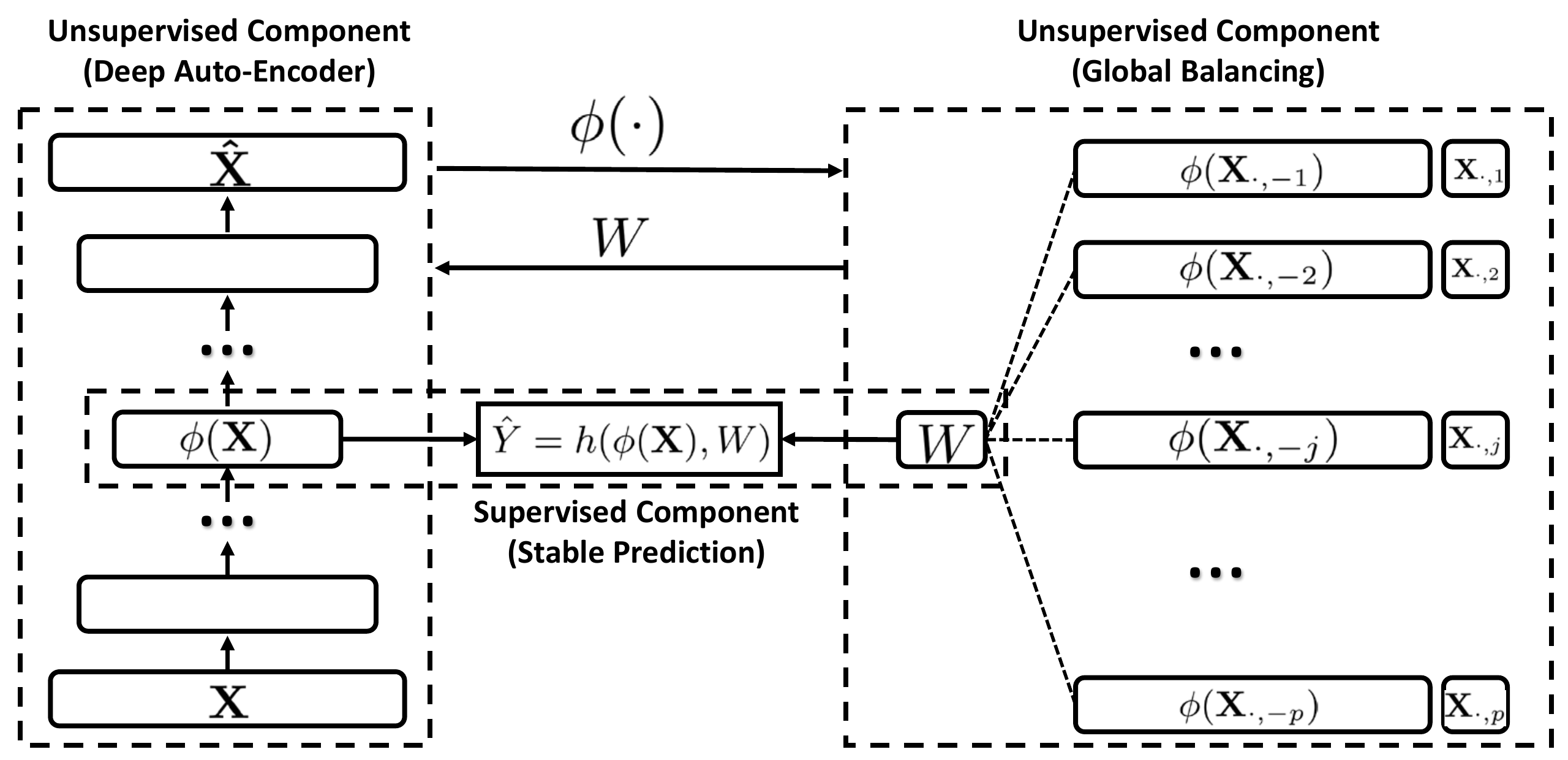}
\caption{The framework of our proposed DGBR model.}
\label{fig:framework}
\end{figure}

We propose a Deep Global Balancing Regression (DGBR) algorithm to identify stable features and capture non-linear structure for stable prediction.
Its framework is shown in Figure~\ref{fig:framework}.
To identify the stable features, we propose a global balancing model, where we learn global sample weights which can be used to estimate the effect of each feature while controlling for the other features and thus identify stable features.
To capture the non-linear structure between stable features and response variable, we employ a deep auto-encoder model, which is composed of multiple non-linear mapping functions to map the input data to a non-linear and low dimensional space. Balancing in a low dimensional space simplifies the problem of global balancing, since for each covariate $j$, the weights balance the constructed covariates from the dimension reduction $\phi(\mathbf{X}_{\cdot,-j})$ across realizations of $\mathbf{X}_{\cdot,j}$. Finally, weighting observations with the global sample weights, we learn a predictive model for outcomes as a function of the low-dimensional representation of covariates using regularized regression.  All three components of the model are jointly optimized in the algorithm.

\subsubsection{Global Balancing Regression Algorithm}

In this section, we develop the construction of global balancing weights. To be self-contained, we briefly revisit the key idea of variable balancing technique. Variable balancing techniques are often used for causal effect estimation in observational studies, where the distributions of covariates are different between treated and control groups because of non-random treatment assignment, but treatment assignment is independent of potential outcomes conditional on covariates.
To consistently estimate causal effects in such a setting, one has to balance the distribution of covariates between treatment and control.
Most variables balancing approaches exploit moments to characterize distributions, and balance them between treated and control groups by adjusting sample weights $W$ as following:
\begin{equation}
    \label{eq:variable_balancing}
    W = \arg \min_{W}\Scale[1.0]{\left\|\frac{\sum_{i:T_i=1}W_i\cdot \mathbf{X}_i}{\sum_{i:T_i=1}W_i} - \frac{\sum_{i:T_i=0}W_i\cdot \mathbf{X}_i}{\sum_{i:T_i=0}W_i}\right\|^{2}_2}.
\end{equation}
Given a treatment variable $T$, the $\frac{\sum_{i:T_i=1}W_i\cdot \mathbf{X}_i}{\sum_{i:T_i=1}W_i}$ and $\frac{\sum_{i:T_i=0}W_i\cdot \mathbf{X}_i}{\sum_{i:T_i=0}W_i}$ represent the first-order moments of variables $\mathbf{X}$ on treated ($T=1$) and control ($T=0$) groups, respectively.
By sample reweighting with $W$ learnt from Eq.~(\ref{eq:variable_balancing}), one can estimate the causal effect of treatment variable on response variable by comparing the average difference of $Y$ between treated and control groups.  In high-dimensional problems, approximate balancing can be used for consistent estimation under some additional assumptions (\cite{athey2016approximate}), where to control variance of estimates the sum of squared weights is also penalized in the minimization.

In low dimensions, the same approach could be employed to estimate $Pr(Y=y|\mathbf{X}=x)$ for different values of $x$.  However, when $p$ is large, there may not be sufficient data to do so, and so approximate balancing techniques generalized to the case where $\mathbf{X}$ is a vector of indicator variables may perform well in practice, and also help identify stable features from the larger vector $\mathbf{X}$.
We propose a global balancing regularizer, where we successively regard each variable as treatment variable and balance all of them together via learning global sample weights by minimizing:
\begin{equation}
\label{eq:L_balancing}
\Scale[1.0]{\sum_{j=1}^{p}\left\|\frac{\mathbf{X}_{\cdot,-j}^T\cdot (W\odot \mathbf{X}_{\cdot,j})}{W^T\cdot \mathbf{X}_{\cdot,j}}-\frac{\mathbf{X}_{\cdot,-j}^T\cdot (W\odot (1-\mathbf{X}_{\cdot,j}))}{W^T\cdot (1-\mathbf{X}_{\cdot,j})}\right\|_2^2},
\end{equation}
where $W$ is global sample weights, $\mathbf{X}_{\cdot,j}$ is the $j^{th}$ variable in $\mathbf{X}$, and $\mathbf{X}_{\cdot,-j} = \mathbf{X} \backslash \{\mathbf{X}_{\cdot,j}\}$ means all the remaining variables by removing the $j^{th}$ variable in $\mathbf{X}$ \footnote{We obtain $\mathbf{X}_{\cdot,-j}$ in experiment by setting the value of $j^{th}$ variable in $\mathbf{X}$ as $zero$.}. 
The summand represents the loss from covariate imbalance when setting variable $\mathbf{X}_{\cdot,j}$ as the treatment variable, and $\odot$ refers to Hadamard product.
Note that only first-order moment is considered in Eq.~(\ref{eq:L_balancing}), but higher order moments can be easily incorporated by including interaction features of $\mathbf{X}$.

By sample reweighting with $W$ learnt from Eq.~(\ref{eq:L_balancing}), we can identify stable features  $\mathbf{S}$ by checking if there is any correlation between $Y$ and $\mathbf{X}$ covariate by covariate, because, as we show below, only stable features are correlated with $Y$ after sample reweighting by $W$.

With the global balancing regularizer in Eq.~(\ref{eq:L_balancing}), we propose a Global Balancing Regression (GBR) algorithm to jointly optimize global sample weights $W$ and regression coefficients $\beta$ for stable prediction based on traditional logistical regression as:
\begin{eqnarray}
\label{eq:global_balancing_algorithm}
\!\!\!&\min& \Scale[1.0]{\sum_{i=1}^{n}W_i\cdot \log(1+\exp((1-2Y_i)\cdot (\mathbf{X}_i\beta)))},\\
\nonumber \!\!\!&s.t.& \Scale[0.85]{\sum_{j=1}^{p}\left\|\frac{\mathbf{X}_{\cdot,-j}^T\cdot (W\odot \mathbf{X}_{\cdot,j})}{W^T\cdot \mathbf{X}_{\cdot,j}}-\frac{\mathbf{X}_{\cdot,-j}^T\cdot (W\odot (1-\mathbf{X}_{\cdot,j}))}{W^T\cdot (1-\mathbf{X}_{\cdot,j})}\right\|_2^2 \leq \lambda_1,\ \ \  W\succeq 0},\\
\nonumber \!\!\!  &\quad& \Scale[0.85]{\|W\|_2^2 \leq \lambda_2, \ \  \|\beta\|_2^2 \leq \lambda_3, \ \  \|\beta\|_1 \leq \lambda_4}, \ \ \Scale[0.85]{(\sum_{k=1}^{n}W_k-1)^{2} \leq \lambda_5}
\end{eqnarray}
where $\mathbf{X}_i$ is the $i^{th}$ row / sample in $\mathbf{X}$, and $\sum_{i=1}^{n}W_i\cdot \log(1+\exp((1-2Y_i)\cdot (\mathbf{X}_i\beta)))$ is the weighted loss of logistic regression and the loss is defined as the minus log likelihood.  
The terms $W\succeq 0$ constrain each of sample weights to be non-negative.
With norm $\|W\|_2^2\leq \lambda_2$, we can reduce the variance of the sample weights.
Elastic net constraints $\|\beta\|_2^2\leq \lambda_3$ and $\|\beta\|_1\leq \lambda_4$ help to avoid overfitting.
The formula $(\sum_{k=1}^{n}W_k-1)^{2} \leq \lambda_5$ avoids all the sample weights to be $zero$.

\subsubsection{Deep Global Balancing Regression Algorithm}

The proposed GBR algorithm in Eq.~(\ref{eq:global_balancing_algorithm}) can help to identify stable features and make a stable prediction, but with many features relative to observations, it may be difficult to estimate the effects of all the features as well as their interactions, and it might also be challenging for GBR to learn global sample weights.

To address these challenges, we propose a Deep Global Balancing Regression (DGBR) algorithm by jointly optimizing Deep auto-encoder and Global Balancing Regression. Following standard approaches (\cite{bengio2007greedy}), the deep auto-encoder consists of multiple non-linear mapping functions to map the input data to a low dimensional space while capturing the underlying features interactions.
Deep auto-encoder is an unsupervised model which is composed of two parts, the encoder and decoder.
The encoder maps the input data to low-dimensional representations, while the decoder reconstructs the original input space from the representations.
Given the input $\mathbf{X}_i$, the hidden representations for each layer are shown as follows:
\begin{eqnarray}
\nonumber \phi(\mathbf{X}_i)^{(1)} &=& \sigma(\mathbf{A}^{(1)}\mathbf{X}_i+b^{(1)})\\
\nonumber \phi(\mathbf{X}_i)^{(k)} &=& \sigma(\mathbf{A}^{(k)}\phi(\mathbf{X}_i)^{(k-1)}+b^{(k)}), k = 2,\cdots,K
\end{eqnarray}
where $K$ is the number of layer. $\mathbf{A}^{(k)}$ and $b^{(k)}$ are weight matrix and bias on $k^{th}$ layer. $\sigma(\cdot)$ represents non-linear activation function.\footnote{We use sigmoid function $\sigma(x) = \frac{1}{1+\exp(-x)}$ as non-linear activation function.}

After obtaining the representation $\phi(\mathbf{X}_i)^{(K)}$, we can obtain the reconstruction $\hat{\mathbf{X}}_i$ by reversing the calculation process of encoder with parameters $\hat{\mathbf{A}}^{(k)}$ and $\hat{b}^{(k)}$.
The goal of deep auto-encoder is to minimize the reconstruction error between the input $\mathbf{X}_i$ and the reconstruction $\hat{\mathbf{X}}_i$ with the following loss function.
\begin{equation}
\label{eq:L_auto}
\Scale[1.0]{\mathcal{L} = \sum_{i=1}^{n}\|(\mathbf{X}_i-\hat{\mathbf{X}}_i)\|_2^2.}
\end{equation}

By combining the loss functions of deep auto-encoder in Eq.~(\ref{eq:L_auto}) and GBR algorithm in Eq.~(\ref{eq:global_balancing_algorithm}), we give the objective function of our Deep Global Balancing Regression algorithm as:
\begin{eqnarray}
\label{eq:deep_global_balancing_algorithm}
&\min& \Scale[1.0]{\sum_{i=1}^{n}W_i\cdot \log(1+\exp((1-2Y_i)\cdot (\phi(\mathbf{X}_i)\beta)))},\\
\nonumber &s.t.& \Scale[0.9]{\sum_{j=1}^{p}\left\|\frac{\phi(\mathbf{X}_{\cdot,-j})^T\cdot (W\odot \mathbf{X}_{\cdot,j})}{W^T\cdot \mathbf{X}_{\cdot,j}}-\frac{\phi(\mathbf{X}_{\cdot,-j})^T\cdot (W\odot (1-\mathbf{X}_{\cdot,j}))}{W^T\cdot (1-\mathbf{X}_{\cdot,j})}\right\|_2^2 \leq \lambda_1},\\
\nonumber &\quad& \|(W\cdot \bm{1})\odot(X-\hat{X})\|_F^2\leq \lambda_2, \ \ W\succeq 0, \ \ \|W\|_2^2 \leq \lambda_3,\\
\nonumber   &\quad& \|\beta\|_2^2 \leq \lambda_4, \ \  \|\beta\|_1 \leq \lambda_5,\ \ \Scale[1.0]{(\sum_{k=1}^{n}W_k-1)^{2} \leq \lambda_6}\\
\nonumber  &\quad& \Scale[1.0]{\sum_{k=1}^{K}(\|A^{(k)}\|_F^2+\|\hat{A}^{(k)}\|_F^2)} \leq \lambda_7,
\end{eqnarray}
where $\phi(\cdot) = \phi(\cdot)^{(K)}$ for brevity.
$\|(W\cdot \mathbf{1})\odot(X-\hat{\mathbf{X}})\|_F^2$ represents the reconstruction error between input $\mathbf{X}$ and reconstruction $\hat{\mathbf{X}}$ with global sample weights $W$.
The term $\Scale[1.0]{\sum_{k=1}^{K}(\|\mathbf{A}^{(k)}\|_F^2+\|\hat{\mathbf{A}}^{(k)}\|_F^2)} \leq \lambda_7$ regularizes the coefficients of the deep auto-encoder model.

\section{Theoretical Analysis}

In this section, we give theoretical analysis about our algorithm. We prove it can make a stable prediction across unknown environments with sufficient data, and analyze the upper bound about our proposed algorithm. 

\subsection{Analysis on Stable Prediction}
A key requirement for the method to work is the overlap assumption, which is a common assumption in the literature of treatment effect estimation \cite{athey2016approximate}. We suppress the notation for the enviornment $e$ in the first part of this section.

\hide{
That is, even if we do not know which variables are $\mathbf{S}$ and which are $\mathbf{V}$ in $\mathbf{X}$, $\mathbf{S}$ and $\mathbf{V}$ are independent after we apply the global balancing method to $\mathbf{X}$, and then estimated $g(\cdot)$ are unbiased across unknown environments and we can make stable prediction for $Y$ using balanced $\mathbf{X}$.
}
\begin{assumption}[Overlap]
\label{asmp:overlap}
For any variable $\mathbf{X}_{\cdot,j}$ when setting it as the treatment variable, it has
$\forall j, 0 < P(\mathbf{X}_{\cdot,j} = 1 | \mathbf{X}_{\cdot,-j}) < 1$.
\end{assumption}

Then, we have following Lemma and Theorem:
\begin{lemma} \label{pro:population_overlap}
If $\forall j, 0 < P(\mathbf{X}_{\cdot,j} = 1 | \mathbf{X}_{\cdot,-j}) < 1$, and $\mathbf{X}$ are binary, then $\forall i, 0 < P(\mathbf{X}_i=x) < 1$, where $\mathbf{X}_i$ is $i^{th}$ row in $X$.
\end{lemma}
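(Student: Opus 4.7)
The plan is to show the two-sided bound $0 < P(\mathbf{X}_i = x) < 1$ for every $x \in \{0,1\}^p$ by a subset-size induction that reduces the full joint probability to a product of conditionals which I already control via Assumption~\ref{asmp:overlap}. Throughout, I need to be careful that every conditional probability I write is well defined, i.e., the conditioning event has positive probability; the induction is organised so that this is automatic.

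First, I would establish the base case for a single coordinate: for any $j$ and any $x_j \in \{0,1\}$, write
\[P(\mathbf{X}_{\cdot,j} = x_j) \;=\; \sum_{x_{-j}\,:\,P(\mathbf{X}_{\cdot,-j}=x_{-j})>0} P(\mathbf{X}_{\cdot,j}=x_j \mid \mathbf{X}_{\cdot,-j}=x_{-j})\,P(\mathbf{X}_{\cdot,-j}=x_{-j}).\]
Each inner conditional lies in $(0,1)$ by Assumption~\ref{asmp:overlap}, and the weights $P(\mathbf{X}_{\cdot,-j}=x_{-j})$ sum to $1$ over the indicated set, so the convex combination also lies in $(0,1)$.

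Next, I would induct on the size of a subset $S \subseteq \{1,\dots,p\}$: for every $x_S \in \{0,1\}^{|S|}$, $P(\mathbf{X}_S = x_S) \in (0,1)$. Given the hypothesis for size $k$, pick $S$ with $|S| = k+1$ and choose any $j \in S$. Factor
\[P(\mathbf{X}_S = x_S) \;=\; P(\mathbf{X}_{\cdot,j} = x_j \mid \mathbf{X}_{S\setminus\{j\}} = x_{S\setminus\{j\}})\,P(\mathbf{X}_{S\setminus\{j\}} = x_{S\setminus\{j\}}).\]
The second factor is in $(0,1)$ by the inductive hypothesis (so in particular the conditioning event has positive probability). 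To control the first factor, further condition on $\mathbf{X}_{-S}$:
\[P(\mathbf{X}_{\cdot,j} = x_j \mid \mathbf{X}_{S\setminus\{j\}} = x_{S\setminus\{j\}}) \;=\; \sum_{x_{-S}} P(\mathbf{X}_{\cdot,j} = x_j \mid \mathbf{X}_{\cdot,-j} = (x_{S\setminus\{j\}}, x_{-S}))\,P(\mathbf{X}_{-S} = x_{-S} \mid \mathbf{X}_{S\setminus\{j\}} = x_{S\setminus\{j\}}),\]
summing only over $x_{-S}$ for which the outer weight is nonzero; each such term then has $P(\mathbf{X}_{\cdot,-j} = (x_{S\setminus\{j\}},x_{-S})) > 0$, so Assumption~\ref{asmp:overlap} gives that the conditional lies in $(0,1)$, and as before the convex combination stays in $(0,1)$. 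Multiplying the two strictly-in-$(0,1)$ factors completes the inductive step.

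Specialising to $S = \{1,\dots,p\}$ yields $P(\mathbf{X}_i = x) > 0$ for every $x \in \{0,1\}^p$. For the upper bound, pick any $x' \neq x$ in $\{0,1\}^p$; by what we just proved $P(\mathbf{X}_i = x') > 0$, hence $P(\mathbf{X}_i = x) \le 1 - P(\mathbf{X}_i = x') < 1$. The only real obstacle is the bookkeeping around well-definedness of conditionals on arbitrary subsets, and that is precisely what the subset-size induction is engineered to handle.
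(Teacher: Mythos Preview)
Your proof is correct, but it takes a genuinely different route from the paper's. The paper argues by \emph{bit-flipping in the hypercube}: it first fixes a single point $x^{0}_{-j}$ with $P(\mathbf{X}_{\cdot,-j}=x^{0}_{-j})>0$, uses Assumption~\ref{asmp:overlap} at coordinate $j$ to get $0<P(\mathbf{X}_i=(x^{0}_{-j},x_j))<1$ for both values of $x_j$, and then iteratively picks another coordinate $k$, factors the joint through $\mathbf{X}_{\cdot,-k}$, and flips $x_k$, showing the neighbouring vertex also has probability in $(0,1)$. Repeating over all coordinates reaches every $x\in\{0,1\}^p$. Your argument instead runs a \emph{subset-size induction on marginals}, proving the stronger intermediate statement that $P(\mathbf{X}_S=x_S)\in(0,1)$ for every $S$ and every $x_S$, via a convex-combination step that averages Assumption~\ref{asmp:overlap} over the unspecified coordinates. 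What the paper's approach buys is simplicity: it never needs the averaging-over-$x_{-S}$ step, only direct chain-rule factorisations. What your approach buys is cleaner bookkeeping about well-definedness of conditionals (which you explicitly flagged as the main obstacle) and a stronger byproduct about all lower-dimensional marginals. Note also that your final paragraph's separate argument for the upper bound is unnecessary: your inductive step already establishes $P(\mathbf{X}_S=x_S)<1$, since both factors lie strictly in $(0,1)$.
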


\begin{proof}
See Appendix \ref{Appendix:A}.
\end{proof}

\hide{
\begin{proof}
Assume treatment variable is $T=\mathbf{X}_{i,j}$ and $\mathbf{X}_{i,-j}$ are covariates. From the propensity score is bounded away from zero and one, and
$\exists (x_1^0, \cdots, x_{j-1}^0, x_{j+1}^0, \cdots, x_{p}^0)$, $P(\mathbf{X}_{i,-j} = (x_1^0, \cdots, x_{j-1}^0, x_{j+1}^0, \cdots, x_{p}^0)) > 0 $, from 
\begin{eqnarray*}
&& P(\mathbf{X}_i = (x_1^0, \cdots, x_{j-1}^0, x_j, x_{j+1}^0, \cdots, x_{p}^0))  \\
&=& P(\mathbf{X}_{i,-j} =(x_1^0, \cdots, x_{j-1}^0, x_{j+1}^0, \cdots, x_{p}^0)) \cdot \\
&&P(\mathbf{X}_{i,j} = x_j| \mathbf{X}_{i,-j} = (x_1^0, \cdots, x_{j-1}^0, x_{j+1}^0, \cdots, x_{p}^0))
\end{eqnarray*}
we have 
\begin{eqnarray}
0 < P(\mathbf{X}_i = (x_1^0, \cdots, x_{j-1}^0, x_j, x_{j+1}^0, \cdots, x_{p}^0) < 1 \label{tmp1} 
\end{eqnarray}
for $x_j = 0$ or $x_j = 1$.

Next is to proof $\forall x$ ($x$ is binary), $$0 < P(\mathbf{X}_i=x) < 1$$ from inequality (\ref{tmp1}). Let $k \neq j$, from
\begin{eqnarray*}
&& P(\mathbf{X}_i = (x_1^0, \cdots, x_{j-1}^0, x_j, x_{j+1}^0, \cdots, x_{p}^0))  \\
&=& P(\mathbf{X}_{i,-k} = (x_1^0, \cdots, x_{k-1}^0, x_{k+1}^0, \cdots, x_{p}^0)) \cdot \\
&&P(\mathbf{X}_{i,k} = x_k^0| X_{i,-k} = (x_1^0, \cdots, x_{k-1}^0, x_{k+1}^0, \cdots, x_{p}^0))
\end{eqnarray*}
and $0< P(\mathbf{X}_i = (x_1^0, \cdots, x_{j-1}^0, x_j, x_{j+1}^0, \cdots, x_{p}^0)) < 1$, we have $$P(\mathbf{X}_{i,-k} = (x_1^0, \cdots, x_{k-1}^0, x_{k+1}^0, \cdots, x_{p}^0))  > 0$$ Furthermore, $\mathbf{X}_{i,k}$ can also be viewed as the treatment variable, so $$0 < P(\mathbf{X}_{i,k} = x_k^0| \mathbf{X}_{i,-k} = (x_1^0, \cdots, x_{k-1}^0, x_{k+1}^0, \cdots, x_{p}^0)) < 1$$, and therefore, 
$$\Scale[0.9]{0 < P(\mathbf{X}_{i,k} = 1 -x_k^0| \mathbf{X}_{i,-k} = (x_1^0, \cdots, x_{k-1}^0, x_{k+1}^0, \cdots, x_{p}^0)) < 1}$$
We have  (without loss of generality, we assume $k < j$), $\forall x_k, x_j$
$$
\Scale[0.85]{0 < P(\mathbf{X}_{i} = (x_1^0, \cdots, x_{k-1}^0, x_k, x_{k+1}^0, \cdots,  x_{j-1}^0, x_j, x_{j+1}^0, \cdots, x_{p}^0) < 1}.
$$
We repeat the above for all other variables one by one, we have $\forall x$, 
$$
0 < P(\mathbf{X}_{i} = x) < 1
$$
\end{proof}
}

\begin{theorem} \label{thm1}
Let $X \in \mathbb{R}^{n \times p}$. Under Lemma \ref{pro:population_overlap},  if number of covariates $p$ is finite, then $\exists W$ such that
\begin{eqnarray} \label{thm1lim1}
\lim_{n \rightarrow \infty} \Scale[0.9]{\sum_{j=1}^p \left\|\frac{ \mathbf{X}_{-j}^T (W \odot \mathbf{X}_{\cdot, j} ) }{ W^T \mathbf{X}_{\cdot, j}}  - \frac{ \mathbf{X}_{-j}^T (W \odot (1 - \mathbf{X}_{\cdot, j}) ) }{ W^T (1 - \mathbf{X}_{\cdot, j})}\right\|_2^2 = 0}
\end{eqnarray}
with probability 1. In particular, a $W$ that satisfies (\ref{thm1lim1}) is $\Scale[1.0]{W_i^* = \frac{1}{P(\mathbf{X}_{i} = x)}}$.
\end{theorem}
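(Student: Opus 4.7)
The plan is to verify that the proposed $W_i^* = 1/P(\mathbf{X}_i = x)$ induces a reweighted distribution under which every $\mathbf{X}_{\cdot,j}$ becomes independent of the remaining covariates $\mathbf{X}_{-j}$. Under this reweighted distribution the two population conditional moments inside each summand coincide, so the finite-sample expression collapses to zero in the limit by a strong law of large numbers. First I would use Lemma~\ref{pro:population_overlap} together with the finiteness of $p$ to conclude that $\mathbf{X}$ ranges over the finite support $\{0,1\}^p$ with $\min_x P(\mathbf{X}=x) > 0$. Hence $W^*$ is uniformly bounded, which will let me apply SLLN term by term.

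Next, I would rewrite each summand as a difference of ratios of i.i.d.\ averages,
\[
\frac{n^{-1}\sum_i W_i^* \mathbf{X}_{i,-j}\mathbf{X}_{i,j}}{n^{-1}\sum_i W_i^* \mathbf{X}_{i,j}} \; - \; \frac{n^{-1}\sum_i W_i^* \mathbf{X}_{i,-j}(1-\mathbf{X}_{i,j})}{n^{-1}\sum_i W_i^* (1-\mathbf{X}_{i,j})},
\]
and exploit the identity $\mathbb{E}[W^* f(\mathbf{X})] = \sum_{x \in \{0,1\}^p} f(x)$, which holds for any bounded $f$ by the inverse-probability weighting. This says the reweighted measure is counting measure on $\{0,1\}^p$ (equivalently, uniform up to the constant $2^p$), so $\mathbf{X}_j \perp \mathbf{X}_{-j}$ under it. Consequently both ratios equal $\mathbb{E}_{\mathrm{unif}}[\mathbf{X}_{-j}]$ at the population level and their difference is zero.

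Finally, the bounded-summand SLLN yields almost sure convergence of each numerator and denominator to its expectation, and the continuous mapping theorem handles the ratio. This last step is the main obstacle, and it is unlocked by Lemma~\ref{pro:population_overlap}: the lemma forces the denominator limit $\mathbb{E}[W^* \mathbf{X}_j] = 2^{p-1}$ to be strictly positive, so the ratio limit is well-defined. Without overlap (or with an infinite support that could make some probabilities arbitrarily small) $W^*$ might fail to be bounded and the ratio limit could be ill-defined. Since $p$ is finite, the finite sum over $j \in \{1,\dots,p\}$ of almost-sure zero limits is itself almost surely zero, yielding the claim.
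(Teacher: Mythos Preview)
Your proposal is correct and follows essentially the same approach as the paper: both arguments take $W_i^* = 1/P(\mathbf{X}_i=x)$, apply the strong law of large numbers to the numerator and denominator of each ratio separately (using that $W^*$ is bounded under Lemma~\ref{pro:population_overlap}), and then observe that the two limiting ratios coincide. The paper makes this explicit by computing the limits as $2^{p-2}/2^{p-1}$ on each side, whereas you phrase the same fact as ``the reweighted measure is uniform on $\{0,1\}^p$, hence $\mathbf{X}_{\cdot,j}\perp\mathbf{X}_{-j}$''; these are the same computation, and your invocation of the continuous mapping theorem for the ratio is a welcome bit of extra care that the paper leaves implicit.
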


\begin{proof}
Since $\norm{\cdot} \geq 0$, Eq. (\ref{thm1lim1}) can be simplified to $\forall j$, $\forall k \neq j$
\hide{\begin{eqnarray} \label{tmp2}
\lim_{n \rightarrow \infty} \Scale[0.9]{\left(  \frac{ \mathbf{X}_{\cdot,k}^T (W \odot \mathbf{X}_{\cdot, j} ) }{ W^T \mathbf{X}_{\cdot, j}}  - \frac{ \mathbf{X}_{\cdot, k}^T (W \odot (1 - \mathbf{X}_{\cdot, j}) ) }{ W^T (1 - \mathbf{X}_{\cdot, j})} \right)  = 0}
\end{eqnarray}
with probability 1. Eq. (\ref{tmp2}) can be rewritten as }
\begin{eqnarray*}
\lim_{n \rightarrow \infty} \Scale[0.9]{\left(  \frac{\sum_{i: \mathbf{X}_{i,k}=1, \mathbf{X}_{i,j}=1} W_i }{\sum_{i: \mathbf{X}_{i,j}=1} W_i }  - \frac{\sum_{i: \mathbf{X}_{i,k}=1, \mathbf{X}_{i,j}=0} W_i }{\sum_{i: \mathbf{X}_{i,j}=0} W_i } \right)  = 0}
\end{eqnarray*}
with probability 1. For $W^*$, from Lemma \ref{pro:population_overlap}, $0 < P(\mathbf{X}_i = x) < 1$, $\forall x$, $\forall i$, $t = 1$ or $0$,
\begin{eqnarray*}
\lim_{n \rightarrow \infty} \Scale[0.9]{\frac{1}{n} \sum_{i: \mathbf{X}_{i,j}=t} W^*_i}\!\!\! &=&\!\!\!\lim_{n \rightarrow \infty}  \Scale[0.9]{\frac{1}{n} \sum_{x: x_j = t} \sum_{i: \mathbf{X}_{i}=x} W^*_i} \\
\!\!\!&=&\!\!\! \lim_{n \rightarrow \infty} \Scale[0.9]{ \sum_{x: x_j = t} \frac{1}{n} \sum_{i: \mathbf{X}_{i}=x} \frac{1}{P(\mathbf{X}_i= x)}}\\
\!\!\!&=&\!\!\! \lim_{n \rightarrow \infty}  \Scale[0.9]{\sum_{x: x_j = t} P(\mathbf{X}_i= x) \cdot \frac{1}{P(\mathbf{X}_i= x)} = 2^{p-1}}
\end{eqnarray*}
with probability 1 from Law of Large Number. Since features are binary,
\begin{eqnarray*}
\!\!\!\!\!&& \lim_{n \rightarrow \infty} \Scale[1.0]{\frac{1}{n} \sum_{i: \mathbf{X}_{i,k}=1, \mathbf{X}_{i,j}=1} W^*_i = 2^{p-2} }\\
\!\!\!\!\!&& \lim_{n \rightarrow \infty} \Scale[1.0]{\frac{1}{n} \sum_{i: \mathbf{X}_{i,j}=0} W^*_i = 2^{p-1}},\\
\!\!\!\!\!&& \lim_{n \rightarrow \infty} \Scale[1.0]{\frac{1}{n} \sum_{i: \mathbf{X}_{i,k}=1, \mathbf{X}_{i,j}=0} W^*_i = 2^{p-2}}
\end{eqnarray*} 
and therefore, we have following equation with probability 1:
\begin{eqnarray*}
\!\!\! \lim_{n \rightarrow \infty} \Scale[0.9]{\left(  \frac{ \mathbf{X}_{\cdot,k}^T (W^* \odot \mathbf{X}_{\cdot, j} ) }{ W^{*T} \mathbf{X}_{\cdot, j}}  - \frac{ \mathbf{X}_{\cdot, k}^T (W^* \odot (1 - \mathbf{X}_{\cdot, j}) ) }{ W^{*T} (1 - \mathbf{X}_{\cdot, j})} \right)}  = \Scale[1.0]{\frac{2^{p-2}}{2^{p-1}} -  \frac{2^{p-2}}{2^{p-1}} = 0.}
\end{eqnarray*}
\end{proof}

The following result shows that if there is sufficient data such that all realizations of $x$ appear in the data, exact balancing weights can be derived. Subsequently, we show that in this case, the components of $\mathbf{X}$ are mutually independent in the reweighted data.  This highlights that overlap is a strong assumption.  In real-world data sets, when $p$ is large the cardinality of $\mathcal{X}$ is large, and so exactly balancing weights are not available, but the results still highlight that balancing weights will reduce the covariance among features.

Then, based on Lemma~\ref{pro:population_overlap} and Theorem~\ref{thm1lim1}, we have following propositions for stable prediction.
\begin{proposition}\label{prop:svindep}
If $0 < \hat{P}(\mathbf{X}_i = x) < 1$ for all $x$, where $\hat{P}(\mathbf{X}_i = x) = \frac{1}{n} \sum_{i} \mathbbm{1}(\mathbf{X}_i = x)$, there exists a solution  $W^*$ satisfies equation (\ref{eq:L_balancing}) equals 0 and variables in $\mathbf{X}$ are independent after balancing by $W^*$.
\end{proposition}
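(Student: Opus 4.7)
My plan is to explicitly construct the balancing weights $W^*$ as the empirical analog of the inverse-probability weights in Theorem~\ref{thm1}, namely $W_i^* = 1/\hat{P}(\mathbf{X}_i = x_i)$ where $x_i$ denotes the realization of $\mathbf{X}_i$. The overlap assumption $0 < \hat{P}(\mathbf{X}_i = x) < 1$ for every $x \in \{0,1\}^p$ guarantees that every configuration is observed at least once in the sample, so $W^*$ is well-defined and strictly positive, and moreover implies that the number of distinct $x$ values that appear is $2^p$.

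The first step is to show the aggregation identity: for any realized $x$,
\[\sum_{i:\mathbf{X}_i=x}W_i^* \;=\; |\{i:\mathbf{X}_i=x\}|\cdot \frac{1}{\hat{P}(\mathbf{X}_i=x)} \;=\; n.\]
From this, for any $j$ and any $t\in\{0,1\}$, $\sum_{i:\mathbf{X}_{i,j}=t}W_i^* = \sum_{x:x_j=t} n = n\cdot 2^{p-1}$, and for any $k\neq j$, $\sum_{i:\mathbf{X}_{i,j}=t,\mathbf{X}_{i,k}=1}W_i^* = n\cdot 2^{p-2}$. Plugging these into the $k$th coordinate of the balancing discrepancy in Eq.~(\ref{eq:L_balancing}) yields $\tfrac{2^{p-2}}{2^{p-1}} - \tfrac{2^{p-2}}{2^{p-1}} = 0$ for every pair $(j,k)$, so the entire summed squared norm is zero. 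This is essentially the same algebra as the proof of Theorem~\ref{thm1}, but executed in the empirical measure rather than in the limit, which is legitimate here because the finite-sample overlap condition replaces the Law of Large Numbers step.

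The second step is to show independence under reweighting. The reweighted empirical distribution assigns to each $x \in \{0,1\}^p$ the mass
\[\hat{P}_{W^*}(\mathbf{X}=x) \;=\; \frac{\sum_{i:\mathbf{X}_i=x}W_i^*}{\sum_i W_i^*} \;=\; \frac{n}{n\cdot 2^p} \;=\; \frac{1}{2^p},\]
i.e.\ the uniform distribution on $\{0,1\}^p$. Under the uniform distribution on a product space of binary coordinates, the coordinates are mutually independent (each is Bernoulli$(1/2)$ and the joint factorizes), which is exactly the claim that the components of $\mathbf{X}$ are independent after balancing by $W^*$.

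I do not expect a serious obstacle here; the heart of the argument is the observation that inverse-propensity weighting flattens the joint empirical distribution to uniform. The one place to be careful is verifying that the finite-sample overlap condition is strong enough to guarantee that every $x$ appears (so the weights are defined and the aggregation identity holds with the factor $n$), as opposed to merely requiring the population overlap used in Theorem~\ref{thm1}; the hypothesis as stated gives exactly this.
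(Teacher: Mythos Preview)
Your proposal is correct and follows essentially the same approach as the paper: define $W_i^*=1/\hat P(\mathbf{X}_i=x_i)$, compute the weighted sums to get $n\cdot 2^{p-1}$ and $n\cdot 2^{p-2}$, conclude the balancing regularizer vanishes, and then observe the reweighted empirical law is uniform on $\{0,1\}^p$ so the coordinates factorize. If anything, your write-up is slightly more self-contained: the paper justifies the vanishing of Eq.~(\ref{eq:L_balancing}) by citing Theorem~\ref{thm1} (an asymptotic statement), whereas you redo the arithmetic directly in the finite sample using the aggregation identity $\sum_{i:\mathbf{X}_i=x}W_i^*=n$, which is the cleaner way to handle the finite-$n$ hypothesis here.
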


\begin{proof}
See Appendix \ref{Appendix:B}.
\end{proof}

\hide{
\begin{proof}
If $0 < \hat{P}(\mathbf{X}_i = x) < 1$,  from Theorem \ref{thm1}, $W^*_i = \frac{1}{\hat{P}(\mathbf{X}_i = x)}$ satisfies equation (\ref{eq:L_balancing}) equals 0. Next is to show all variables in $\mathbf{X}$ are independent after balancing by this $W^*$.
Let $\tilde{\mathbf{X}}$ be an ``extended'' matrix of $\mathbf{X} \in R^{n \times p}$ where each row $\mathbf{X}_i$ is duplicated $W^*_i = \frac{1}{\hat{P}(\mathbf{X}_i = x)}$ times. \footnote{$W^*_i$ does not need to be an integer.} Denote the number of rows in $\tilde{\mathbf{X}}$ to be $\tilde{n}$. When $0 < \hat{P}(\mathbf{X}_i = x) < 1$, 
\begin{eqnarray*}
\Scale[1.0]\sum_{i} W^*_i &=& \tilde{n}  \Scale[1.0]\sum_{x} \frac{1}{\tilde{n}}  \Scale[1.0]\sum_{i: \tilde{X}_{i}=x} W^*_i \\
&=&  \tilde{n} \Scale[1.0]\sum_{x} \hat{P}(\tilde{\mathbf{X}}_i= x) \cdot \frac{1}{\hat{P}(\tilde{\mathbf{X}}_i= x)}= \tilde{n} \cdot 2^{p}
\end{eqnarray*}
Similarly, $\sum_{i: \tilde{\mathbf{X}}_{i,j}=1} W^*_i  =  \tilde{n} \cdot 2^{p-1}$, $\sum_{i: \tilde{\mathbf{X}}_{i,j}=0} W^*_i = \tilde{n} \cdot 2^{p-1}$ and $\sum_{i: \tilde{\mathbf{X}}_{i,j}=x} W^*_i=\tilde{n}$. Thus, for $x = (x_1, \cdots, x_p)$
$$
\hat{P}(\tilde{\mathbf{X}}_{i} = (x_1, \cdots, x_p)) = \frac{\sum_{i: \tilde{\mathbf{X}}_{i,j}=x} W^*_i}{\sum_{i} W^*_i} = \frac{1}{2^p}
$$
and $\forall j$, $\hat{P}(\tilde{\mathbf{X}}_{i,j} = x_j) = \frac{\sum_{i: \tilde{\mathbf{X}}_{i,j}=j} W^*_i}{\sum_{i} W^*_i}=\frac{1}{2}$, so we have 
$$
\hat{P}(\tilde{\mathbf{X}}_{i} = (x_1, \cdots, x_p)) = \hat{P}(\tilde{\mathbf{X}}_{i,1} = x_1) \cdots \hat{P}(\tilde{\mathbf{X}}_{i,p} = x_p) 
$$
,which implies that all variables in $\tilde{\mathbf{X}}$ are independent. All variables in $\tilde{\mathbf{X}}$ are independent is equivalent to all variables in $\mathbf{X}$ are independent after balancing by $W^*$
\end{proof}
}

\begin{proposition}\label{prop:yvindep}
If $0 < \hat{P}(\mathbf{X}^e_i = x) < 1$ for all $x$ in environment $e$, $Y^{e'}$ and $\mathbf{V}^{e'}$ are independent when the joint probability mass function of $(\mathbf{X}^{e'},Y^{e'})$ is given by reweighting the distribution from environment $e$ using weights $W^*$, so that $p^{e'}(x,y)=p^e(y|x) \cdot (1/|\mathcal{X}|)$.
\end{proposition}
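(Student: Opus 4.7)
The plan is to combine Assumption~\ref{asmp:stable}, which states $p^e(y\mid s,v) = P(y\mid s)$, with the explicit form of the reweighted density specified in the statement. Since $\mathbf{X} = (\mathbf{S},\mathbf{V})$ is binary with $p = p_s+p_v$ coordinates, $|\mathcal{X}| = 2^p = 2^{p_s}\cdot 2^{p_v}$, and the reweighted joint becomes
\[
p^{e'}(s,v,y) \;=\; p^e(y\mid s,v)\cdot \frac{1}{|\mathcal{X}|} \;=\; P(y\mid s)\cdot \frac{1}{2^{p_s}\cdot 2^{p_v}}.
\]

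Next, I would marginalize over $s$ to obtain $p^{e'}(v,y) = q(y)/2^{p_v}$, where $q(y) := \sum_s P(y\mid s)/2^{p_s}$, and then compute the two marginals directly: $p^{e'}(v) = \sum_y p^{e'}(v,y) = 1/2^{p_v}$ (using $\sum_y q(y) = 1$), and $p^{e'}(y) = \sum_v p^{e'}(v,y) = q(y)$. The factorization $p^{e'}(v,y) = p^{e'}(v)\cdot p^{e'}(y)$ is then immediate, which is the claimed independence of $Y^{e'}$ and $\mathbf{V}^{e'}$.

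There is no substantive obstacle here; Proposition~\ref{prop:svindep} has already done the heavy lifting by guaranteeing that the balancing reweighting produces the uniform distribution on $\mathcal{X}$. Once this uniform structure is combined with Assumption~\ref{asmp:stable}, which removes $\mathbf{V}$ from the conditional $p^e(y\mid x)$, the factorization of $p^{e'}(v,y)$ into a function of $v$ alone and a function of $y$ alone is immediate upon summing out $s$. The only care required is to track the exponents $p_s,p_v$ and verify that the marginals combine correctly.
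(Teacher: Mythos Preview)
Your argument is correct. It differs in presentation from the paper's proof, though both are short. The paper works at the level of conditional distributions: it first observes that reweighting preserves $Pr(Y\mid\mathbf{X})$, so $Pr(Y^{e'}=y\mid\mathbf{X}^{e'}=x)=P(y\mid s)$ by Assumption~\ref{asmp:stable}; it then invokes Proposition~\ref{prop:svindep} to assert $\mathbf{S}^{e'}\perp\mathbf{V}^{e'}$ and applies the tower property,
\[
Pr(Y^{e'}=y\mid\mathbf{V}^{e'}=v)=E_{\mathbf{S}^{e'}}\bigl[Pr(Y^{e'}=y\mid\mathbf{S}^{e'})\,\big|\,\mathbf{V}^{e'}=v\bigr]=E_{\mathbf{S}^{e'}}\bigl[Pr(Y^{e'}=y\mid\mathbf{S}^{e'})\bigr]=Pr(Y^{e'}=y),
\]
the last equality using the independence of $\mathbf{S}^{e'}$ and $\mathbf{V}^{e'}$. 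You instead exploit the explicit uniform form $p^{e'}(x)=1/|\mathcal{X}|$ stated in the proposition, write the joint $p^{e'}(s,v,y)$ as a product, and marginalize directly to obtain the factorization $p^{e'}(v,y)=p^{e'}(v)\,p^{e'}(y)$. Your route is arguably more elementary: it bypasses the tower-property step and does not need to appeal to Proposition~\ref{prop:svindep} separately, since the uniformity of $\mathbf{X}^{e'}$ (hence the product structure on $(\mathbf{S}^{e'},\mathbf{V}^{e'})$) is already encoded in the hypothesis $p^{e'}(x,y)=p^e(y\mid x)/|\mathcal{X}|$. The paper's version, on the other hand, makes the logical dependence on Proposition~\ref{prop:svindep} explicit and would generalize verbatim if the reweighted marginal on $\mathbf{X}$ were any product measure rather than specifically uniform.
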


\begin{proof}
It is immediate that $Pr(Y^{e'}=y|\mathbf{X}^{e'}=x)=Pr(Y^e=y|\mathbf{X}^e=x)$.  Putting this together with Assumption \ref{asmp:stable}, $Pr(Y^{e'}=y|\mathbf{X}^{e'}=x)=Pr(Y^{e'}=y|\mathbf{S}^{e'}=s)$. From Proposition \ref{prop:svindep}, $(\mathbf{S}^{e'},\mathbf{V}^{e'})$ are mutually independent. Thus, we have
\begin{align}
&Pr(Y^{e'}=y|\mathbf{V}^{e'}=v) \nonumber\\
=&E_{\mathbf{S}^{e'}}[Pr(Y^{e'}=y|\mathbf{S}^{e'},\mathbf{V}^{e'}=v)|\mathbf{V}^{e'}=v] \nonumber \\
=&E_{\mathbf{S}^{e'}}[Pr(Y^{e'}=y|\mathbf{S}^{e'})|\mathbf{V}^{e'}=v] \nonumber \\
=&Pr(Y^{e'}=y). \label{eqn:y|s,v-cond} \nonumber 
\end{align}
Thus, $Y^{e'}$ and $\mathbf{V}^{e'}$ are independent.
\end{proof}

Propositions \ref{prop:svindep} and \ref{prop:yvindep} suggest that the GBR algorithm can make a stable prediction across environments that satisfy Assumption ~\ref{asmp:stable}, since after reweighting, only the stable features are correlated with outcomes, and $p(y|s)$ is unchanged in the reweighted dataset. The objective function of GBR algorithm is to equivalent to maximize log-likelihood of logistic regression, which is known to be consistent. Even though the regularization constraints will cause some bias to the estimated  $p(y|s)$, but the bias reduces with sample size $n$. Thus, with sufficient data, the GBR algorithm should learn $p(y|s)$.
\hide{
$\bullet$ \textit{Property 1}. The variables are independent with each other.

Variable balancing is to make the variables' distribution become similar or even the same between groups with treatment variable $T=1$ and $T=0$, namely making treatment variable be independent with all other variables.
Under theorem \ref{thm1}, the global sample weights could make variable balancing for any one variable when setting it as treatment variable. Therefore, all the observed variable are independent with each other on the training data after sample reweighting with $W$.

$\bullet$ \textit{Property 2}. The spurious correlation between noisy features and response can be reduced or even removed.

After sample reweighting on training data with $W$, only those features which has causal effect on $Y$ could be correlated with $Y$.
Therefore the spurious correlation between noisy features and response variable can be reduce or even removed on the training data.

With above properties, we can have following propositions.
\begin{proposition}
\label{pro:whystable}
Under assumption ~\ref{asmp:stable}, our global balancing algorithm can make a stable prediction across unknown environments.
\end{proposition}

\begin{proof}

\begin{align*}
P(Y|\mathbf{X}) =& P(Y|\mathbf{S}, \mathbf{V}) \\
=& P(\mathbf{S}, \mathbf{V}|Y)P(Y)/P(\mathbf{S},\mathbf{V}) \\
=& P(\mathbf{V}|Y) P(\mathbf{S}|\mathbf{V}, Y) P(Y)/(P(\mathbf{S})P(\mathbf{V})) & \text{(Property 1)}\\
=& P(\mathbf{V}|Y) P(\mathbf{S}|Y) P(Y)/(P(\mathbf{S})P(\mathbf{V})) & P(\mathbf{S}|\mathbf{V}, Y) = P(\mathbf{S}|Y)\\
=& P(Y|\mathbf{S}) P(Y|\mathbf{V})/P(Y) \\
=& P(Y|\mathbf{S}) & \text{(Property 2)}
\end{align*}
where $P(\mathbf{S}|\mathbf{V}, Y)  = P(\mathbf{S}|Y)$ is because \\
$\Scale[0.9]{P(\mathbf{S}|\mathbf{V}, Y) = \frac{P(\mathbf{S}, \mathbf{V}, Y)}{P(\mathbf{V}, Y)} = \frac{P(Y)P(\mathbf{S}|Y)P(\mathbf{V}|\mathbf{S},Y)}{P(Y)P(\mathbf{V}|Y)} = \frac{P(Y)P(\mathbf{S}|Y)P(\mathbf{V})}{P(Y)P(\mathbf{V})} = P(\mathbf{S}|Y)}$.

Therefore, $P(Y|\mathbf{X})$ is stable across environments, since $P(Y|\mathbf{S})$ is stable across environments under Assumption \ref{asmp:stable}.
\end{proof}

$P(\mathbf{S}|\mathbf{V}, Y)  = P(\mathbf{S}|Y)$ is because \\
$P(\mathbf{S}|\mathbf{V}, Y) = \frac{P(\mathbf{S}, \mathbf{V}, Y)}{P(\mathbf{V}, Y)}  = \frac{P(\mathbf{S})P(\mathbf{V}|\mathbf{S})P(Y|\mathbf{S}, \mathbf{V})}{P(Y)P(\mathbf{V}|Y)} = \frac{P(Y)P(\mathbf{S}|Y)P(\mathbf{V}|\mathbf{S},Y)}{P(Y)P(\mathbf{V}|Y)} = \frac{P(Y)P(\mathbf{S}|Y)P(\mathbf{V})}{P(Y)P(\mathbf{V})} = P(\mathbf{S}|Y)$
}

\hide{
The main reason of unstable prediction is the changing of $P(Y|\mathbf{V}, E)$ with environment $E$, that is the spurious correlation between noisy features and response variable. If $P(Y|\mathbf{V}, E_{train})$ is very different from $P(Y|\mathbf{V}, E_{test})$, most traditional predictive models could not make a stable prediction in the agnostic test set because they learn patterns between $\mathbf{V}$ and $Y$ that only fit the training set. However, after global balancing $X$, $P(Y|\mathbf{V}, E)=P(Y)$ is stable with environment $E$. When the balanced $\mathbf{X}$ are used to fit a model to predict $Y$, a well-fitted model should only consist of $\mathbf{S}$ because the independence of $Y$ and $\mathbf{V}$ in the balanced $\mathbf{X}$. This well-fitted model should converge to $g(\cdot)$ using a well defined norm between functions. Moreover, this model is expected to predict stably on different agnostic test sets.
}

Now consider the properties of the DGBR algorithm: 
\begin{enumerate}[wide = 0pt]
\item \textit{Preserves the above properties of the GBR algorithm while making the overlap property easier to satisfy and reducing the variance of balancing weights.} 
The Johnson-Lindenstrauss (JL) lemma~(\cite{johnson1984extensions}) implies that for any $0<\epsilon<1/2$ and $x_1,\cdots,x_n \in \mathbb{R}^p$, there exists a mapping $f:\mathbb{R}^p \rightarrow \mathbb{R}^k$, with $k = O(\epsilon^{-2}\log n)$, such that
\noindent $\forall i,j \ \  (1-\epsilon)\|x_i-x_j\|^2\leq \|f(x_i)-f(x_j)\|^2 \leq (1+\epsilon) \|x_i-x_j\|^2$,
we can transform high-dimensional data into a lower suitable dimensional space while approximately preserving the original distances between points. Our DGBR algorithm reduces the feature dimension, so that the population overlap assumption is more likely to be satisfied and we are less likely to see extreme values of balancing weights, so that better balance can be attained while maintaining low variance of the weights. 
\item  \textit{Enables more accurate estimation of $p(y|s)$}, because with multiple non-linear mapping functions in our DGBR algorithm, it can more easily capture the underlying non-linear relationship between stable features and response variables even with many stable features.
\end{enumerate}

\subsection{Analysis on Upper Bound}
\subsubsection{Notation}
Theorem \ref{thm1} states that if $W_i^*=1/P(\mathbf{X}_i = x)$, the global balancing regularizer in Eq. (\ref{eq:L_balancing}) converges to 0 as number of observations $n$ goes to infinity. Nevertheless, in finite samples, Eq. (\ref{eq:L_balancing}) may not be equal to 0 for any $W$. We define the maximum covariate imbalance as 
\begin{eqnarray}
\!\!\!\!\! \alpha  \!\!\!\!\! &=& \!\!\!\!\! \Scale[0.9]{\max_j \norm{\frac{\sum_{i: X_{i,k}=1, X_{i,j}=1} \hat{W}_i }{\sum_{i: X_{i,j}=1} \hat{W}_i }  - \frac{\sum_{i: X_{i,k}=1, X_{i,j}=0} \hat{W}_i }{\sum_{i: X_{i,j}=0} \hat{W}_i }}_{\infty}}, \\
\!\!\!\!\!  \!\!\!\!\! &=& \!\!\!\!\!  \Scale[0.9]{ \max_j \max_{k \neq j}\left| \frac{\sum_{i: X_{i,k}=1, X_{i,j}=1} \hat{W}_i }{\sum_{i: X_{i,j}=1} \hat{W}_i }  - \frac{\sum_{i: X_{i,k}=1, X_{i,j}=0} \hat{W}_i }{\sum_{i: X_{i,j}=0} \hat{W}_i } \right|}.
\end{eqnarray}

We define $m$ as the number of values in $\mathcal{X}$ that do not appear in $\mathbf{X}$, 
$$
m  = |\mathcal{X}| - |\mathcal{X}_\mathbf{X}| = 2^p - |\mathcal{X}_\mathbf{X}|, 
$$
where $\mathcal{X}_\mathbf{X} = \{x| \mathbf{X}_i = x \text{, for some } i \} $, $|\mathcal{X}| $ and $ |\mathcal{X}_\mathbf{X}|$ are the cardinalities of $\mathcal{X}$ and $\mathcal{X}_\mathbf{X}$. $m$ is random, where the variation of $m$ comes from sampling a finite sample $\mathbf{X}$  from the population distribution $P_X$ on $\mathcal{X}$.

We define $\mathbb{E}[\alpha]$ as the expectation of $\alpha$ over the random sample $\mathbf{X}$.

\subsubsection{Upper Bound of Global Balancing Regularizer}

%\par \noindent \textbf{Upper bound of global balancing regularizer: }
The maximum covariate imbalance $\alpha$ is different under different random finite sample $\mathbf{X}$. The following Lemma states that $\alpha$  is determined by $\mathbf{X}$ through $m$, the number of values in $\mathcal{X}$ that do not appear in $\mathbf{X}$, as well as the number of covariates $p$.

\begin{lemma}  \label{lemma:alpha}
Given $\mathbf{X} \in \mathbb{R}^{n \times p}$, $p \geq 2$, if $m$ different values in $\mathcal{X}$ do not appear in $\mathbf{X}$, then we have
\begin{enumerate}
\item if $m = 0$, then $\alpha = 0$
\item if $0 < m \leq 2^{p-2}$, then $\alpha = \frac{2^{p-2}}{2^{p-1} - m} - \frac{1}{2}$
\item if $2^{p-2} < m < 2^{p-1}$, then $\alpha = 1 - \frac{2^{p-1} - m}{3 \times 2^{p-2} - m}$
\item if $2^{p-1} \leq m \leq 2^{p}-2$ \footnote{Given $p$, $0 \leq m \leq 2^{p} - 1$. When $m = 2^{p} - 1$, either $\frac{\sum_{i: X_{i,k}=1, X_{i,j}=1} \hat{W}_i }{\sum_{i: X_{i,j}=1} \hat{W}_i } $ or $\frac{\sum_{i: X_{i,k}=1, X_{i,j}=0} \hat{W}_i }{\sum_{i: X_{i,j}=0} \hat{W}_i }$ is $\frac{0}{0}$, which is undefined. For simplicity and completeness, we define $\frac{0}{0} = 1$, and therefore $\alpha = 1$.}, then $\alpha = 1$
\end{enumerate}

\end{lemma}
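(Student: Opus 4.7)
The plan is to translate the balancing-weight ratios into a combinatorial count over $\mathcal{X}$ and then optimize over how the $m$ missing values are distributed. Because $\hat{W}_i = 1/\hat{P}(\mathbf{X}_i = x)$, every distinct observed value in $\mathcal{X}_{\mathbf{X}}$ contributes the same total weight; hence for any pair $j \neq k$ and $t \in \{0,1\}$,
\[
\frac{\sum_{i: X_{i,k}=1,\, X_{i,j}=t} \hat{W}_i}{\sum_{i: X_{i,j}=t} \hat{W}_i} \;=\; \frac{N_{j,k}^{t,1}}{N_j^t},
\]
where $N_{j,k}^{t,s} := |\{x \in \mathcal{X}_{\mathbf{X}}: x_j=t,\ x_k=s\}|$ and $N_j^t := |\{x \in \mathcal{X}_{\mathbf{X}}: x_j=t\}|$. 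Writing $a_{t,s}$ for the number of missing values with $x_j=t,\ x_k=s$, we have $N_{j,k}^{t,s} = 2^{p-2} - a_{t,s}$ and $N_j^t = 2^{p-1} - a_{t,1} - a_{t,0}$, subject to $0 \le a_{t,s} \le 2^{p-2}$ and $\sum_{t,s} a_{t,s} = m$. The task reduces to bounding, for each $(j,k)$ pair, the magnitude of
\[
F \;=\; \frac{2^{p-2} - a_{1,1}}{2^{p-1} - a_{1,1} - a_{1,0}} \;-\; \frac{2^{p-2} - a_{0,1}}{2^{p-1} - a_{0,1} - a_{0,0}}.
\]

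The first step is a monotonicity reduction. Direct differentiation yields $\partial F/\partial a_{1,1} = (a_{1,0} - 2^{p-2})/(\cdot)^2 \le 0$ and $\partial F/\partial a_{0,0} = -(2^{p-2} - a_{0,1})/(\cdot)^2 \le 0$, so $F$ is maximized with $a_{1,1} = a_{0,0} = 0$ (the opposite-sign extremum is obtained by swapping the roles of the two ratios). On this face, setting $b := a_{1,0}$ and $c := a_{0,1} = m - b$, $F$ becomes a single-variable function whose derivative changes sign only at $b = m/2$, so $|F|$ attains its maximum at whichever endpoint is admissible given the cell-capacity constraint $a_{t,s} \le 2^{p-2}$.

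The four cases then follow by direct substitution. For $m=0$, each ratio equals $1/2$ and $F=0$. For $0 < m \le 2^{p-2}$, the unconstrained endpoint $b = m$, $c = 0$ is feasible and yields $F = 2^{p-2}/(2^{p-1}-m) - 1/2$. For $2^{p-2} < m < 2^{p-1}$, cell capacity forces $b = 2^{p-2}$ and $c = m - 2^{p-2}$, giving $F = 1 - (2^{p-1}-m)/(3\cdot 2^{p-2} - m)$. For $2^{p-1} \le m \le 2^p - 2$, both off-diagonal cells saturate at $2^{p-2}$ and the residual $m - 2^{p-1}$ missing values spill into $a_{1,1}$ or $a_{0,0}$, driving the first ratio to $1$ and the second to $0$ (invoking $0/0 := 1$ where a denominator vanishes), so $F = 1$. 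In each case, placing the $m$ missing values according to the stated optimum produces an explicit configuration realizing the bound, establishing that the claimed value of $\alpha$ is attained.

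The main obstacle I anticipate is the careful treatment of the regime $2^{p-1} \le m \le 2^p - 2$, where some denominators $N_j^t$ can vanish and the $0/0 := 1$ convention of the footnote has to be invoked coherently; in particular, one must check that the saturating configuration can be chosen so that every contested ratio is handled consistently. The interior monotonicity arguments are routine one-variable calculus, and the bulk of the remaining work is the bookkeeping of the case analysis.
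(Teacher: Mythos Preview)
Your proposal is correct and follows essentially the same route as the paper: both arguments reduce the weighted ratios to combinatorial counts of missing cells, then optimize over how the $m$ missing values are distributed across the four $(x_j,x_k)$-quadrants using monotonicity/derivative arguments. Your parametrization by the four cell counts $a_{t,s}$ is slightly cleaner than the paper's use of the marginal counts $m_2,m_4$ together with $m_1,m_3$, and it lets you handle cases 2--4 uniformly via a single endpoint analysis rather than the paper's subcase split in case 3; you also state achievability explicitly, which the paper leaves implicit for cases 2 and 3.
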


\begin{proof}
See Appendix \ref{Appendix:C}.
\end{proof}

$m$ measures how severe the overlap assumption is violated in the empirical distribution of $\mathbf{X}$. $\alpha$ increases with $m$ for a fixed $p$. If $m$ is fixed, when $0 < m \leq 2^{p-2}$,  $\alpha$ is decreasing in $p$; when $2^{p-2} < m < 2^{p-1}$,  $\alpha$ is increasing in $p$; \footnote{This scenario can never happen, because given $m$ and $p$, if $2^{p-2} < m < 2^{p-1}$, when $p$ increases, it will have $m < 2^{p-2}$. } when $2^{p-1} \leq m \leq 2^{p}-1$, $\alpha$ does not depend on $p$.  A more realistic scenario is that the number of observations $n$ is fixed, if $p$ increases, the overlap assumption is harder to be satisfied in the empirical distribution, so $m$ will also increase, which might result in an increase of $\alpha$. Although $n$ is not explicitly expressed in $\alpha$, $n$ affects $m$ in that $m$ converges to 0 as $n$ goes to infinity, and therefore $\alpha$ may decrease with $n$ through $m$. 

Theorem \ref{thm:mean_alpha} extends Lemma \ref{lemma:alpha} and states $\mathbb{E}[\alpha]$, the expected value of $\alpha$ over the random sample $\mathbf{X}$. We show that  $\mathbb{E}[\alpha]$ also equals to the expected value of $\alpha$ over $m$ that is determined by $\mathbf{X}$.
\begin{theorem}\label{thm:mean_alpha}
Let $\alpha = \Scale[0.85]{\max_j \norm{\frac{\sum_{i: X_{i,k}=1, X_{i,j}=1} \hat{W}_i }{\sum_{i: X_{i,j}=1} \hat{W}_i }  - \frac{\sum_{i: X_{i,k}=1, X_{i,j}=0} \hat{W}_i }{\sum_{i: X_{i,j}=0} \hat{W}_i }}_{\infty}}
$, we have 
\begin{eqnarray}
\Scale[1.0]{\mathbb{E}\left[ \alpha \right] = \frac{1}{{{n + 2^p - 1} \choose {2^p - 1}}}  \sum_{m=0}^{2^{p} - 1} {2^p \choose m} {{n-1} \choose {2^p-1-m}} g(p,m) }
\end{eqnarray}
where $g(p,m)$ is 
\begin{eqnarray}\label{eqn:def_g_p_m}
g(p,m)= \begin{cases}
       \frac{2^{p-2}}{2^{p-1} - m} - \frac{1}{2}, & \text{if}\  0 \leq m \leq 2^{p-2} \\
      1 - \frac{2^{p-1} - m}{3 \times 2^{p-2} - m} & \text{if } 2^{p-2} < m < 2^{p-1} \\
      1 & \text{if } 2^{p-1} \leq m \leq 2^{p}-1
    \end{cases}
\end{eqnarray}
\end{theorem}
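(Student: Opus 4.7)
The plan is to reduce the expectation to a weighted sum over the possible values of $m$ using Lemma~\ref{lemma:alpha}, and then compute the probability mass function of $m$ by a stars-and-bars counting argument.

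First I would invoke Lemma~\ref{lemma:alpha}: for fixed $p$, the maximum covariate imbalance $\alpha$ is a deterministic function of the missing-value count $m$ alone, so $\alpha = g(p,m)$, with $g$ exactly the piecewise function in~(\ref{eqn:def_g_p_m}). Taking expectations over the random sample $\mathbf{X}$ and grouping by the value of $m$ then gives
\begin{eqnarray*}
\mathbb{E}[\alpha] \;=\; \sum_{m=0}^{2^p-1} g(p,m)\,\Pr(M = m),
\end{eqnarray*}
where $M$ denotes the (random) number of points in $\mathcal{X}$ not represented in $\mathbf{X}$. Matching this against the claimed formula then reduces the theorem to verifying
\begin{eqnarray*}
\Pr(M = m) \;=\; \frac{\binom{2^p}{m}\binom{n-1}{2^p-1-m}}{\binom{n+2^p-1}{2^p-1}}.
\end{eqnarray*}

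Next I would establish this combinatorial identity. Viewing the $n$ observations as indistinguishable balls distributed among the $|\mathcal{X}|=2^p$ cells (i.e.\ treating the empirical distribution as a multiset), the total number of sample configurations is $\binom{n+2^p-1}{2^p-1}$ by stars-and-bars. To count those with exactly $m$ empty cells, I would (i) choose which $m$ of the $2^p$ cells are empty, contributing $\binom{2^p}{m}$, and (ii) distribute all $n$ balls among the remaining $2^p-m$ cells with none empty, contributing $\binom{n-1}{(2^p-m)-1}=\binom{n-1}{2^p-m-1}$ via stars-and-bars with strict positivity. Their ratio yields the required probability, and substituting into the previous display gives the theorem. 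A quick sanity check is $\sum_m \Pr(M=m)=1$, which is Vandermonde's convolution $\sum_m \binom{2^p}{m}\binom{n-1}{2^p-1-m}=\binom{n+2^p-1}{2^p-1}$.

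The main obstacle I anticipate is justifying the uniform-over-multisets (Bose--Einstein) sampling model implicit in the probability formula: under i.i.d.\ sampling from an arbitrary $P_X$, the law of $M$ would instead have a multinomial--inclusion--exclusion form depending on $P_X$, so the theorem is tacitly adopting a uniform prior over empirical distributions on $\mathcal{X}$. I would state this modelling convention explicitly at the outset of the proof so that the combinatorial step above is well defined; once it is in place, the reduction to an expectation over $m$ is automatic because by Lemma~\ref{lemma:alpha} $m$ is a sufficient statistic for $\alpha$ given $p$, giving the equivalence claimed in the statement between $\mathbb{E}[\alpha]$ and the expectation of $g(p,m)$ over $m$.
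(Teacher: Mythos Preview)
Your proposal is correct and follows essentially the same route as the paper: reduce $\mathbb{E}[\alpha]$ to a sum over $m$ via Lemma~\ref{lemma:alpha}, then compute $\Pr(M=m)$ by the same stars-and-bars count (total compositions $\binom{n+2^p-1}{2^p-1}$, favorable ones $\binom{2^p}{m}\binom{n-1}{2^p-1-m}$). Your explicit flagging of the implicit Bose--Einstein (uniform-over-multisets) sampling convention is apt---the paper adopts it without comment---and your Vandermonde sanity check is a nice addition not present in the original.
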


\begin{proof}
See Appendix \ref{Appendix:D}.
\end{proof}

$\mathbb{E}[\alpha]$ depends on $n$ and $p$. In particular, $\mathbb{E}[\alpha]$ is decreasing in $n$, but increasing in $p$, see Fig.  \ref{fig:alpha_n_p}. 
%A theoretical argument of $\mathbb{E}[\alpha]$'s monotonicity with $n$ and $p$ is provided in the Appendix.

\hide{
\begin{proof}
The probability that there are $m$ different $x$s, such that $\sum_{i=1}^n \mathbbm{1}(X_i = x) = 0$, is that the number of solutions to 
\begin{eqnarray} \label{boxprob}
y_1 + y_2 + \cdots y_{2^p} = n
\end{eqnarray}
where exactly $m$ different $i$, such that  $y_i=0$ divided by the number of solution to (\ref{boxprob}) without any constraint. The denominator is  ${{n + 2^p - 1} \choose {2^p - 1}}$. The numerator is the number of methods to select $m$ different $i$, such that $y_i = 0$ multiplied by the number of solutions to $y_1 + y_2 + \cdots y_{2^p-m} = n$ without any constraint, which is ${2^p \choose m} {{n-1} \choose {2^p-1-m}} $. Thus the probability that there are $m$ different $x$s, such that $\sum_{i=1}^n \mathbbm{1}(X_i = x) = 0$, is
\begin{eqnarray*}
\frac{1}{{{n + 2^p - 1} \choose {2^p - 1}}} {2^p \choose m} {{n-1} \choose {2^p-1-m}} 
\end{eqnarray*}
With lemma \ref{lemma}, 
$$E\left[ \alpha \right] = \frac{1}{{{n + 2^p - 1} \choose {2^p - 1}}} \left\lbrace \sum_{m=0}^{2^{p} - 1} {2^p \choose m} {{n-1} \choose {2^p-1-m}} \alpha_p(m)  \right\rbrace $$
\end{proof}
\par \noindent \textbf{Comment: }
\begin{enumerate}
\item $\alpha$ depends on $m$ and $p$, but not $n$. $\alpha$ is increasing in $m$. When $0 < m \leq 2^{p-2}$,  $\alpha$ is decreasing in $p$; when $2^{p-2} < m < 2^{p-1}$,  $\alpha$ is increasing in $p$; \footnote{This scenario can never happen, because given $m$ and $p$, if $2^{p-2} < m < 2^{p-1}$, when $p$ increases, it will have $m < 2^{p-2}$. } when $2^{p-1} \leq m \leq 2^{p}-1$, $\alpha$ does not depend on $p$. 
\item $\mathbb{E}[\alpha]$ depends on $n$ and $p$. From the simulation, $\mathbb{E}[\alpha]$ is decreasing in $n$, but increasing in $p$.
\end{enumerate}

\begin{figure}[H]
\centering
\includegraphics[width=2.4in]{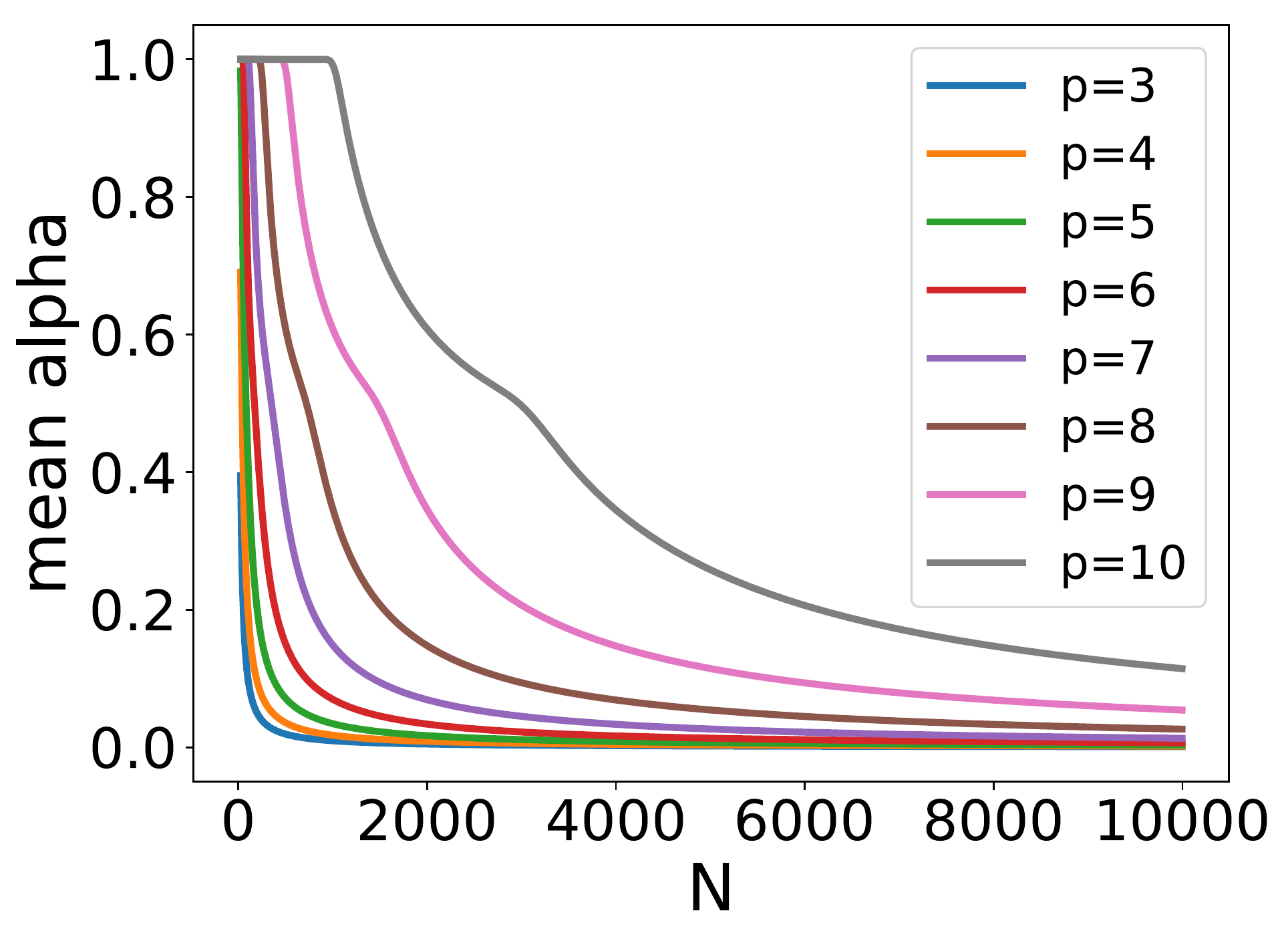}
\caption{$\mathbb{E}[\alpha]$ with different $N$ and $p$}
\label{fig:alpha_n_p}
\end{figure}
}

\begin{figure}
\centering
\includegraphics[width=2.4in]{figures/p_listrange_3__11_}
\caption{$\mathbb{E}[\alpha]$ with different $N$ and $p$}
\label{fig:alpha_n_p}
\end{figure}

\hide{\inred{We also give the upper bound between empirical risk of approximate balancing and expected risk of exact balancing in Appendix.}}

\subsubsection{Upper Bound of the Risk in Approximate Balancing}
%\par \noindent \textbf{Upper bound of the risk in approximate balancing: }

It is possible that our DGBR algorithm may not perfectly balance all covariates in $\mathbf{X}$. That is, the minimum value $zero$ may not be attained by any $W$ in Eq. (\ref{eq:L_balancing}). Denote $W^*$ as an  \textbf{approximate} solution to the global balancing problem if $W^*$ is the optimal solution to our DGBR algorithm and Eq. (\ref{eq:L_balancing}) evaluated at $W^*$ is not $zero$. Define the \textbf{imbalance} at $x$ in $\mathbf{X}$ balanced by $W^*$ as the difference between the joint weighted probability and the product of weighted marginal probabilities, that is, 
% Denote $\tilde{\mathbf{X}}$ to be the matrix of $\mathbf{X}$ balanced by $W^*$. \footnote{$\tilde{\mathbf{X}}$ is an ``extended'' matrix of $\mathbf{X} \in \mathbb{R}^{n \times p}$ that each row in $\mathbf{X}$ is duplicated for $W^*_i$ times, where  $W^*_i$ is not necessarily an integer. The number of rows in $\tilde{\mathbf{X}}$ can be a decimal.} Define the \textbf{imbalance} when $\tilde{\mathbf{X}} = x$ as
\begin{eqnarray}\label{eqn:epsilon_x-def}
\Scale[1.0]{\epsilon_x = \tilde{p}_{x} - \prod_j^p \tilde{p}_{x_j} = \tilde{p}_{x} - p_x}
\end{eqnarray}
where $\tilde{p}_x = \frac{1}{\tilde{n}} \sum_{i=1}^n W_i^* \mathbbm{1}(\mathbf{X}_i = x)$, $\tilde{n} = \sum_{i=1}^n W_i^*$,  $\tilde{p}_{x_j}  = \frac{1}{\tilde{n}}\sum_{i=1}^n W_i^* \mathbbm{1}(\mathbf{X}_{ij} = x_j)$ and $p_x = \prod_j^p \tilde{p}_{x_j}$. If Eq. (\ref{eq:L_balancing}) evaluated at $W^*$ is not $zero$, $\epsilon_x$ must be nonzero at some $x$ and $\epsilon_x$ measures how far covariates in weighted $X$ are from independence.
%where $\tilde{p}_x = P(\tilde{\mathbf{X}}_i = x) = \frac{1}{\tilde{n}} \sum_{i=1}^n W_i^* \mathbbm{1}(\mathbf{X}_i = x)$,  $\tilde{p}_{x_j} = P(\tilde{\mathbf{X}}_{ij} = x_j)  = \frac{1}{\tilde{n}}\sum_{i=1}^n W_i^* \mathbbm{1}(\mathbf{X}_{ij} = x_j)$, $\tilde{n} = \sum_{i=1}^n W_i^*$, and $p_x = \prod_j^p \tilde{p}_{x_j}$. If Eq. (\ref{eq:L_balancing}) evaluated at $W^*$ is not 0, covariates in $\tilde{X}$ may not be independent. $\epsilon_x$ is the difference between the joint probability and the product of marginal probability in $\tilde{X}$, measuring how far covariates in $\tilde{X}$ are from independence.

In this subsection, we would like to evaluate our DGBR algorithm if optimal $W^*$ in our DGBR algorithm is an approximate solution. That is, we would like to upper bound the expected risk of the optimal $\hat f$ learned from our DGBR algorithm, where $f(\cdot)$ is a function to predict the response variable $Y_i$ from covariates $\mathbf{X}_i$. The expected risk between $f(\mathbf{X})$ and $Y$ is defined as $L_{P}(f) = E_{P}(l(f(\mathbf{X}_i), Y_i))$. In $L_{P}(f)$, the loss function $l(f(\mathbf{X}_i), Y_i) = \log(1+\exp((1-2Y_i)\cdot (\phi(\mathbf{X}_i)\beta)))$ is the same as the objective function in our DGBR algorithm; the probability mass function $P(\mathbf{X}_i, Y_i) = P(\mathbf{X}_i) P(Y_i | \mathbf{X}_i)$ has $P(Y_i = y|\mathbf{X}_i = x) = P(Y_i = y|\mathbf{S}_i=s, \mathbf{V}_i=v) =P(y|s)$ to be the same as that in Assumption \ref{asmp:stable} and  $P(\mathbf{X}_i = x) = p_x$, where $p_x$ is defined in Eq. (\ref{eqn:epsilon_x-def}). In Empirical Risk Minimization (ERM), the population probability mass function is assumed to be fixed (but unknown). However, $p_x$ is not fixed because it is determined by $W^*$ and $W^*$ is learned from our DGBR algorithm. To avoid the identification problem, we set $W$ as fixed after we sequentially update $\beta$, $W$ and $\theta$ for some steps; we denote this $W$ as $W^*$ and only update $\beta$ and $\theta$ afterwards. This $W^*$ is used to calculate $p_x$. We analyze the empirical risk of the following fixed weight DGBR (FWDGBR) algorithm
\begin{eqnarray*}
&\min& \Scale[0.9]{\frac{1}{\sum_{i=1}^n W^*_i } \sum_{i=1}^{n}W^*_i\cdot \log(1+\exp((1-2Y_i)\cdot (\phi(\mathbf{X}_i)\beta)))},\\
&s.t.& \Scale[0.9]{\|(W^*\cdot \bm{1})\odot(X-\hat{X})\|_F^2\leq \lambda_2},\\
&\quad&  \|\beta\|_2^2 \leq \lambda_4,  \|\beta\|_1 \leq \lambda_5,\\
&\quad& \Scale[0.9]{\sum_{k=1}^{K}(\|A^{(k)}\|_F^2+\|\hat{A}^{(k)}\|_F^2) \leq \lambda_7, \|b^{(k)}\|_2 \leq M^{(k)},} \\
&\quad& \text{ for } k = 1, 2, \cdots, K.
\end{eqnarray*}
The empirical risk is defined as $\hat{L}(f) = \frac{1}{\tilde{n}} \sum_{i=1}^n W^*_i l(f(\mathbf{X}_i), Y_i)$ and $\tilde{n} = \sum_{i=1}^n W^*_i$. In the FWDGBR algorithm, it does not have the constraints related to $W$ and has an additional bias constraint $\|b^{(k)}\|_2 \leq M^{(k)}$. The additional bias constraint guarantees that $f(\cdot)$ is bounded for any $f(\cdot)$ that satisfies the constraints.  All the other constraints in the FWDGBR algorithm are the same as those in the DGBR algorithm.

In ERM, if empirical average converges to the expectation (equivalent to exact balancing), the difference between the expectation of each function and the empirical average of the function ($|L_{P}(f) - \hat L(f)|$) can be bounded in terms of the Rademacher complexity of the model class and an error term depending on the confidence parameter and sample size. Note that the Rademacher complexity of the FWDGBR algorithm is determined by its constraints. In other words, the Rademacher complexity decreases with $\lambda_2$, $\lambda_4$, $\lambda_5$ and $\lambda_7$.  Furthermore, if empirical average converges to the expectation, $|L_{P}(\hat f) - \hat L(f^*)|$ can also be bounded in terms of the Rademacher complexity of the model class and an error term depending on the confidence parameter and sample size, where $\hat f$ minimizes $\hat L(\cdot)$, $f^*$ minimizes $L_{P}(\cdot)$ and both $\hat f$ and $f^*$ are in the model class. Note that $f^*$ also needs to satisfy the constraints in the FWDGBR algorithm.

However, in approximate balancing, the empirical average $\hat L(f)$ for some $f$ does not converge to expectation $L_{P}(f)$ because the joint distribution does not equal to the product of the marginals. In this case, $|L_{P}(\hat f) - \hat L(f^*)|$ can still be bounded, but with an extra term measuring covariates' imbalance, that is, $\epsilon_x$.

\hide{\begin{eqnarray*}
&\min& \Scale[0.9]{\frac{1}{\sum_{i=1}^n W^*_i } \sum_{i=1}^{n}W^*_i\cdot \log(1+\exp((1-2Y_i)\cdot (\phi(\mathbf{X}_i)\beta)))},\\
&s.t.& \|\beta\|_2^2 \leq \lambda_4,  \|\beta\|_1 \leq \lambda_5,\\
&\quad& \Scale[1.0]{\|A^{(k)}\|_F^2} \leq \lambda_7, \|b^{(k)}\|_2 \leq M^{(k)} \text{ for } k = 1, 2, \cdots, K,
\end{eqnarray*}

The empirical risk between $f(\mathbf{X})$ and $Y$ is the objective function Eq. (\ref{eq:deep_global_balancing_algorithm}) in our DGBR algorithm divided by the sum of $W$, where $W$ is set as $W^*$. Denote the empirical risk as  $\hat{L}(f) = \frac{1}{\tilde{n}} \sum_{i=1}^n W^*_i l(f(\mathbf{X}_i), Y_i)$, where $\tilde{n} = \sum_{i=1}^n W^*_i$.

Assume in our DGBR algorithm, $W$ is fixed so that the imbalance can be calculated. We find the optimal $f(\mathbf{X})$ to predict the response variable $Y$ by minimizing the loss function (\ref{eq:deep_global_balancing_algorithm}) subject to constraints relevant to $f(\mathbf{X})$ in our DGBR algorithms, which are $\|\beta\|_2^2 \leq \lambda_4$, $ \|\beta\|_1 \leq \lambda_5$, and $(\|A^{(k)}\|_F^2+\|\hat{A}^{(k)}\|_F^2) \leq \lambda_7$, and an additional bias constraint $\|b^{(k)}\|_2 \leq M^{(k)}$. Denote the loss function (\ref{eq:deep_global_balancing_algorithm})  as $l(f(\mathbf{X}), Y)$, which is the cross-entropy loss. Let the expected risk between $f(\mathbf{X})$ and $Y$ as $L_{P}(f) = E_{p}(l(f(\mathbf{X}), Y))$, whose probability measure of $\mathbf{X}$ is $p$, implying that covariates in $\mathbf{X}$ are independent. Define the empirical risk between $f(\mathbf{X})$ and $Y$ as $\hat{L}(f) = \frac{1}{\tilde{n}} \sum_{i=1}^n W^*_i l(f(\mathbf{X}_i), Y_i)$, where $\tilde{n} = \sum_{i=1}^n W^*_i$. Note that when $W^*$ is an approximate solution, covariates in $\mathbf{X}$ balanced by $W^*$ may be dependent. We have the following theorem providing an upper bound for the expected risk of the minimizer of  $\hat{L}(f)$.}

\begin{theorem}
\label{thm:upper_bound_risk}
Let $B_k = \sqrt{\lambda_7 + (M^{(k)})^2}$, $l_k$ be the size of $[\phi(\mathbf{X}_i)^{(k)}; 1]$ for $k=1, 2, \cdots, K$ and $l_0 = p+1$. With probability at least $\geq 1-\delta$, 
\begin{eqnarray}
\nonumber L_{P}(\hat{f})   \!\!\!\!\!\!&\leq&\!\!\!\!\!\! \Scale[0.9]{L_{P}(f^*) + 2^{K+3} \sqrt{\frac{2log(2p)}{n}} \min( \sqrt{\lambda_4 l_K},\lambda_5)\prod_{k=1}^K B_k (l_{k-1})^{1/2} }\\
&&\Scale[0.9]{+ 3 \sqrt{\frac{log(2/\delta)}{2n}} + 2 \max_{x, f} \mathbb{E}[l(f(x), y)|x] \sum_{x} |\epsilon_x|}, \label{ineq:low-bound} 
\end{eqnarray}
where $\hat{f} = \arg \min_{f} \hat L(f)$ and $f^* = \arg \min_{f} L_{P}(f)$.
\end{theorem}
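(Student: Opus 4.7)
The plan is to execute a standard ERM-style generalization argument, modified to absorb the gap between the target distribution $P$ (under which $X$ has the product-of-marginals distribution $p_x$) and the $W^*$-weighted empirical distribution that the FWDGBR objective actually minimizes against. The key bridge is an auxiliary population risk $L_{\tilde P}(f) = \mathbb{E}_{\tilde P}[l(f(X),Y)]$ under the joint law $\tilde P(x,y) = \tilde p_x P(y\mid x)$, for which $\mathbb{E}[\hat L(f)] = L_{\tilde P}(f)$ because the normalized weights reproduce the weighted marginals. Writing
\begin{align*}
L_P(\hat f) - L_P(f^*) &\leq \bigl[L_P(\hat f) - L_{\tilde P}(\hat f)\bigr] + \bigl[L_{\tilde P}(\hat f) - \hat L(\hat f)\bigr] + \bigl[\hat L(\hat f) - \hat L(f^*)\bigr] \\
&\quad + \bigl[\hat L(f^*) - L_{\tilde P}(f^*)\bigr] + \bigl[L_{\tilde P}(f^*) - L_P(f^*)\bigr],
\end{align*}
dropping the ERM term $\hat L(\hat f) - \hat L(f^*) \leq 0$, and taking suprema, I reduce the excess risk to $2\sup_f |L_P(f) - L_{\tilde P}(f)| + 2\sup_f |L_{\tilde P}(f) - \hat L(f)|$.

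For the first supremum (the imbalance gap), I compute directly
\begin{equation*}
L_P(f) - L_{\tilde P}(f) = \sum_x (p_x - \tilde p_x)\,\mathbb{E}[l(f(x),Y)\mid X=x] = -\sum_x \epsilon_x\,\mathbb{E}[l(f(x),Y)\mid X=x],
\end{equation*}
so by the triangle inequality $\sup_f |L_P(f) - L_{\tilde P}(f)| \leq \max_{x,f}\mathbb{E}[l(f(x),Y)\mid X=x]\sum_x |\epsilon_x|$. Multiplying by $2$ gives exactly the last term of (\ref{ineq:low-bound}), and crucially this is the only place where the $\epsilon_x$ appear.

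For the second supremum (the generalization gap), I apply the standard symmetrization-plus-McDiarmid route. Under the bias constraints $\|b^{(k)}\|_2 \leq M^{(k)}$ and the weight constraints, sigmoid activations keep every layer's output in $[0,1]$, so $\phi(X)\beta$ is uniformly bounded and so is $l$; this yields finite bounded-differences constants and hence a confidence term $\sqrt{\log(2/\delta)/(2n)}$, which, after combining with the factor $2$ from the decomposition and from symmetrization, accounts for the $3\sqrt{\log(2/\delta)/(2n)}$ in the statement. The remaining piece is the empirical Rademacher complexity $\mathfrak R_n(l\circ \mathcal F)$: by Lipschitzness of the logistic loss I peel the loss (contraction), and then I peel the encoder layer by layer. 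Since the sigmoid is $1$-Lipschitz and the composed affine map at layer $k$ has $\|[A^{(k)},b^{(k)}]\|_F \leq B_k$, each peeling step contributes a factor $2B_k\sqrt{l_{k-1}}$, giving after $K$ layers a factor $2^K \prod_{k=1}^K B_k\sqrt{l_{k-1}}$. The last linear head $\beta$ is bounded in two ways: the $\ell_2$ ball of radius $\sqrt{\lambda_4}$ gives $\sqrt{\lambda_4 l_K/n}$, while the $\ell_1$ ball of radius $\lambda_5$ combined with Massart's lemma applied to the bounded sigmoid coordinates gives $\lambda_5\sqrt{2\log(2p)/n}$; since both constraints hold simultaneously, the minimum applies. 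Absorbing the remaining factors of $2$ from symmetrization and the Step~1 decomposition yields the $2^{K+3}\sqrt{2\log(2p)/n}\cdot \min(\sqrt{\lambda_4 l_K},\lambda_5)\prod_k B_k\sqrt{l_{k-1}}$ term.

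The main obstacle is the depth-$K$ Rademacher bound: the exponential factor $2^{K+3}$ forces the peeling induction to be executed very carefully, and keeping the $\min(\sqrt{\lambda_4 l_K},\lambda_5)$ structure requires handling the two $\beta$-norm constraints in parallel through the chain of inequalities (not committing to one before the last step) and ensuring the $\sqrt{\log(2p)}$ from Massart is pulled out cleanly. Verifying that $l$ is uniformly bounded on the constrained class — needed both for McDiarmid and for the contraction step — is what motivates the extra bias constraint $\|b^{(k)}\|_2 \leq M^{(k)}$ in the FWDGBR formulation. Once these two points are secured, the imbalance identity from Step~2 plugs in as an additive correction and the bound follows.
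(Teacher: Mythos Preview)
Your proposal is correct and follows essentially the same route as the paper: both introduce the auxiliary population risk $L_{\tilde P}$ under the $W^*$-weighted marginal $\tilde p_x$, bound the $L_P$--$L_{\tilde P}$ gap by the elementary identity $\sum_x \epsilon_x\,\mathbb{E}[l(f(x),y)\mid x]$, and control the generalization gap by a Rademacher-complexity bound for the constrained encoder-plus-logistic class. The only cosmetic differences are that the paper chains three one-sided inequalities through the intermediate minimizer $\tilde f^* = \arg\min_f L_{\tilde P}(f)$ and invokes off-the-shelf results (Lemma~3 of \cite{wan2013regularization} for the $4R_n(\mathcal{A}) + 3\sqrt{\log(2/\delta)/(2n)}$ excess-risk bound, Theorem~3.1 of \cite{zhai2018adaptive} for the layer-peeling Rademacher bound), whereas you telescope in five terms through $\hat L$ directly and sketch the peeling argument yourself; be aware that your symmetric $2\sup_f|L_{\tilde P}(f)-\hat L(f)|$ route will naturally produce a slightly larger constant in front of $\sqrt{\log(2/\delta)/(2n)}$ than the paper's asymmetric chain, so to reproduce the exact $3$ and $2^{K+3}$ you should mirror the paper's one-sided decomposition rather than double the uniform deviation.
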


\hide{
\begin{theorem}
\label{thm:upper_bound_risk}
If $f \in \mathcal{F}$, where the model class $ \mathcal{F}$ is finite and $l(f(x), y) \in [a,b]$ \footnote{truncate the cross entropy loss to make the loss function $l(f(x), y)$ bounded.}, with probability $\geq 1-\delta$, 
\begin{eqnarray}
\nonumber L_{P}(\hat{f})   \!\!\!\!\!\!&\leq&\!\!\!\!\!\! \Scale[1.0]{L_{P}(f^*) + 2(b-a) \sqrt{\frac{log2|\mathcal{F}|+log(1/\delta)}{2n}} }\\
&&+ 2 \max_{x, f} E[l(f(x), y)|x] \sum_{x} |\epsilon_x|, \label{ineq:lowerbound}
\end{eqnarray}
where $\hat{f} = \arg \min_{f} \hat{R}(f)$ and $f^* = \arg \min_{f} R_p(f)$.
\end{theorem}
}
\begin{proof}
See Appendix \ref{Appendix:E}.
\end{proof}

\hide{Note that $\hat{f}$ is the optimal solution in empirical risk minimization (ERM) while the empirical distribution is approximately balanced.  Assuming all parameters in $f(\mathbf{X})$ have norm constriants, Theorem \ref{thm:upper_bound_risk} provides an upper bound for $L_{P}(\hat{f})$.}

Theorem \ref{thm:upper_bound_risk} states that in approximate balancing, the upper bound for $L_{P}(\hat f)$ consists of four components: 1. The minimum expected risk $L_{P}(f^*)$; 2. The Rademacher complexity, $2^{K+3} \sqrt{\frac{2log(2p)}{n}} \min( \sqrt{\lambda_4 l_K},\lambda_5)\prod_{k=1}^K B_k (l_{k-1})^{1/2}$, which depends on number of layers, number of hidden units and the constraints in the FWDGBR algorithm; 3. An error term depending on the confidence parameter $\delta$ and sample size $n$, that is, $3 \sqrt{\frac{log(2/\delta)}{2n}}$;   4. The risk from covariates' imbalance in approximate balancing, $2 \max_{x, f}  \mathbb{E}[l(f(x), y)|x] \sum_{x} |\epsilon_x|$.  Note that $ \max_{x, f} E[l(f(x), y)|x]$ is bounded because $x$ are $y$  are binary and all weights in $f(\cdot)$ are bounded.

The term related to Rademacher complexity in Theorem \ref{thm:upper_bound_risk}, $RC = 2^{K+3} \sqrt{\frac{2log(2p)}{n}}\cdot \min( \sqrt{\lambda_4 l_K},\lambda_5)\prod_{k=1}^K B_k (l_{k-1})^{1/2}$, is an upper bound of the size of the model class in the FWDGBR algorithm. This is derived from the upper bound of the size of the model class in the reduced fixed weight DGBR (RFWDGBR) algorithm
\begin{eqnarray*}
&\min& \Scale[0.9]{\frac{1}{\sum_{i=1}^n W^*_i } \sum_{i=1}^{n}W^*_i\cdot \log(1+\exp((1-2Y_i)\cdot (\phi(\mathbf{X}_i)\beta)))},\\
&s.t.& \|\beta\|_2^2 \leq \lambda_4,  \|\beta\|_1 \leq \lambda_5\\
&\quad& \Scale[1.0]{\|A^{(k)}\|_F^2} \leq \lambda_7, \text{ for } k = 1, 2, \cdots, K.
\end{eqnarray*}
This algorithm does not have the auto-encoder and -decoder constraint $\|(W^*\cdot \bm{1})\odot(X-\hat{X})\|_F^2\leq \lambda_2$. The weight constraints of the auto-encoder and -decoder are relaxed. The model class described by the RFWDGBR algorithm is larger than the model class of the fixed weight DGBR algorithm. Thus, the upper bound of the Rademacher complexity of the RFWDGBR algorithm is also an upper bound of the Rademacher complexity of the FWDGBR algorithm.

The derivation of the term RC is closely related to deriving the Rademacher complexity with dropouts in Neural Networks in (\cite{wan2013regularization}) and (\cite{zhai2018adaptive}). Compare with (\cite{wan2013regularization}) and (\cite{zhai2018adaptive}), the RFWDGBR algorithm does not have dropouts, but it has both $\ell_1$ and $\ell_2$ constraints. The model class with both $\ell_1$ and $\ell_2$ constraints is no larger than the minimum of the model class with either $\ell_1$ or $\ell_2$ constraint, which explains the component, $ \min( \sqrt{\lambda_4 l_K},\lambda_5)$, in the term RC.

\hide{Note that the Rademacher complexity describes the model complexity of $f(\mathbf{X})$. Only the weight constraints in the auto-encoder and in the logistic regression determine the Rademacher complexity. This complexity is closely related to deriving the Rademacher complexity with dropouts in Neural Networks in (\cite{wan2013regularization}) and (\cite{zhai2018adaptive}). Our DGBR does not have dropouts, so the retain rate is 1. Furthermore, weights in the logistic regression both have $l1$ and $l2$ constraints, so the feasible set from both constraints is a subset of the feasible set from each individual constraint. Thus, the Rademacher complexity is upper bounded by the minimum of the complexities from $l1$ and $l2$ constraints respectively. }

Moreover, the upper bound for  $L_{P}(\hat{f})$ depends on how close the approximate balancing is to the exact balancing, measured by $\epsilon_x$. If the approximate balancing gets very close to the exact balancing, the term $2 \max_{x, f}  \mathbb{E}[l(f(x), y)|x] \sum_{x} |\epsilon_x|$ approaches 0. The upper bound for $L_{P}(\hat{f})$ will mainly depend on three other terms in the right-hand side of Inequality (\ref{ineq:low-bound}).  

From this upper bound of $L_{P}(\hat{f})$, we have two main conclusions: 1. If the feasible set of the auto-encoder and the weights in the logistic regression is smaller, implying simpler models, the upper bound for $L_{P}(\hat{f})$ is smaller; 2. If covariates' imbalance is smaller, the upper bound for $L_{P}(\hat{f})$ is smaller.

\section{Optimization and Discussion}

\subsection{Optimization}

To optimize the aforementioned DGBR model in Eq.~(\ref{eq:deep_global_balancing_algorithm}), we need to minimize following $\mathcal{L}_{mix}$ as a function of parameter $W$, $\beta$, and $\theta = \{\mathbf{A}^{(k)}, \hat{\mathbf{A}}^{(k)}, b^{(k)}, \hat{b}^{(k)}\}$.
\begin{eqnarray}
\label{eq:final_obj}
\mathcal{L}_{mix}\!\!\!\!\!\! &=&\!\!\!\!\!\! \mathcal{L}_{Pre} + \lambda_1\mathcal{L}_{Bal} + \lambda_2\mathcal{L}_{AE} + \mathcal{L}_{Reg},\\
\nonumber &s.t.&\!\!\! W\succeq 0,
\end{eqnarray}
where
\begin{eqnarray}
\label{eq:part_obj}
\!\!\!\!\!\!\mathcal{L}_{Pre}\!\!\!\! &=&\!\!\!\!  \Scale[0.9]{\sum_{i=1}^{n}W_i\cdot \log(1+\exp((1-2Y_i)\cdot (\phi(\mathbf{X}_i)\beta)))},\\
\!\!\!\!\!\!\mathcal{L}_{Bal} \!\!\!\! &=&\!\!\!\!  \Scale[0.9]{\sum_{j=1}^{p} \big\|\frac{\phi(\mathbf{X}_{\cdot,-j})^{T}\cdot(W\odot \mathbf{X}_{\cdot,j})}{W^{T}\cdot \mathbf{X}_{\cdot,j}}-\frac{\phi(\mathbf{X}_{\cdot,-j})^{T}\cdot(W\odot (1-\mathbf{X}_{\cdot,j}))}{W^{T}\cdot (1-\mathbf{X}_{\cdot,j})}\big\|_2^2},\\
\!\!\!\!\!\!\mathcal{L}_{AE}\!\!\!\!  &=&\!\!\!\! \Scale[0.9]{\|(W\cdot \bm{1})\odot(X-\hat{X})\|_F^2},\\
\!\!\!\!\!\! \mathcal{L}_{Reg}\!\!\!\! &=&\!\!\!\! \Scale[0.9]{\lambda_3\|W\|_2^2 + \lambda_4 \|\beta\|_2^2 + \lambda_5 \|\beta\|_1 + \lambda_6 \Scale[1.0]{(\sum_{i=1}^{n}W_i-1)^{2}}}\\
\nonumber &+&\!\!\!\! \lambda_7 \Scale[0.9]{\sum_{k=1}^{K}(\|\mathbf{A}^{(k)}\|_F^2+\|\hat{\mathbf{A}}^{(k)}\|_F^2)}.
\end{eqnarray}

Here, we propose an iterative method to minimize the above objective function in Eq. (\ref{eq:final_obj}).
Starting from some random initialization on parameters $W$, $\beta$ and $\theta$, we update each of them alternatively with the other two parameters as fixed at each iteration until convergence.
These steps are described below:

\par \noindent \textbf{Update $\beta$}: When fixing $W$ and $\theta$, the problem (\ref{eq:final_obj}) is equivalent to optimize following objective function:
\begin{eqnarray}
\label{eq:part_obj_beta}
\mathcal{L}_{mix}(\beta)\!\!\!\! &=&\!\!\!\! \Scale[0.9]{\sum_{i=1}^{n}W_i\cdot \log(1+\exp((1-2Y_i)\cdot (\phi(\mathbf{X}_i)\beta)))}\\
\nonumber &+&\!\!\!\! \lambda_4 \|\beta\|_2^2 + \lambda_5 \|\beta\|_1,
\end{eqnarray}
which is a standard $\ell_1$ norm regularized least squares problem and can be easily solved by any LASSO (or elastic net) solver.

\par \noindent \textbf{Update $W$}: By fixing $\beta$ and $\theta$, the key step for updating $W$ is to calculate the partial derivative of $\frac{\partial \mathcal{L}_{mix}}{\partial W}$. The detailed mathematical form of the partial derivative is shown as following:
\begin{eqnarray}
\label{eq:part_obj_W}
\frac{\partial \mathcal{L}_{mix}}{\partial W} = \frac{\partial \mathcal{L}_{Pre}}{\partial W} + \frac{\partial \mathcal{L}_{Bal}}{\partial W} + \frac{\partial \mathcal{L}_{AE}}{\partial W} + \frac{\partial \mathcal{L}_{Reg}}{\partial W},
\end{eqnarray}
where
\begin{eqnarray}
\frac{\partial \mathcal{L}_{Pre}}{\partial W} &=& \log(1+\exp((1-2Y)\cdot(\phi(\mathbf{X})\cdot \beta))),\\
\frac{\partial \mathcal{L}_{Bal}}{\partial W} &=& \Scale[1.0]{2\sum_{j=1}^{p}\mathcal{L}_{Bal_j}\cdot \frac{\partial \mathcal{L}_{Bal_j}}{\partial W}},\\
\frac{\partial \mathcal{L}_{AE}}{\partial W} &=& 2((W\cdot \bm{1})\odot(\mathbf{X}-\hat{\mathbf{X}}))\odot(\mathbf{X}-\hat{\mathbf{X}})\cdot\bm{1}^{T},\\
\frac{\partial \mathcal{L}_{Reg}}{\partial W} &=& 2\lambda_3 W.
\end{eqnarray}
where
$\Scale[1.0]{\mathcal{L}_{Bal_j} = \frac{\phi(\mathbf{X}_{\cdot,-j})^{T}\cdot(W\odot \mathbf{X}_{\cdot,j})}{W^{T}\cdot \mathbf{X}_{\cdot,j}}-\frac{\phi(\mathbf{X}_{\cdot,-j})^{T}\cdot(W\odot (1-\mathbf{X}_{\cdot,j}))}{W^{T}\cdot (1-\mathbf{X}_{\cdot,j})}}$ and 
\begin{eqnarray}
\nonumber \Scale[1.0]{\frac{\partial \mathcal{L}_{Bal_j}}{\partial W}} &=& \Scale[1.0]{\frac{\phi(\mathbf{X}_{\cdot,-j})^{T}\odot (\mathbf{X}_{\cdot,j}\cdot \mathbf{1}^T)^T\cdot(W^T\cdot \mathbf{X}_{\cdot,j})}{(W^T\cdot \mathbf{X}_{\cdot,j})^2}} \\
\nonumber && - \Scale[1.0]{\frac{\phi(\mathbf{X}_{\cdot,-j})^{T}\odot ((1-\mathbf{X}_{\cdot,j})\cdot \mathbf{1}^T)^T\cdot(W^T\cdot (1-\mathbf{X}_{\cdot,j}))}{(W^T\cdot (1-\mathbf{X}_{\cdot,j}))^2}}.
\end{eqnarray}
For ensuring the non-negative of $W$ with constraint $W\succeq 0$, we let $W = \omega\odot \omega$, where $\omega \in \mathbb{R}^{n \times1}$.Then we update $W$ by updating $\omega$ with following partial derivative.
\begin{eqnarray}
\frac{\partial \mathcal{L}_{mix}}{\partial \omega} = \frac{\partial \mathcal{L}_{mix}}{\partial W} \cdot \frac{\partial W}{\partial \omega}
\end{eqnarray}

\par \noindent \textbf{Update $\theta$}: By fixing $W$ and $\beta$, the key step for updating $\theta$ is to calculate the partial derivative of $\frac{\partial \mathcal{L}_{mix}}{\partial A^{(k)}}$ and $\frac{\partial \mathcal{L}_{mix}}{\partial \hat{A}^{(k)}}$. The detailed mathematical form of the partial derivative is shown as following:
\begin{eqnarray}
\label{eq:part_obj_theta1}
\frac{\partial \mathcal{L}_{mix}}{\partial A^{(k)}}\!\!\!\! &=&\!\!\!\! \frac{\partial \mathcal{L}_{Pre}}{\partial A^{(k)}} + \lambda_1\frac{\partial \mathcal{L}_{Bal}}{\partial A^{(k)}} + \lambda_2\frac{\partial \mathcal{L}_{AE}}{\partial A^{(k)}} + \frac{\partial \mathcal{L}_{Reg}}{\partial A^{(k)}},\\
\label{eq:part_obj_theta2}
\frac{\partial \mathcal{L}_{mix}}{\partial \hat{A}^{(k)}}\!\!\!\! &=&\!\!\!\! \lambda_2\frac{\partial \mathcal{L}_{AE}}{\partial \hat{A}^{(k)}} + \frac{\partial \mathcal{L}_{Reg}}{\partial \hat{A}^{(k)}}, \ \ \ \  k = 1,\cdots,K
\end{eqnarray}
First we look at the first term $\frac{\partial \mathcal{L}_{Pre}}{\partial A^{(K)}}$in $\frac{\partial \mathcal{L}_{mix}}{\partial A^{(K)}}$, which can be rephrased as follows:
\begin{eqnarray}
\frac{\partial \mathcal{L}_{Pre}}{\partial A^{(K)}} = \frac{\partial \mathcal{L}_{Pre}}{\partial \phi(X)}\cdot \frac{\partial \phi(X)}{\partial A^{(K)}},
\end{eqnarray}
According to Eq.~\ref{eq:part_obj}, we can obtain $\frac{\partial \mathcal{L}_{Pre}}{\partial \phi(X)}$.
The calculation of the second term $\frac{\partial \phi(X)}{\partial A^{(k)}}$ is easy since $\phi(X) = \sigma(\phi(X)^{(K-1)}A^{(K)}+b^{(K)})$.
Then $\frac{\partial \mathcal{L}_{Pre}}{\partial A^{(K)}}$ is accessible.
Based on the back-propagation, we can iteratively obtain $\frac{\partial \mathcal{L}_{Pre}}{\partial A^{(k)}}, k= 1,\cdots,K-1$. Now the calculation of the partial derivative of $\mathcal{L}_{Pre}$ is finished.

Similarly, by using back-propagation we can finish the calculation of $\mathcal{L}_{Bal}$ and $\mathcal{L}_{AE}$. 
Finally we can obtain the $\frac{\partial \mathcal{L}_{mix}}{\partial A^{(k)}}$ and $\frac{\partial \mathcal{L}_{mix}}{\partial \hat{A}^{(k)}}$ for $k = 1,\cdots,K$, and update our parameter $\theta$.

We update $W$, $\beta$ and $\theta = \{A^{(k)}, \hat{A}^{(k)}, b^{(k)}, \hat{b}^{(k)}\}$ iteratively until the objective function converges. The whole algorithm is summarized in Algorithm~\ref{alg:dgb}.

Finally, with the optimized regression coefficient $\beta$ and deep auto-encoder parameters $\theta$ by our DGBR algorithm, we can make a stable prediction on various agnostic test datasets.

\begin{algorithm}[tbp]
\caption{{Deep Global Balancing Regression algorithm}}
\label{alg:dgb}
\begin{algorithmic}[1]
\Require
Observed Variables Matrix X and Response Variable $Y$.
\Ensure
Updated Parameters $W$, $\beta$, $\theta$.
\State Initialize parameters  $W^{(0)}$, $\beta^{(0)}$ and $\theta^{(0)}$,
\State Calculate the current value of $\mathcal{L}_{mix}^{(0)} = \mathcal{J}(W^{(0)},\beta^{(0)},\theta^{(0)})$ with Equation~(\ref{eq:final_obj}),
\State Initialize the iteration variable $t\leftarrow 0$,

\Repeat
\State $t\leftarrow t+1$,
\State Update $W^{(t)}$ based on Eq.~(\ref{eq:part_obj_W}),
\State Update $\beta^{(t)}$ by solving $\mathcal{L}_{mix}(\beta^{(t-1)})$ in Equation~(\ref{eq:part_obj_beta}),
\State Based on Eq.~(\ref{eq:part_obj_theta1}) and (\ref{eq:part_obj_theta2}), use $\Scale[1.0]{\frac{\partial \mathcal{L}_{mix}}{\partial \theta}}$ to back-propagate through  the entire deep network to get updated parameters $\theta$,
\State Calculate $\mathcal{L}_{mix}^{(t)} = \mathcal{J}(W^{(t)},\beta^{(t)},\theta^{(t)})$,
\Until{$\mathcal{L}_{mix}^{(t)}$ converges or max iteration is reached}.\\
\Return $W$, $\beta$, $\theta$.
\end{algorithmic}
\end{algorithm}

\hide{
\begin{algorithm}[tbp]
\caption{{Deep Global Balancing Regression algorithm}}
\label{alg:dgb}
\begin{algorithmic}[1]
\Require
Observed Feature Matrix X and Response Variable $Y$.
\Ensure
Updated Parameters $W$, $\beta$, $\theta$.  
\State Initialize parameters  $W^{(0)}$, $\beta^{(0)}$ and $\theta^{(0)}$,
\State Calculate value of loss function with parameters $\Scale[0.9]{(W^{(0)},\beta^{(0)},\theta^{(0)})}$,
\State Initialize the iteration variable $t\leftarrow 0$,

\Repeat
\State $t\leftarrow t+1$,
\State Update $W^{(t)}$ by gradient descent and fixing $\beta$ and $\theta$,
\State Update $\beta^{(t)}$ by gradient descent and fixing $W$ and $\theta$,
\State Update $\theta^{(t)}$ by gradient descent and fixing $W$ and $\beta$,
\State Calculate loss function with parameters $(W^{(t)},\beta^{(t)},\theta^{(t)})$,
\Until{Loss function converges or max iteration is reached}.\\
\Return $W$, $\beta$, $\theta$.
\end{algorithmic}
\end{algorithm}
}

\subsection{Complexity Analysis}

During the procedure of optimization, the main time cost is to calculate the loss function, update parameters $W$, $\beta$ and $\theta$.
For calculating the loss function, its complexity is $O(npd)$, where $n$ is the sample size, $p$ is the dimension of observed variables and $d$ is the maximum dimension of the hidden layer in deep auto-encoder model.
For updating parameter $W$, its complexity is dominated by the step of calculating the partial gradients of loss function with respect to variable $W$. Its complexity is also $O(npd)$.
For updating parameter $\beta$, it is a standard LASSO problem and its complexity is $O(nd)$.
For updating $\theta$, its complexity is $O(npd)$.

In total, the complexity of each iteration in Algorithm~\ref{alg:dgb} is $O(npd)$.

\subsection{Parameter Tuning}

To tune the parameters for our algorithm and baselines, we need multiple validation datasets whose distributions are diverse from each other and different with the training data.
In our experiments, we generate such validation datasets $\mathcal{E}$ by non-random data resampling on training data. 
We calculate the $Average\_Error$ and $Stability\_Error$ of all algorithms on validation datasets by choosing $RMSE$ as $Error$ metrics in Eq. (\ref{metrics:acc}) and (\ref{metrics:stb}). In this paper, we tune all the parameters for our algorithm and baselines by minimizing $Average\_Error + \lambda \cdot Stability\_Error$ on validation datasets with cross validation by grid searching. We set $\lambda = 5$ in our experiments.
\hide{\todo{Where are the details of how the tuning set is constructed?  Should be in Appendix, I can't find details here.}}
\hide{Note that our results may in principle be sensitive to how the validation datasets are constructed, and that our approach is heuristic.  We leave a further exploration of this issue for future work.}

\noindent \textbf{Construction of Validation Data.}  The key point in construction of validation data is to construct datasets where the joint distribution of the covariates changes across environments, particularly when this might create bias if we don't control for all of the stable features.
However, we do not have prior knowledge about which features are noisy features. Fortunately, our estimation approach can identify noisy features as those that do not have a large estimated effect after balancing.
Using the empirically identified noisy features, we can generate validation datasets that change the distribution of noisy features and use these for parameter tuning.

\section{Experiments}

In this section, we evaluate our algorithm on both synthetic and real world dataset, comparing with the state-of-the-art methods.

\subsection{Baselines}
We implement following baselines for comparition.
\begin{itemize}[leftmargin=0.5cm]
\item \emph{Logistic Regression (LR)} (\cite{menard2002applied})
\item \emph{Deep Logistic Regression (DLR)} (\cite{chen2014deep}): Combines a deep auto-encoder and logistic regression.
\item \emph{Global Balancing Regression (GBR)}: Combines a global balancing regularizer and logistic regression as shown in Eq~(\ref{eq:global_balancing_algorithm}).
\end{itemize}
Since our proposed algorithm is based on logistic regression, so we compare our algorithm with only logistic regression methods. For other predictive methods, we can propose corresponding global balancing algorithm based on them, and compare with them.

\hide{
\subsection{Baselines}
\begin{itemize}[leftmargin=0.5cm]
\item \emph{Logistic Regression (LR)} \cite{menard2002applied}: It is a typical correlation-based method and widely used in many classical classification and prediction problems. It can not remove the spurious effect of noisy features during model training, so it can hardly make stable prediction across environments.
\item \emph{Deep Logistic Regression (DLR)} \cite{chen2014deep}: It makes a prediction with combination of deep auto-encoder and logistic regression. It can capture the non-linear interaction among variables for prediction. But it also can not remove the spurious effect on noisy features.
\item \emph{Global Balancing Regression (GBR)}: It makes a prediction by combining global balancing regularizer and logistic regression as shown in Eq~(\ref{eq:global_balancing_algorithm}). It can reduce some selection bias on training data, but can not handle it well under non-linear and high dimensional settings.
\end{itemize}
Since our proposed algorithm is based on logistic regression, we compare our algorithm with only logistic regression methods in our experiments. If we considered an alternative predictive method, we could compare the baseline version of the method with a global balancing algorithm based on it.
}

\subsection{Experiments on Synthetic Data}

In this section, we describe the synthetic datasets and demonstrate the effectiveness of our proposed algorithm.

\subsubsection{Dataset}

As shown in Fig. \ref{fig:graph}, there are three relationships between $\mathbf{X} = \{\mathbf{S}, \mathbf{V}\}$ and $Y$, including $\mathbf{S}\perp \mathbf{V}$, $\mathbf{S}\rightarrow \mathbf{V}$, and $\mathbf{V}\rightarrow \mathbf{S}$.

\noindent \textbf{$\mathbf{S}\perp \mathbf{V}$:} In this setting, $\mathbf{S}$ and $\mathbf{V}$ are independent. 
Recalling Fig. \ref{fig:graph}, we generate predictor $\mathbf{X} =\{\mathbf{S}_{\cdot,1}, \cdots, \mathbf{S}_{\cdot,p_s}, \mathbf{V}_{\cdot,1}, \cdots, \mathbf{V}_{\cdot,p_v}\}$ with independent Gaussian distributions as:
\begin{eqnarray}
\Scale[1.0]{\mathbf{\tilde{S}}_{\cdot,1}, \cdots, \mathbf{\tilde{S}}_{\cdot,p_s}, \mathbf{\tilde{V}}_{\cdot,1}, \cdots, \mathbf{\tilde{V}}_{\cdot,p_v}\ \quad  \overset{iid}{\sim}\quad \mathcal{N}(0,1)}, \nonumber
\end{eqnarray}
where $p_s = 0.4*p$, and $\mathbf{S}_{\cdot,j}$ represents the $j^{th}$ variable in $\mathbf{S}$. 
To make $\mathbf{X}$ binary, we let $\mathbf{X}_{\cdot,j}=1$ if $\mathbf{\tilde{X}}_{\cdot,j}\geq 0$, otherwise $\mathbf{X}_{\cdot,j}=0$.

\noindent \textbf{$\mathbf{S}\rightarrow \mathbf{V}$:} In this setting, the stable features $\mathbf{S}$ are the causes of noisy features $\mathbf{V}$. We first generate the stable features $\mathbf{\tilde{S}}$ with independent Gaussian distributions, and let $\mathbf{S}_{\cdot,j}=1$ if $\mathbf{\tilde{S}}_{\cdot,j}\geq 0$, otherwise $\mathbf{S}_{\cdot,j}=0$.
Then, we generate noisy features $\mathbf{\tilde{V}} = \{\mathbf{\tilde{V}}_{\cdot,1}, \cdots, \mathbf{\tilde{V}}_{\cdot,p_v}\}$ based on $\mathbf{\tilde{S}}$: $$\Scale[1.0]{\mathbf{\tilde{V}}_{\cdot,j} = \mathbf{\tilde{S}}_{\cdot,j}+\mathbf{\tilde{S}}_{\cdot,j+1}+\mathcal{N}(0,2)},$$ and let $\mathbf{V}_{\cdot,j}=1$ if $\mathbf{\tilde{V}}_{\cdot,j} > 1$, otherwise $\mathbf{V}_{\cdot,j}=0$.

\noindent \textbf{$\mathbf{V}\rightarrow \mathbf{S}$:} In this setting, the noisy features $\mathbf{V}$ are the causes of stable features $\mathbf{S}$. We first generate the noisy features $\mathbf{\tilde{V}}$ with independent Gaussian distribution, and let $\mathbf{V}_{\cdot,j}=1$ if $\mathbf{\tilde{V}}_{\cdot,j}\geq 0$, otherwise $\mathbf{V}_{\cdot,j}=0$. Then, we generate stable features $\mathbf{S} = \{\mathbf{S}_{\cdot,1}, \cdots, \mathbf{S}_{\cdot,p_s}\}$ based on $\mathbf{\tilde{V}}$: $$\Scale[1.0]{\mathbf{\tilde{S}}_{\cdot,j} = \mathbf{\tilde{V}}_{\cdot,j}+\mathbf{\tilde{V}}_{\cdot,j+1}+\mathcal{N}(0,2)},$$ and let $\mathbf{S}_{\cdot,j}=1$ if $\mathbf{\tilde{S}}_{\cdot,j} > 1$, otherwise $\mathbf{S}_{\cdot,j}=0$.

Finally, we generate the response variable $Y$ for all above three settings with the same function $g$ as following:
\begin{eqnarray}
\nonumber Y\!\!\!&=&\!\!\!\Scale[0.88]{1/(1+\exp(-\sum_{\mathbf{X}_{\cdot,i}\in \mathbf{S}_l}\alpha_i\cdot \mathbf{X}_{\cdot,i}-\sum_{\mathbf{X}_{\cdot,j}\in \mathbf{S}_n}\beta_j\cdot \mathbf{X}_{\cdot,j}\cdot \mathbf{X}_{\cdot,j+1}))}\\
\nonumber && + \Scale[0.9]{\mathcal{N}(0,0.2)},
\end{eqnarray}
where we separate the stable features $\mathbf{S}$ into two parts, linear part $\mathbf{S}_l$ and non-linear part $\mathbf{S}_n$. And $\alpha_i = (-1)^{i}\cdot (i\%3+1)\cdot p/3$ and $\beta_j = p/2$. To make $Y$ binary, we set $Y=1$ when $Y\geq0.5$, otherwise $Y=0$.

To test the stability of all algorithms, we need to generate a set environments $e$, each with a distinct joint distribution.
Under Assumption \ref{asmp:stable}, instability in prediction arises because $P(Y|\mathbf{V})$ or $P(\mathbf{V}|\mathbf{S})$ varies across environments.
Therefore, we generate different environments in our experiments by varying $P(Y|\mathbf{V})$ and $P(\mathbf{V}|\mathbf{S})$.

\begin{figure*}[tb]
\centering
\subfloat[\scriptsize{Trained on $n=1000,p=20,r=0.65$} \label{fig:RMSE_1000_20_65}]{
  \includegraphics[width=2.1in]{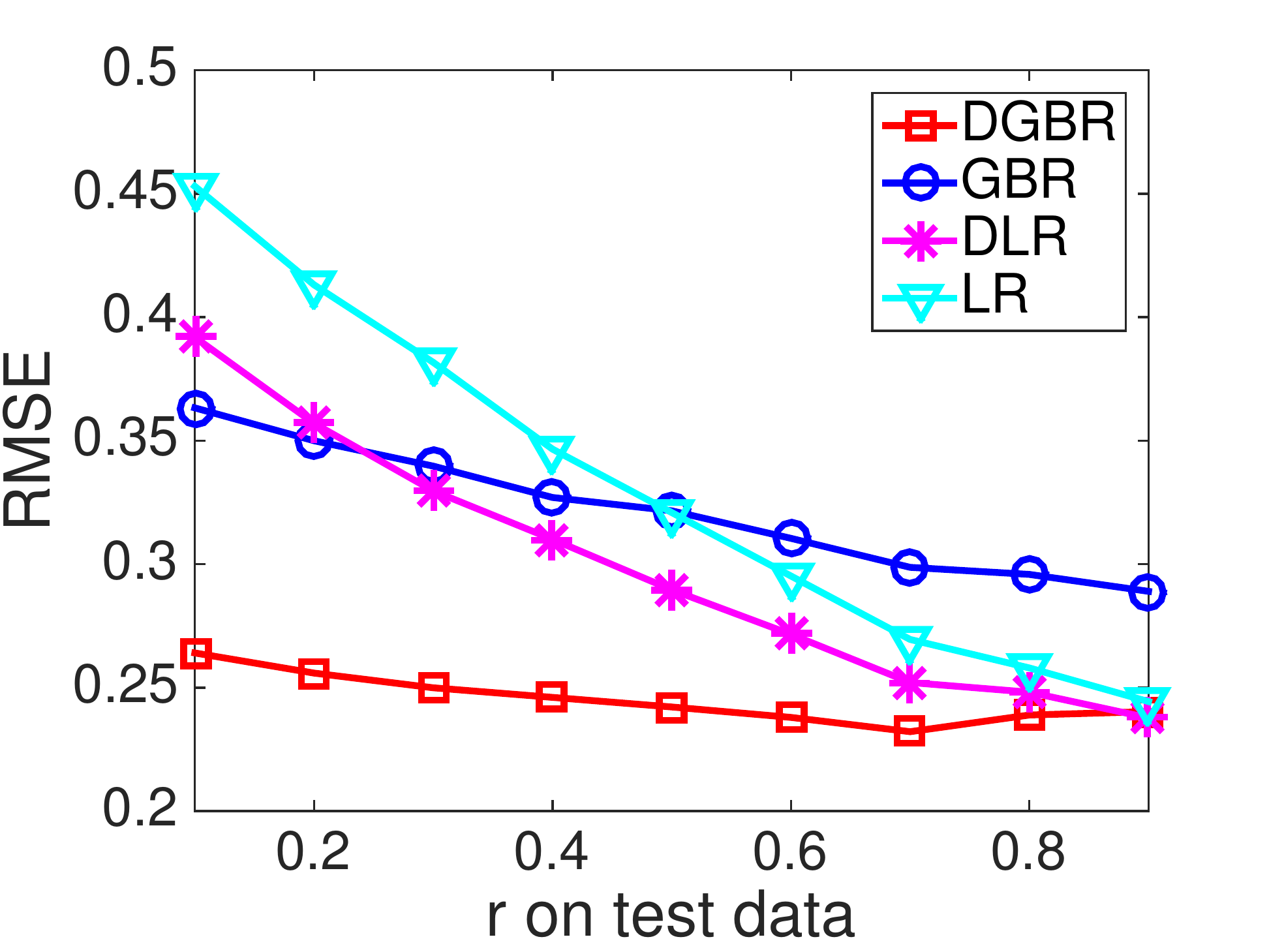}
}
\subfloat[\scriptsize{Trained on $n=1000,p=20,r=0.75$}\label{fig:RMSE_1000_20_75}]{
  \includegraphics[width=2.1in]{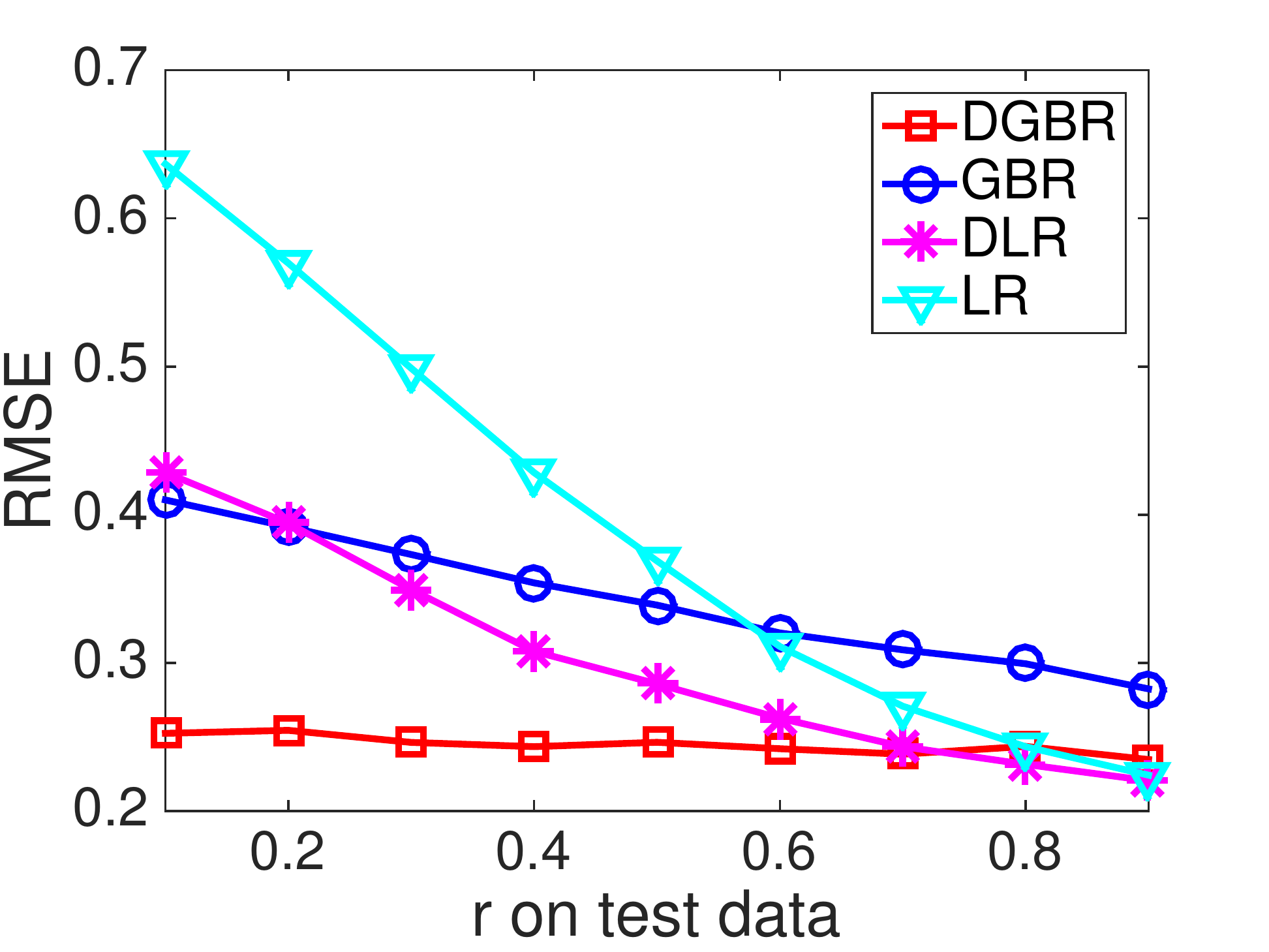}
}
\subfloat[\scriptsize{Trained on $n=1000,p=20,r=0.85$}\label{fig:RMSE_1000_20_85}]{
  \includegraphics[width=2.1in]{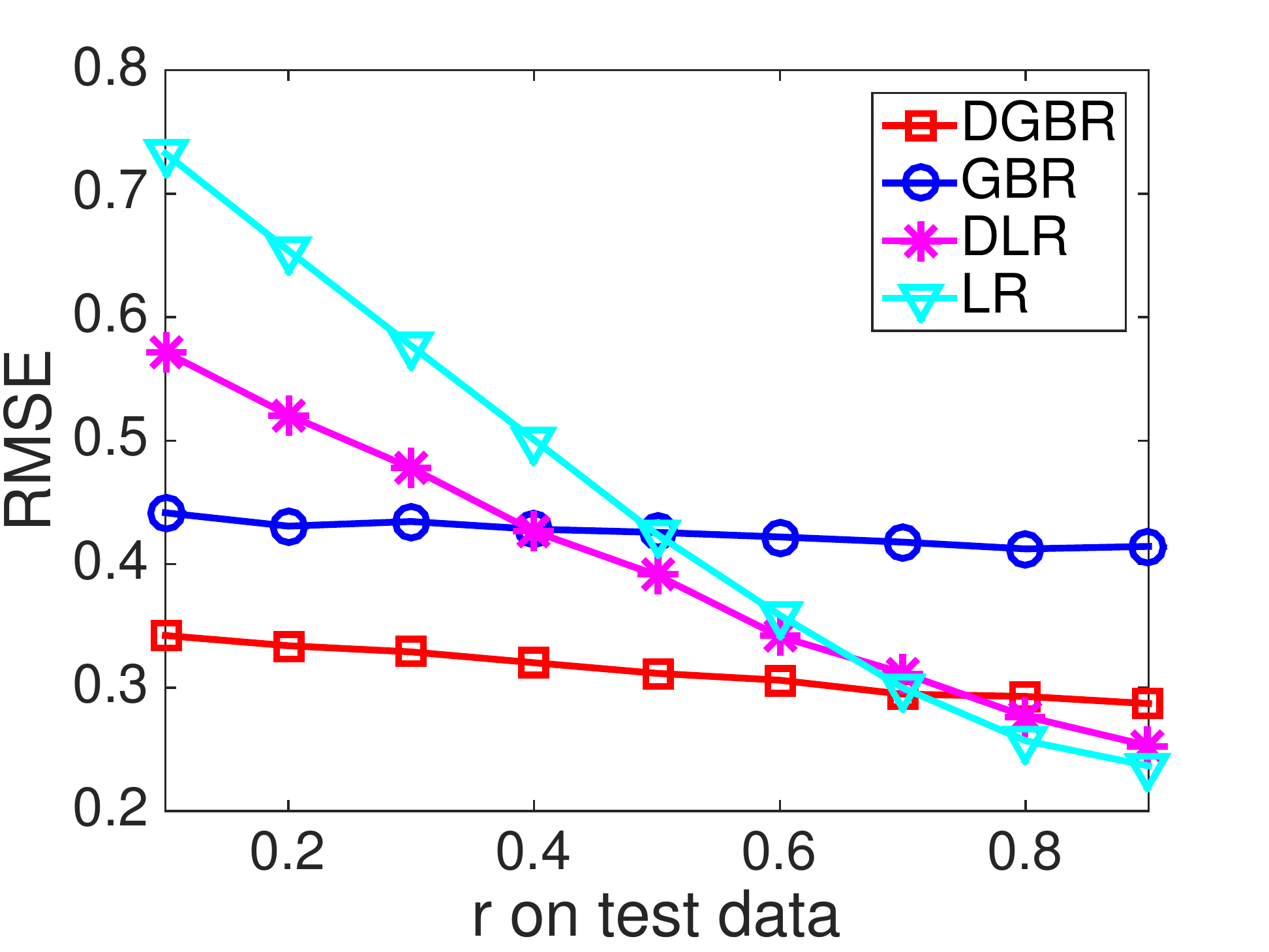}
}
\\
\subfloat[\scriptsize{Trained on $n=2000,p=20,r=0.65$} \label{fig:RMSE_2000_20_65}]{
  \includegraphics[width=2.1in]{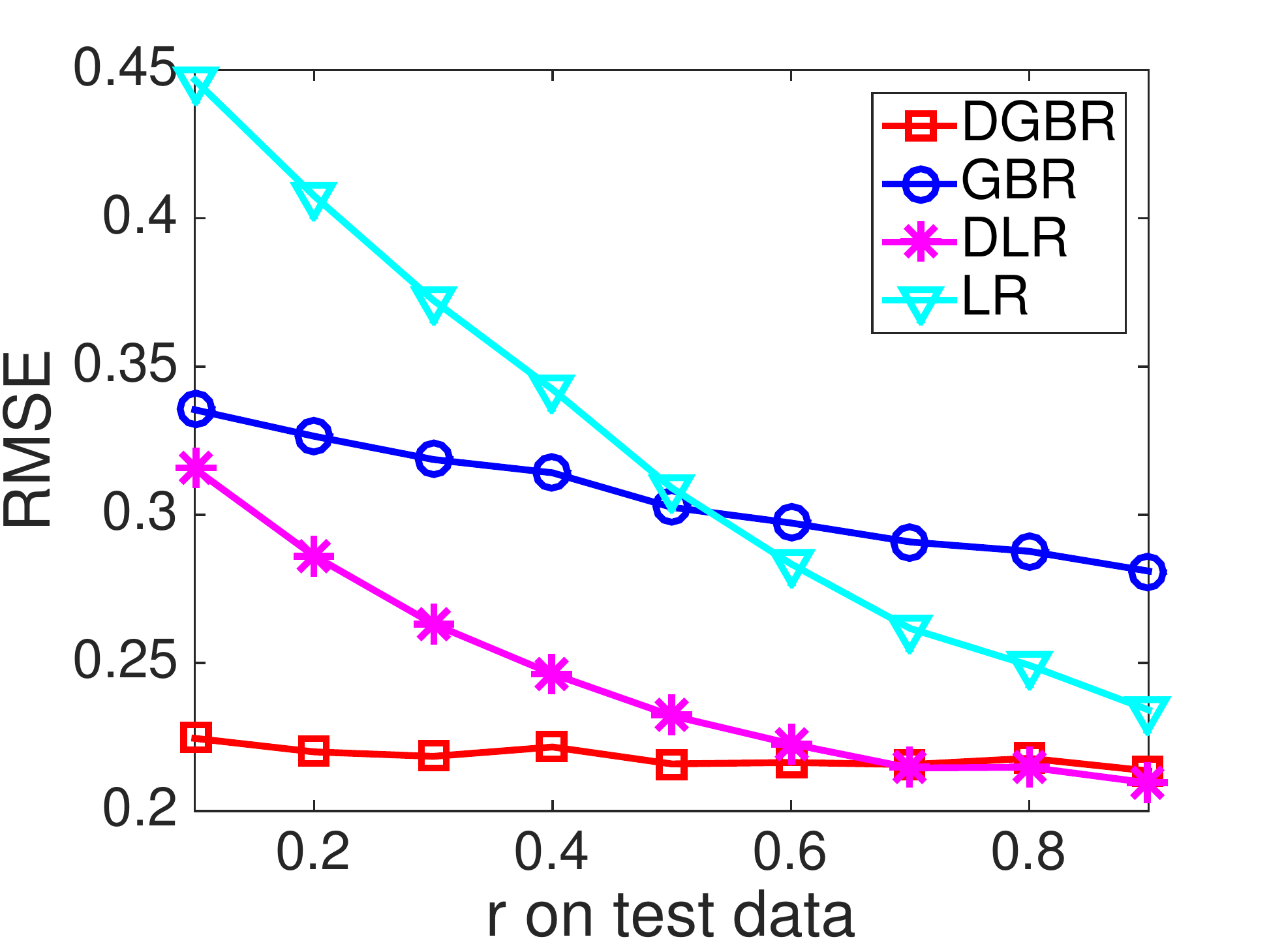}
}
\subfloat[\scriptsize{Trained on $n=2000,p=20,r=0.75$}\label{fig:RMSE_2000_20_75}]{
  \includegraphics[width=2.1in]{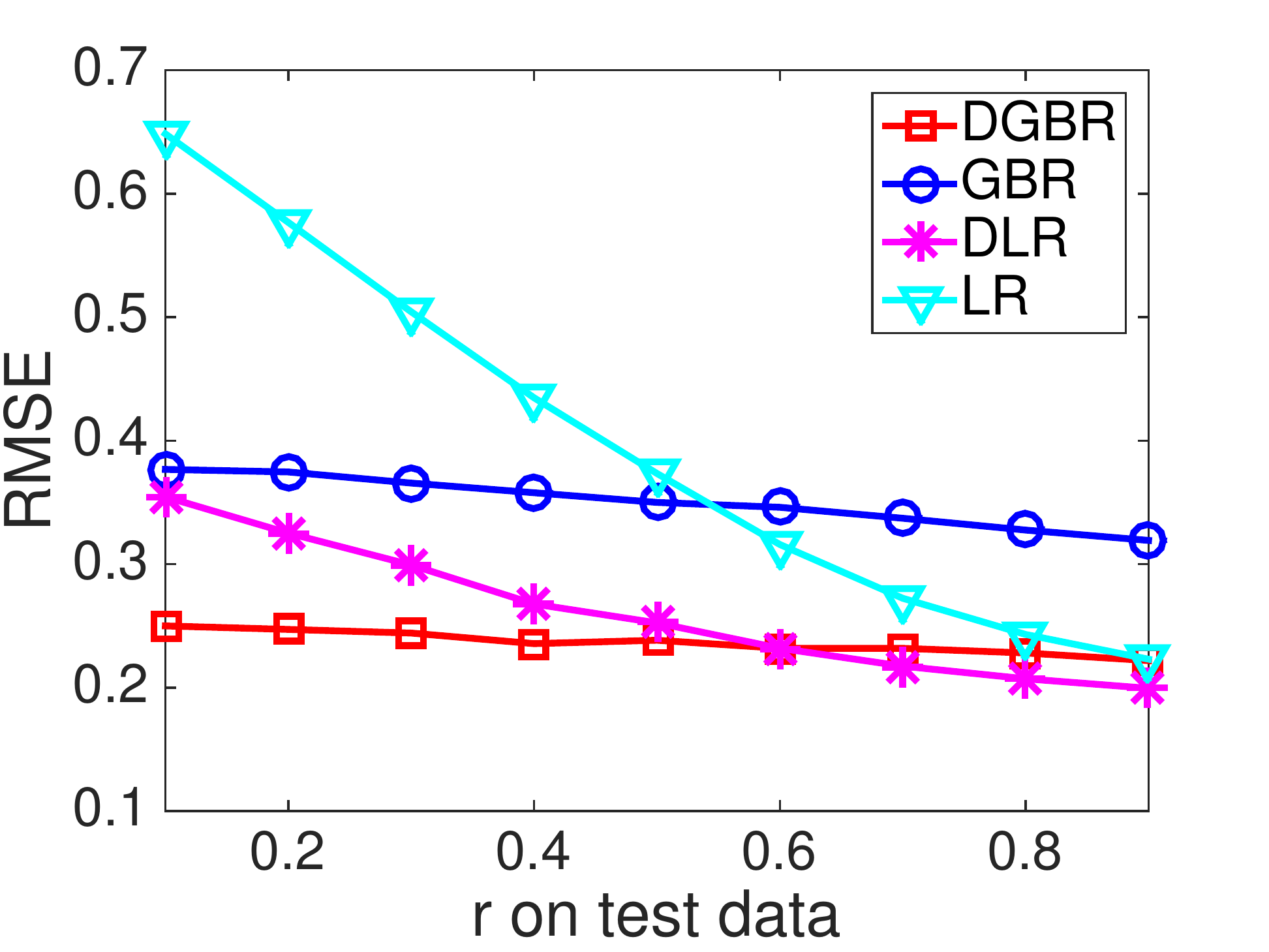}
}
\subfloat[\scriptsize{Trained on $n=2000,p=20,r=0.85$}\label{fig:RMSE_2000_20_85}]{
  \includegraphics[width=2.1in]{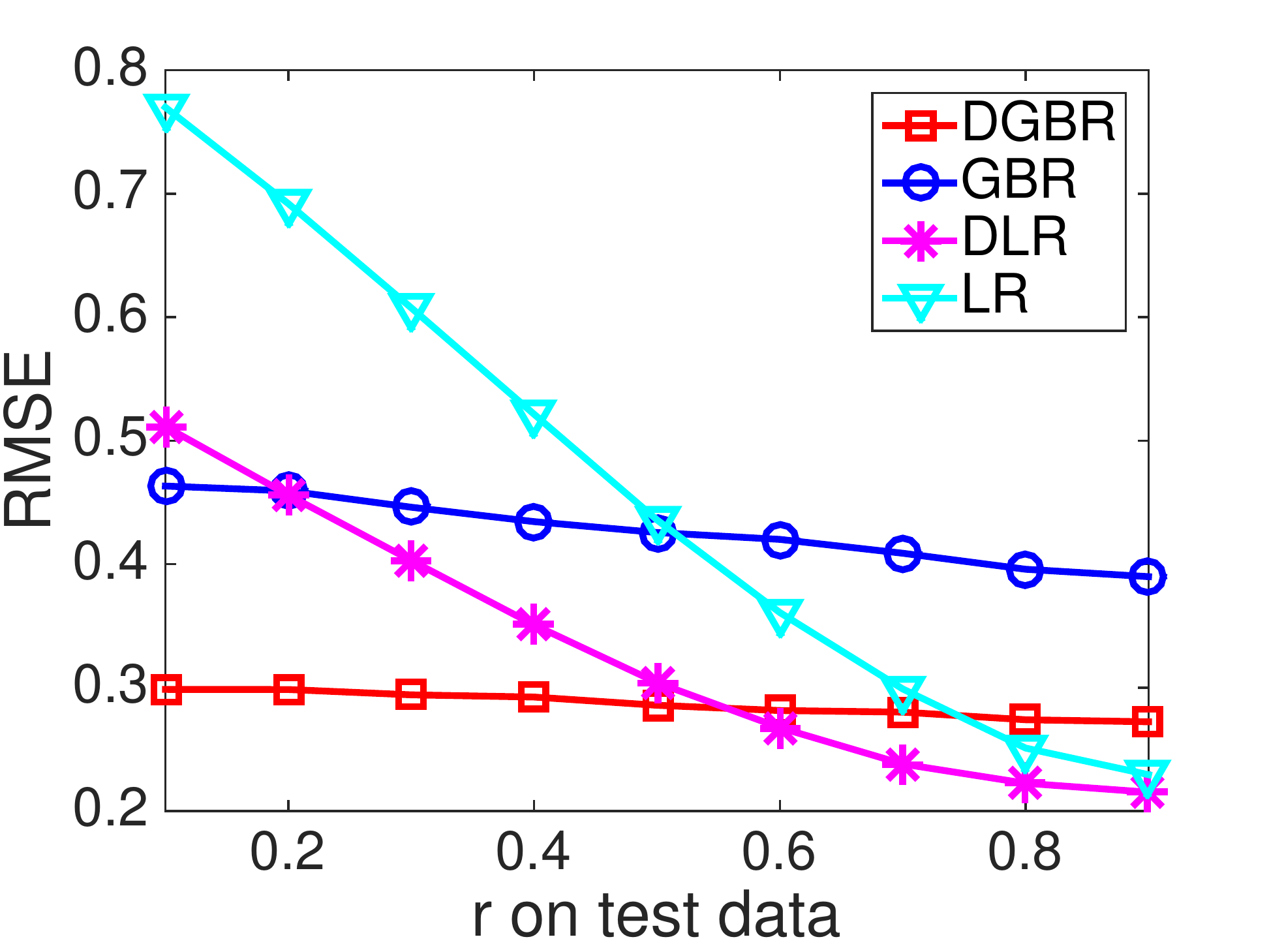}
}
\\
\subfloat[\scriptsize{Trained on $n=4000,p=20,r=0.65$} \label{fig:RMSE_4000_20_65}]{
  \includegraphics[width=2.1in]{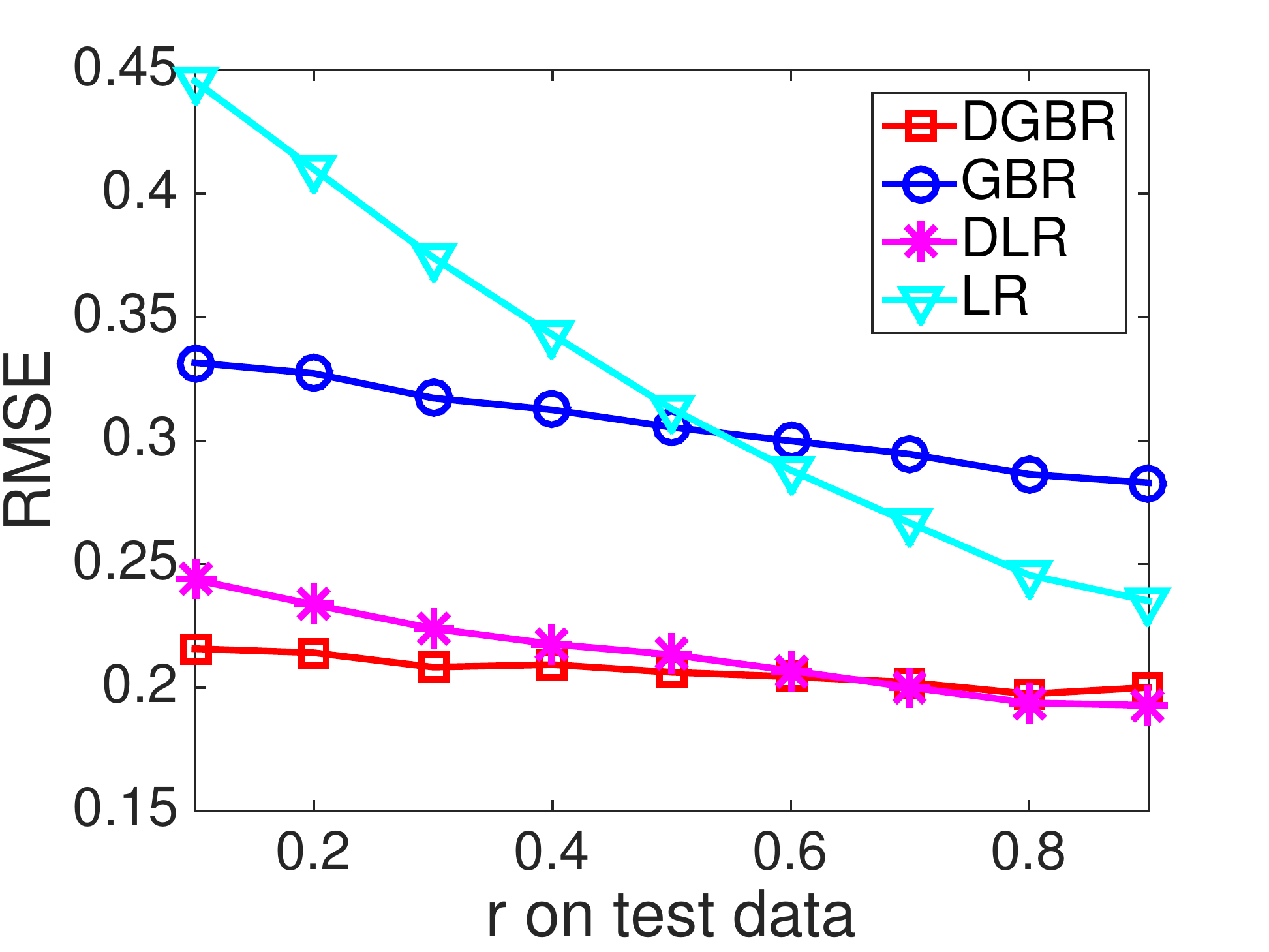}
}
\subfloat[\scriptsize{Trained on $n=4000,p=20,r=0.75$}\label{fig:RMSE_4000_20_75}]{
  \includegraphics[width=2.1in]{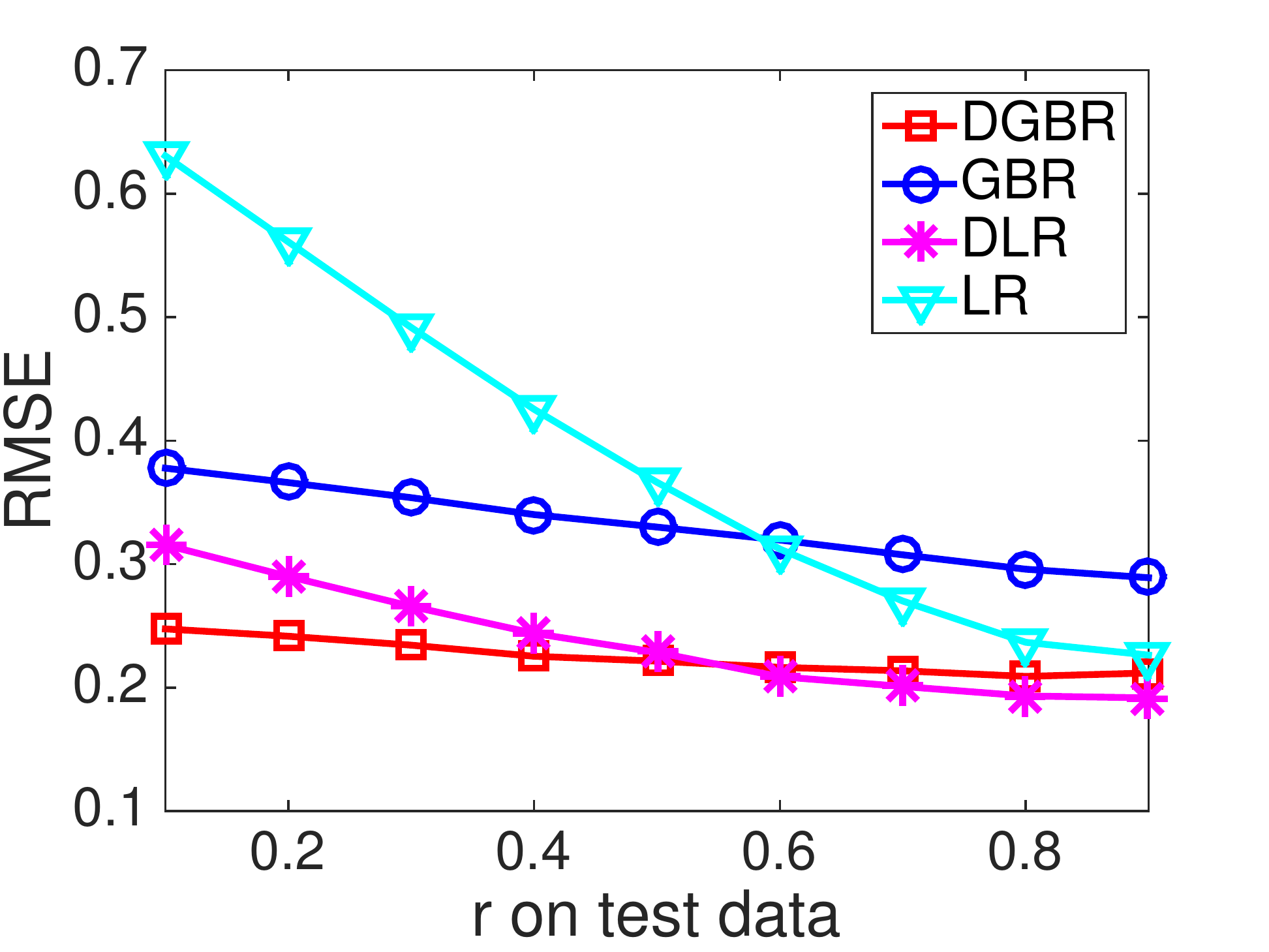}
}
\subfloat[\scriptsize{Trained on $n=4000,p=20,r=0.85$}\label{fig:RMSE_4000_20_85}]{
  \includegraphics[width=2.1in]{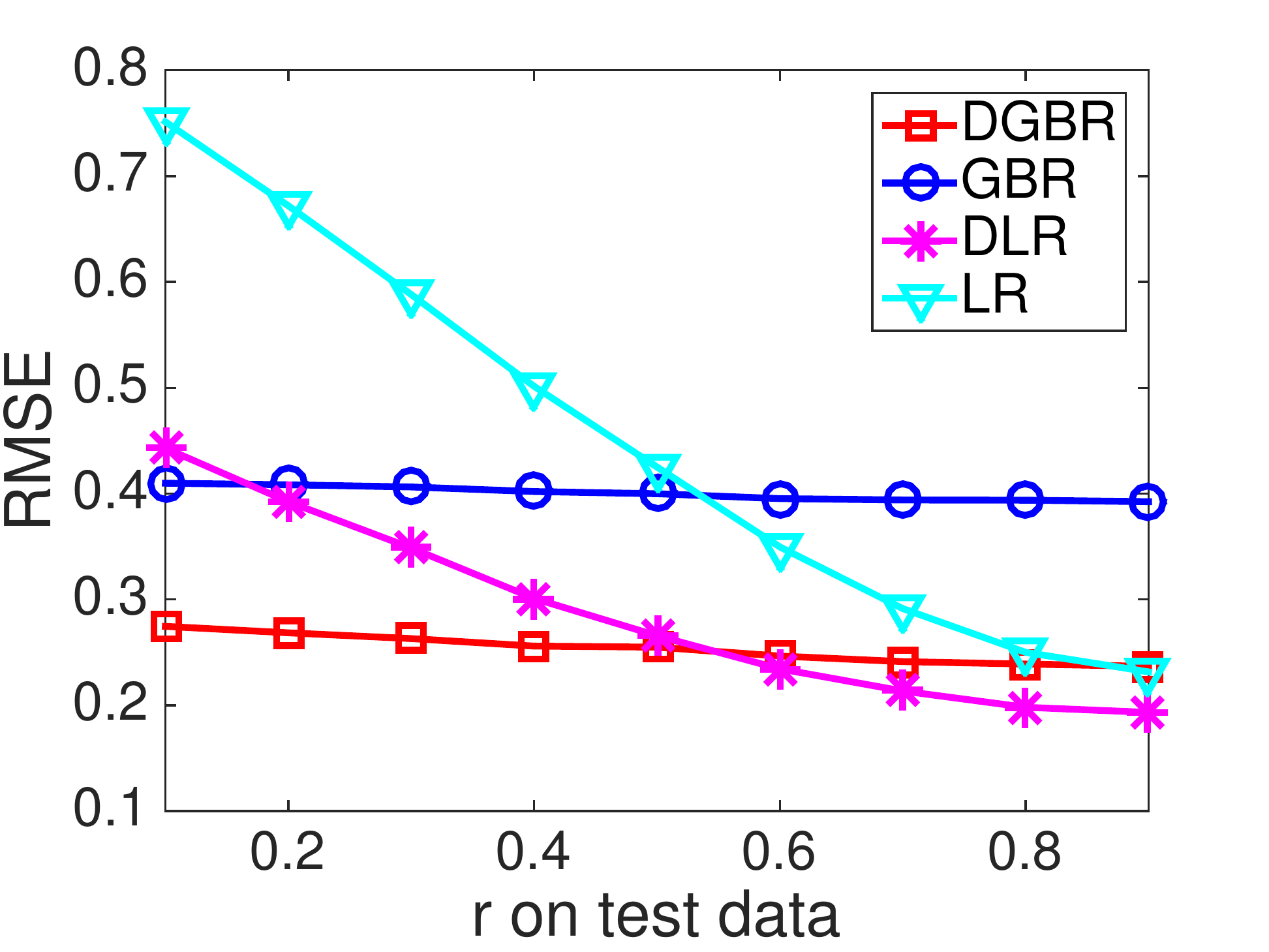}
}
\caption{\textbf{Setting $\mathbf{S}\perp \mathbf{V}$}: RMSE of outcome prediction on various test datasets by varying sample size $n$ (vertical) and bias rate $r$ (horizontal) on training dataset. The $r$ of the X-axis in each figure represents the bias rate on test data.}
\label{fig:simulation_n_r}
\end{figure*}

\begin{figure*}[tb]
\centering
\subfloat[\scriptsize{Trained on $n=4000,p=20,r=0.75$} \label{fig:RMSE_4000_20_75}]{
  \includegraphics[width=2.1in]{figures/RMSE_4000_20_75}
}
\subfloat[\scriptsize{Trained on $n=4000,p=40,r=0.75$}\label{fig:RMSE_4000_40_75}]{
  \includegraphics[width=2.1in]{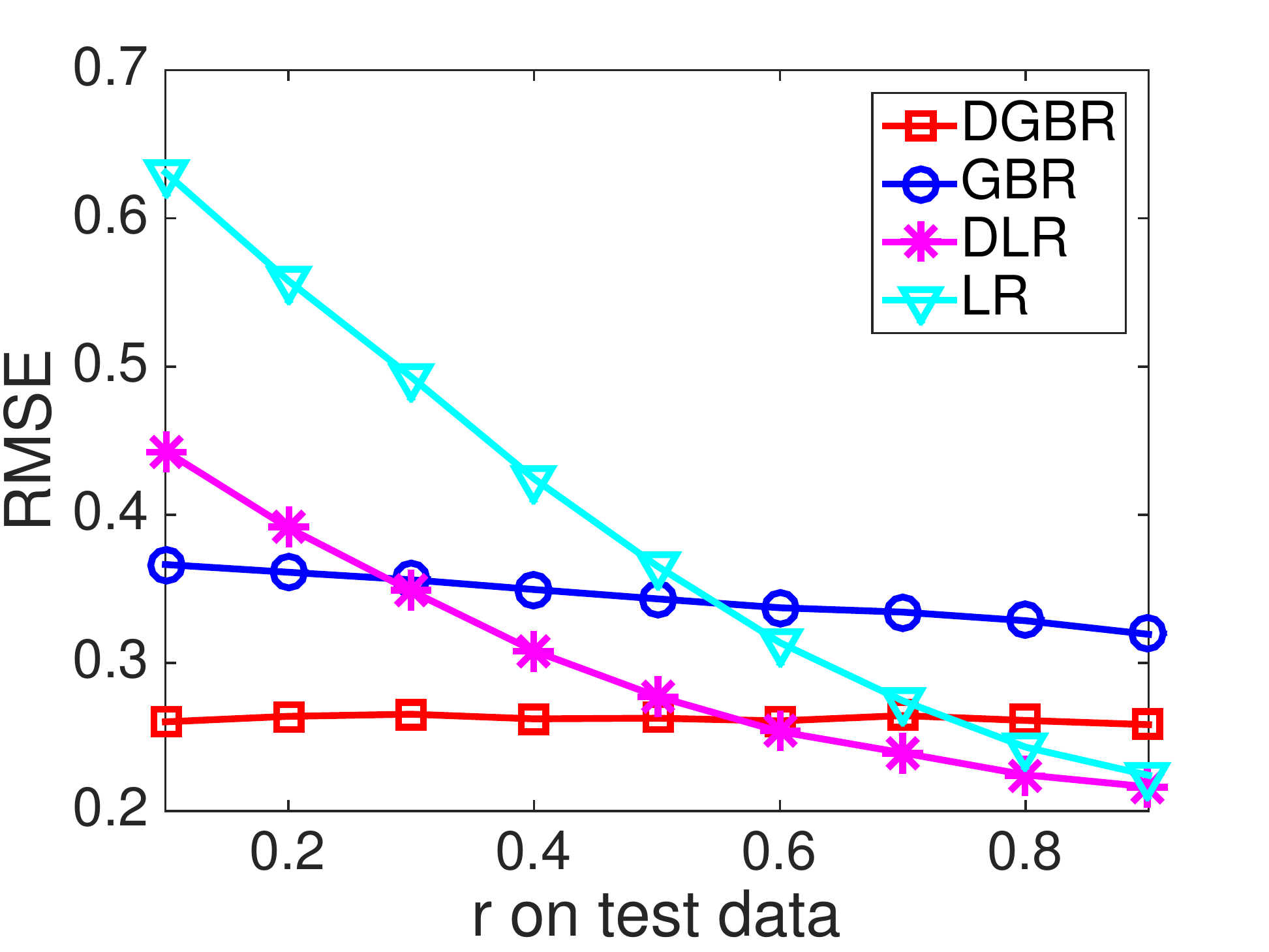}
}
\subfloat[\scriptsize{Trained on $n=4000,p=80,r=0.75$}\label{fig:RMSE_4000_80_75}]{
  \includegraphics[width=2.1in]{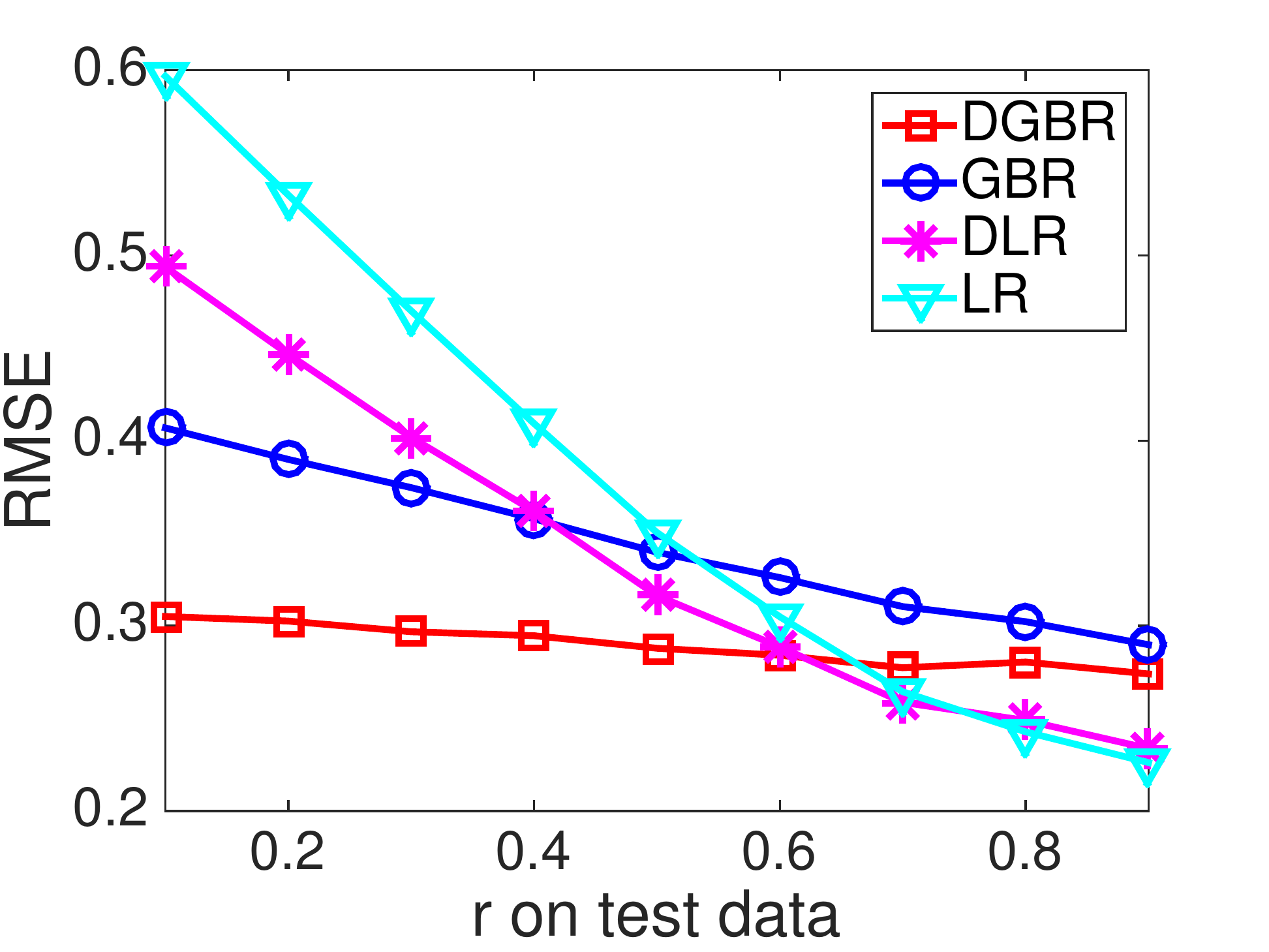}
}
\caption{\textbf{Setting $\mathbf{S}\perp \mathbf{V}$}: RMSE of outcome prediction on various test datasets by varying variables' dimension $p$ on training dataset.}
\label{fig:simulation_p}
\end{figure*}

\begin{figure*}[tb]
\centering
\subfloat[Trained on $n=1000,p=20,r=0.75$ \label{fig:RMSE_1000_20_75_s2v}]{
  \includegraphics[width=2.1in]{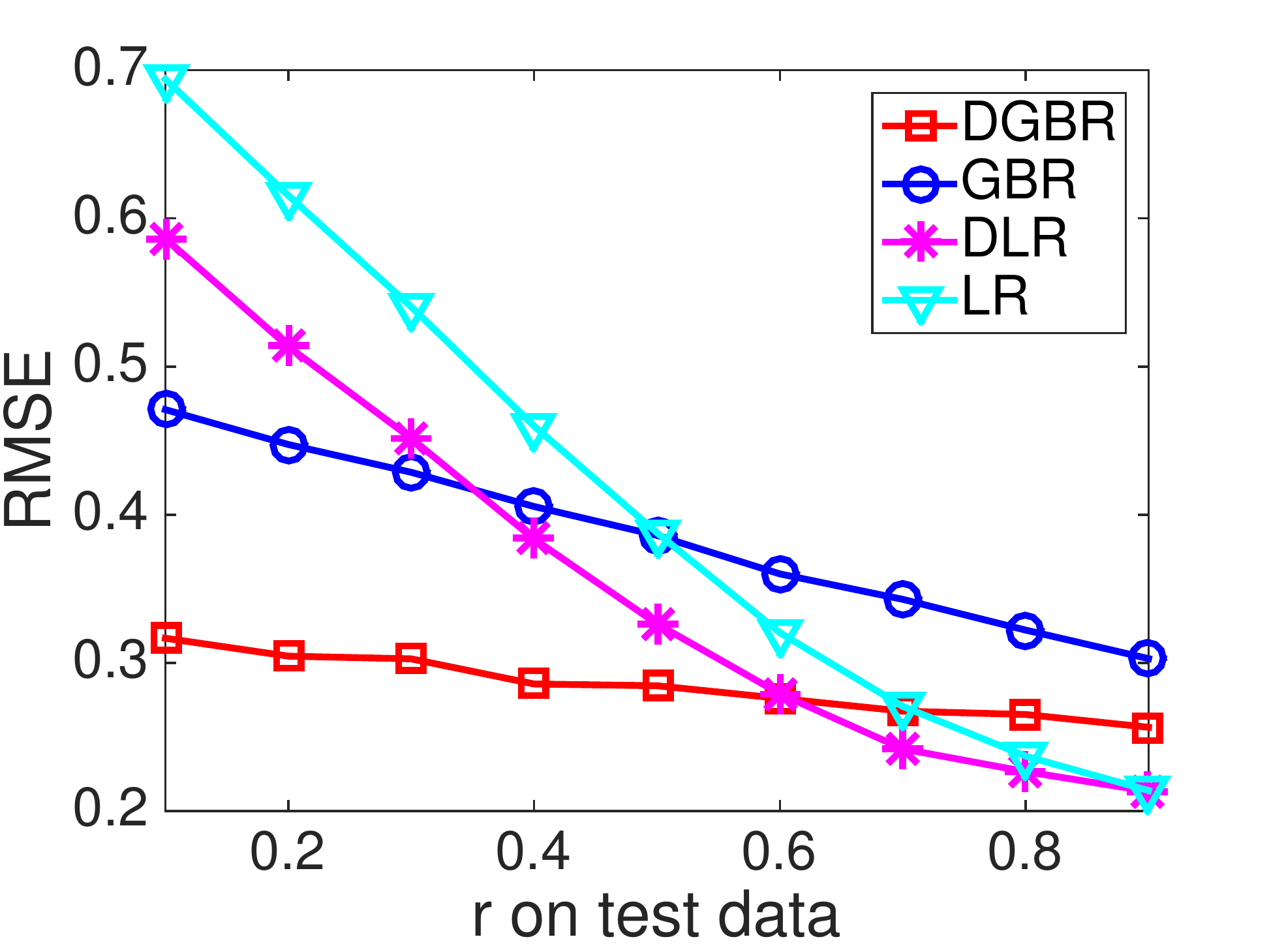}
}
\subfloat[Trained on $n=2000,p=20,r=0.75$\label{fig:RMSE_2000_20_75_s2v}]{
  \includegraphics[width=2.1in]{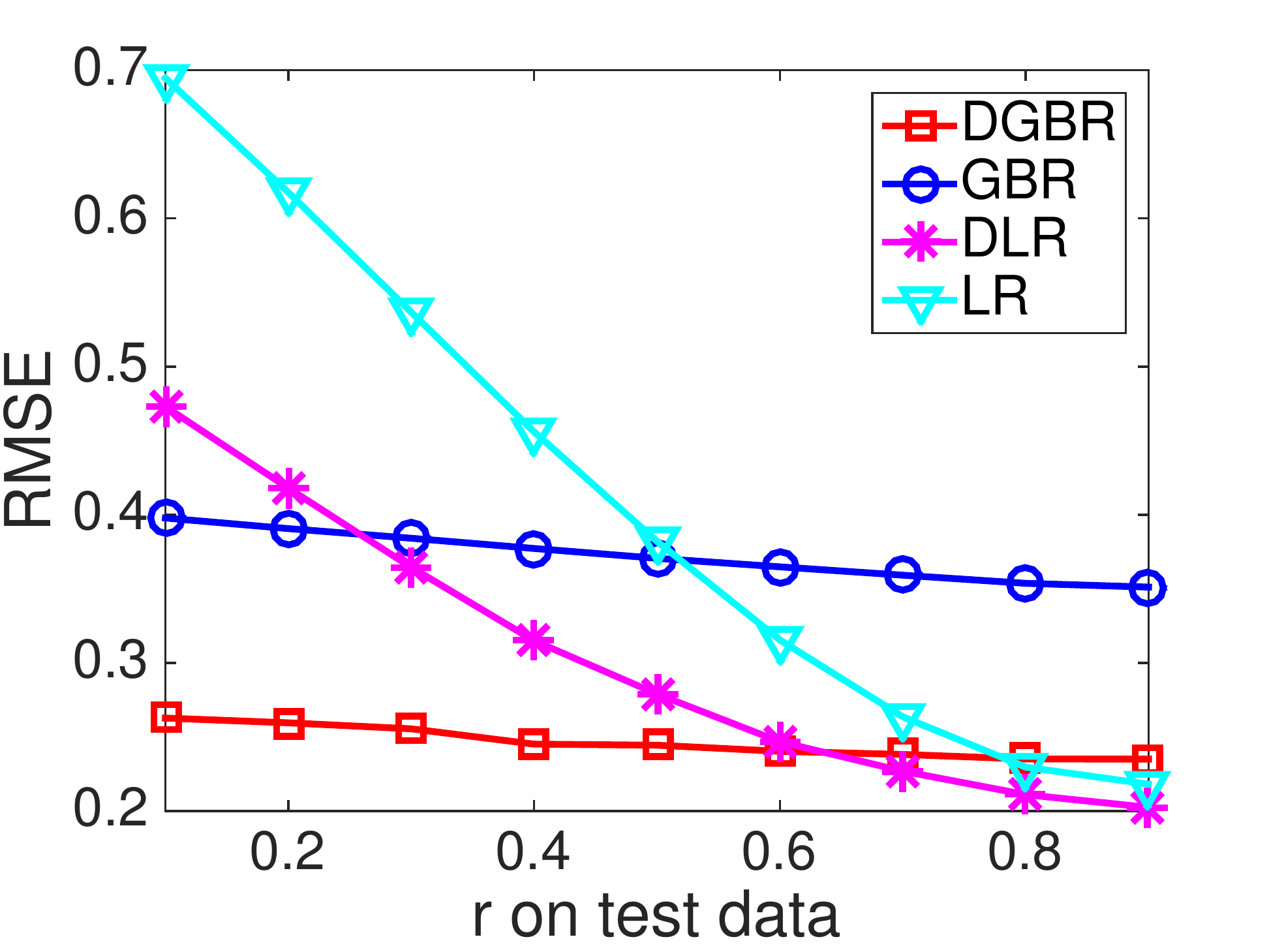}
}
\subfloat[Trained on $n=4000,p=20,r=0.75$\label{fig:RMSE_4000_20_75_s2v}]{
  \includegraphics[width=2.1in]{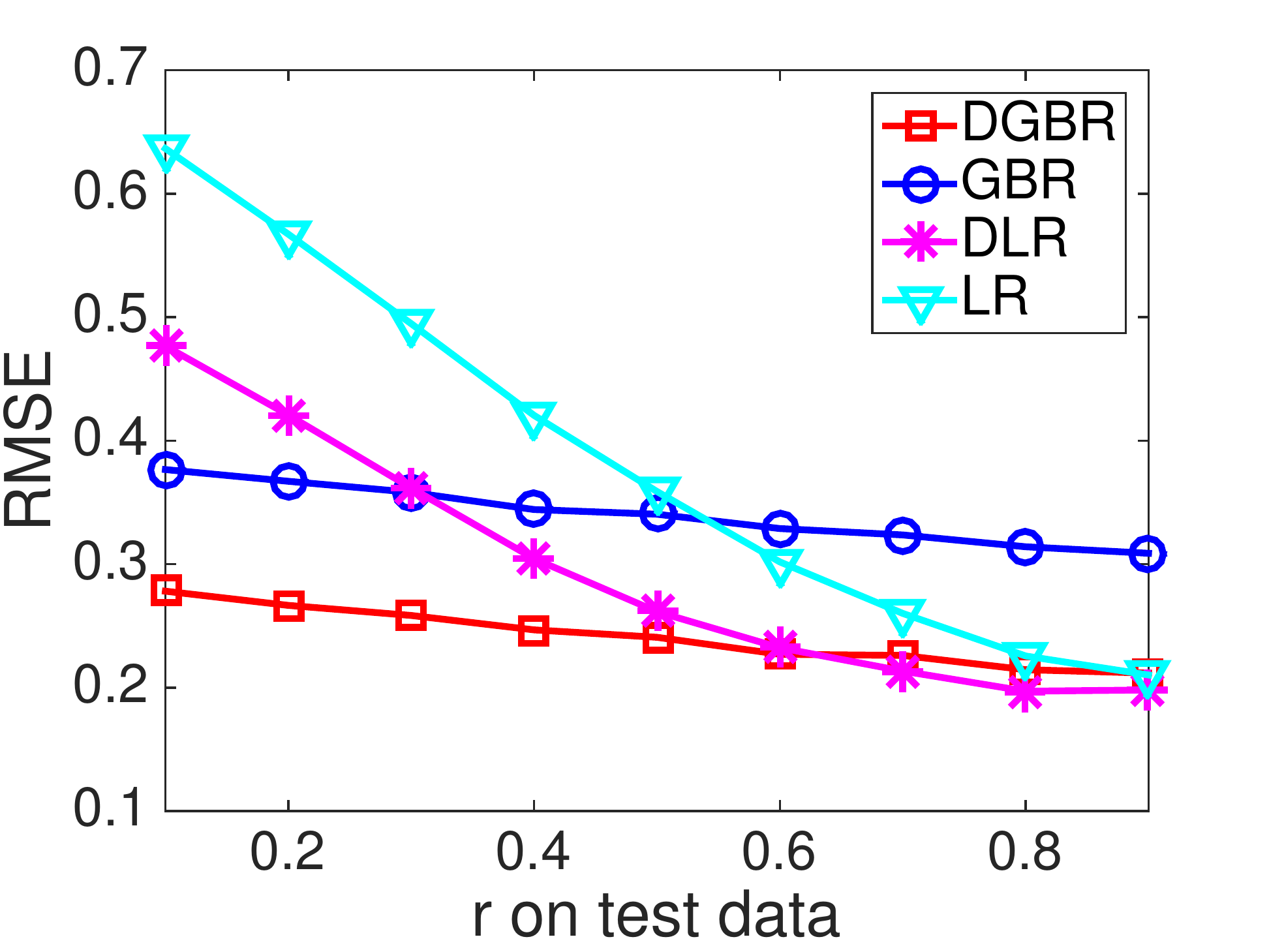}
}
\caption{\textbf{Setting $\mathbf{S}\rightarrow \mathbf{V}$}: RMSE of outcome prediction on various testing datasets by varying sample size $n$ on training dataset.}
\label{fig:simulation_s2v}
\end{figure*}

\begin{figure*}[tb]
\centering
\subfloat[Trained on $n=1000,p=20,r=0.75$ \label{fig:RMSE_1000_20_75_v2s}]{
  \includegraphics[width=2.1in]{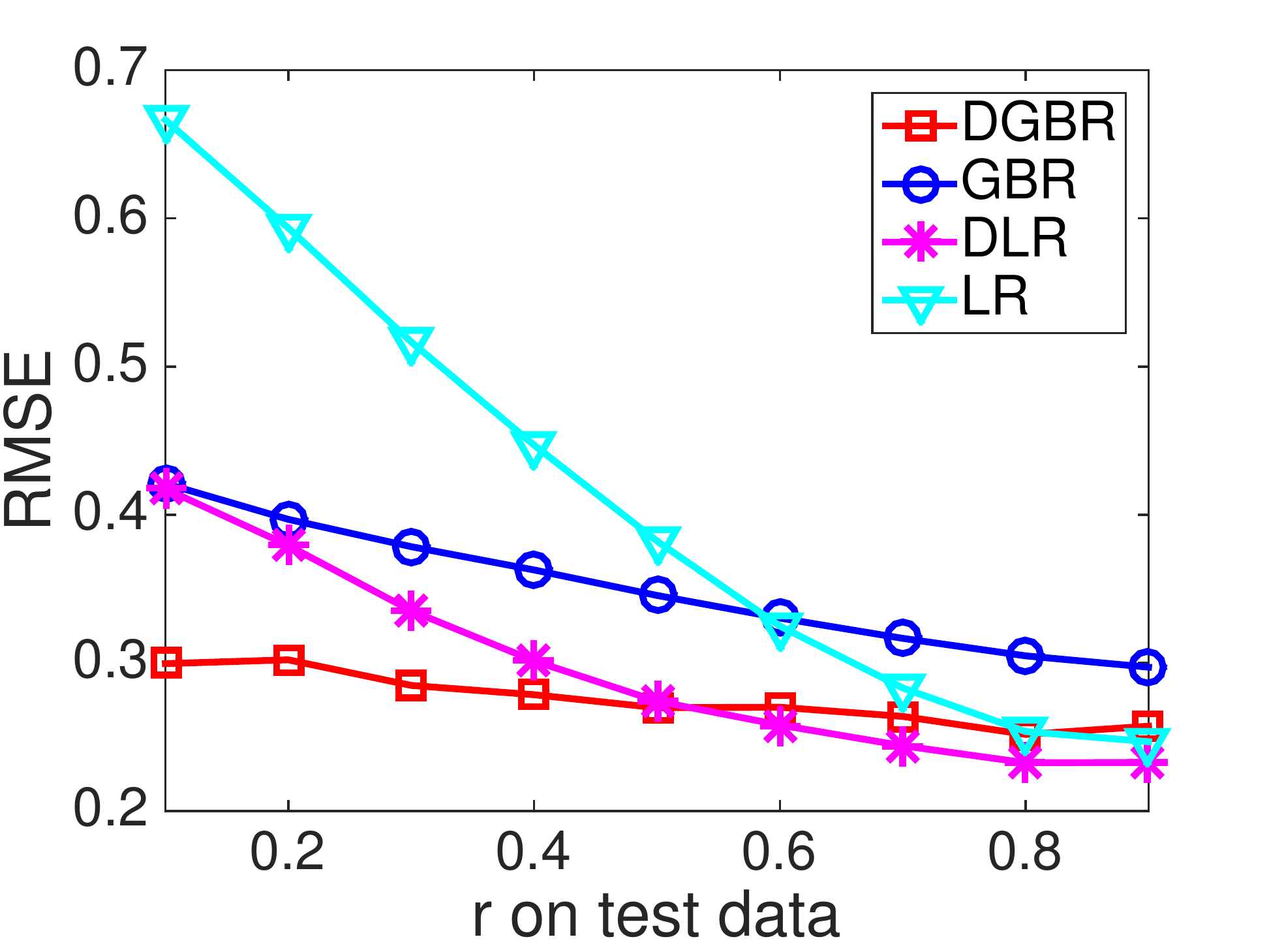}
}
\subfloat[Trained on $n=2000,p=20,r=0.75$\label{fig:RMSE_2000_20_75_v2s}]{
  \includegraphics[width=2.1in]{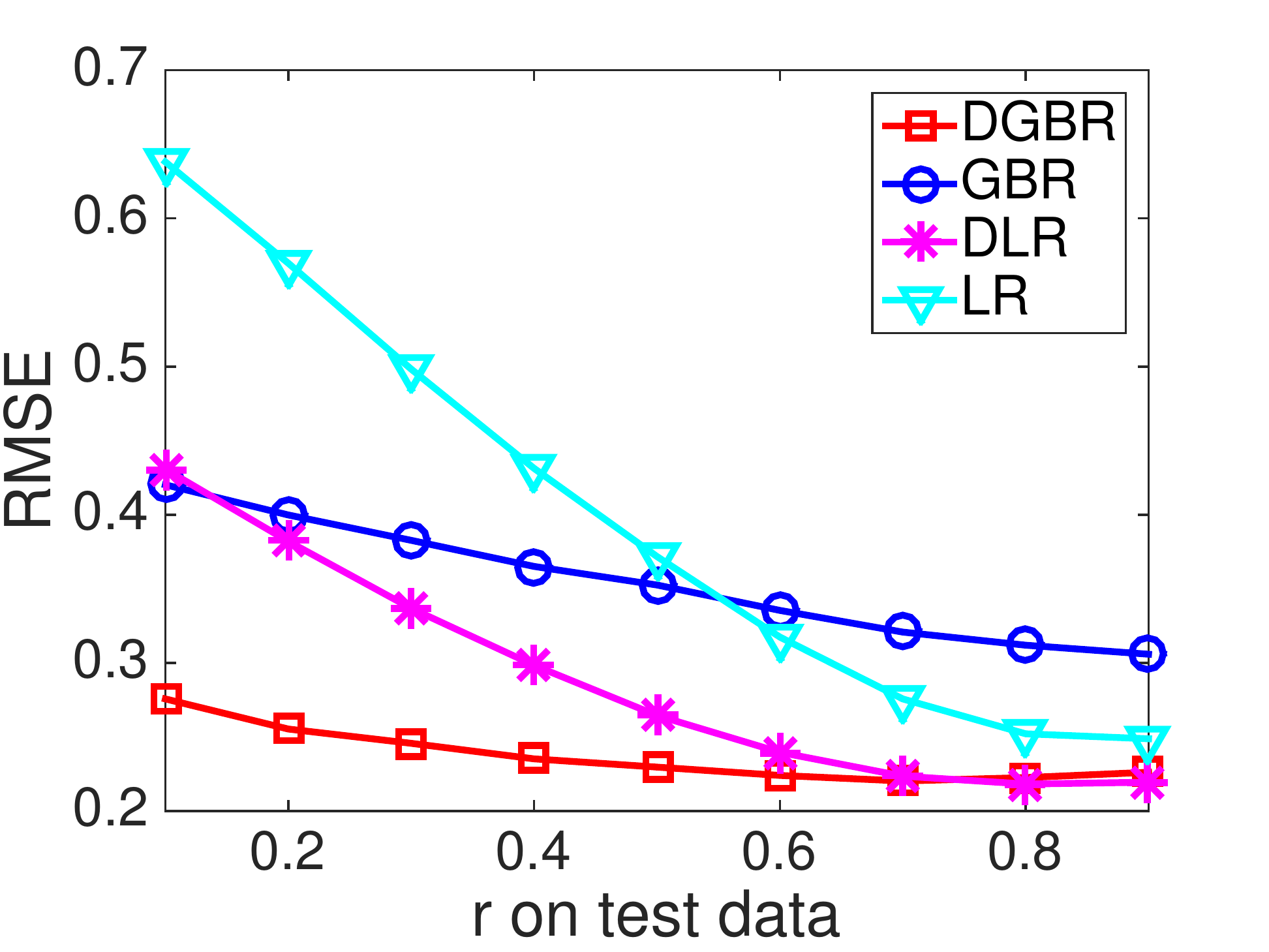}
}
\subfloat[Trained on $n=4000,p=20,r=0.75$\label{fig:RMSE_4000_20_75_v2s}]{
  \includegraphics[width=2.1in]{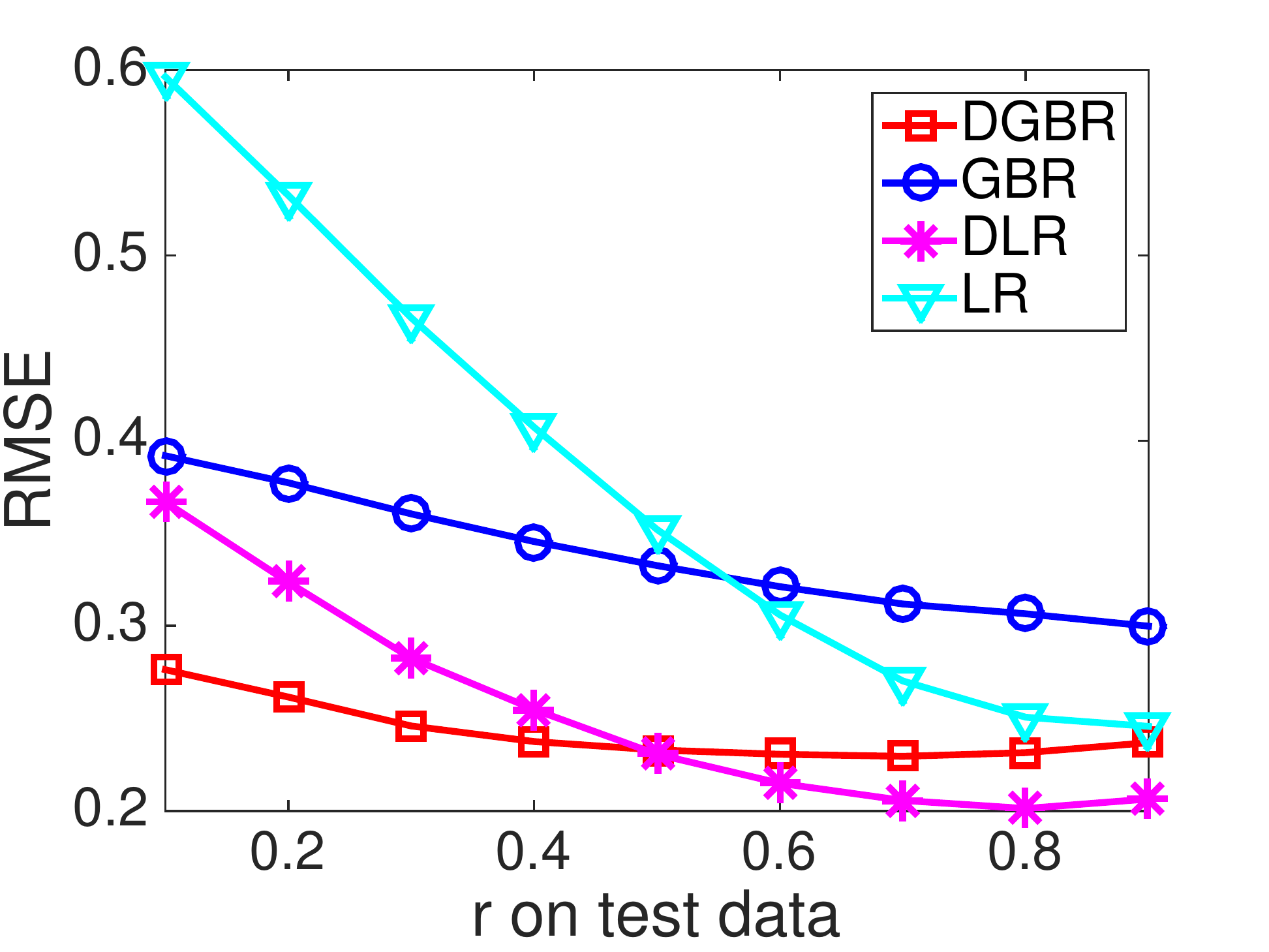}
}
\caption{\textbf{Setting $\mathbf{V}\rightarrow \mathbf{S}$}: RMSE of outcome prediction on various testing datasets by varying sample size $n$ on training dataset.}
\label{fig:simulation_v2s}
\end{figure*}

\subsubsection{Experiments by Varying $P(Y|\mathbf{V})$ and Results}

Specifically, we vary $P(Y|\mathbf{V})$ via biased sample selection with a bias rate $r\in(0,1)$. 
For each sample, we select it with probability $r$ if its noisy features equal to response variable, that is $\mathbf{V}=Y$; otherwise we select it with probability $1-r$, where $r>.5$ corresponds to positive correlation between $Y$ and $\mathbf{V}$.
From the generation of $Y$, we know that, given stable features $\mathbf{S}$, noisy features $\mathbf{V}$ are independent of $Y$. 
But after biased sample selection, $\mathbf{V}$ could be correlated with response variable $Y$ conditional on $\mathbf{S}$ due to selection bias.
However, since $\mathbf{S}$ is an important factor in determining $Y$ and thus whether a unit is selected when its noisy features are high, controlling for $\mathbf{S}$ when estimating the correlation between $Y$ and $\mathbf{V}$ reduces that correlation.  Note that this data-generating environment violates Assumption \ref{asmp:stable} after the sample selection is introduced.  This creates an environment that is challenging for our algorithm, since
we do not have strict guarantees that it will work, and also serves to illustrate that even when the strong assumptions of theory fail,
the algorithm can still lead to substantial improvements.

To comprehensively and systematically evaluate stability of predictive models, we generate different synthetic data by varying sample size $n = \{1000,2000,4000\}$, dimensions of variables $p = \{20,40,80\}$, and bias rate $r = \{0.65,0.75,0.85\}$. 
We report the results of setting $\mathbf{S}\perp \mathbf{V}$ in Figure~\ref{fig:simulation_n_r} $\&$ \ref{fig:simulation_p}, and report the results of setting $\mathbf{S}\rightarrow \mathbf{V}$ and $\mathbf{V}\rightarrow \mathbf{S}$ in Figure~\ref{fig:simulation_s2v} and \ref{fig:simulation_v2s}, respectively.

From the results,  we have following observations and analysis:
\begin{itemize}[leftmargin=0.7cm]
\item \par \noindent The methods LR and DLR can not address the stable prediction problem in all settings. Since they can not remove the spurious correlation between noisy features and the response variable during model training, they often predict large effects of the noisy features, which leads to instability across environments.
\item \par \noindent Comparing with baselines, our method achieves a more stable prediction in different settings. The GBR method is more stable than LR, and our DGBR algorithm is more stable than DLR. The main reason is that the global balancing regularizer used in our models helps ensure accurate estimation of the effect of the stable features, and reduces the estimates of the effect of the noisy features.
\item \par \noindent Our DGBR model makes a more precise and stable prediction than GBR model across environments. 
The deep embedding model in DGBR algorithm makes global balancing weights less noisy and simplifies estimates of the effect of stable features. \hide{\todo{Note: we didn't really show these things e.g. by comparing the sum  of squares of balancing weights or the accuracy of the coefficients--could do in revision}}
\item \par \noindent By varying the sample size $n$, dimension of variables $p$ and training bias rate $r$, the RMSE of our DGBR algorithm is consistently stable and small across environments. Another important observation is that comparing with baselines, our algorithm makes more and more significantly improvement on prediction performance when $n$ is small relative to $p$ and $r$.
\end{itemize}

\begin{figure*}[tb]
\centering
\subfloat[Trained on $n=1000,p=20,r=0.85$ \label{fig:Pvs-RMSE_1000_20_85_s0v}]{
  \includegraphics[width=2.1in]{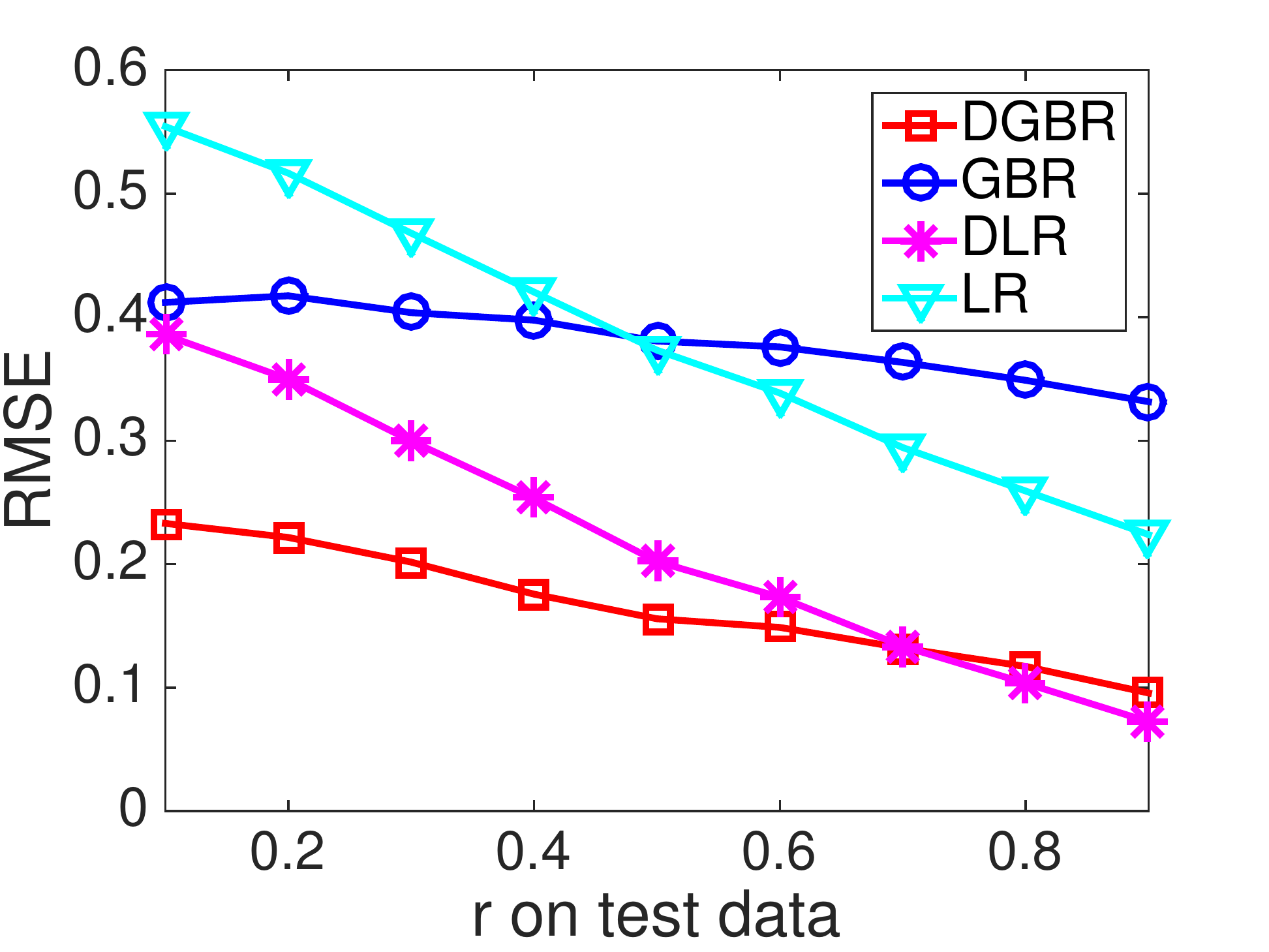}
}
\subfloat[Trained on $n=2000,p=20,r=0.85$\label{fig:Pvs-RMSE_2000_20_85_s0v}]{
  \includegraphics[width=2.1in]{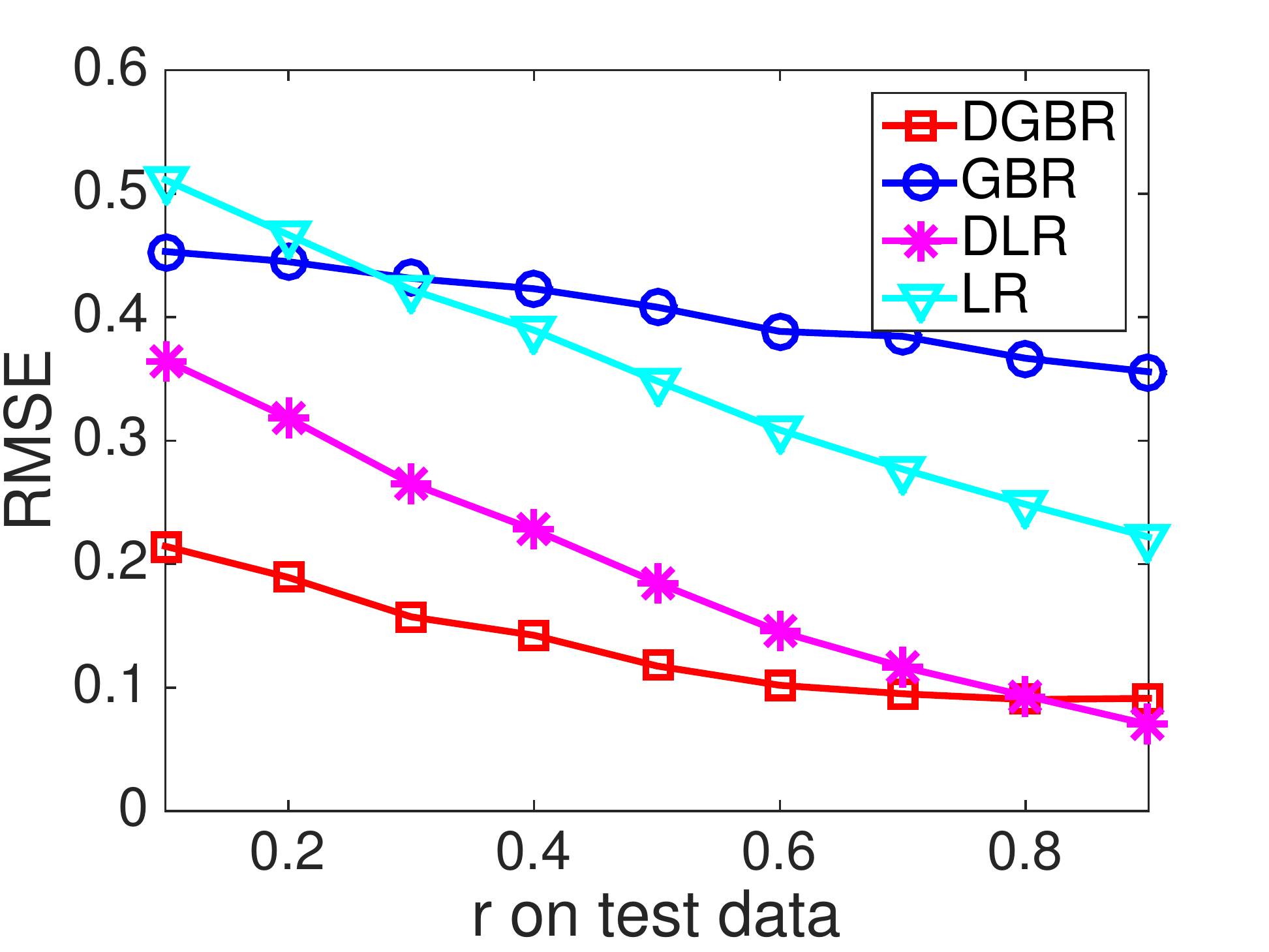}
}
\subfloat[Trained on $n=4000,p=20,r=0.85$\label{fig:Pvs-RMSE_4000_20_85_s0v}]{
  \includegraphics[width=2.1in]{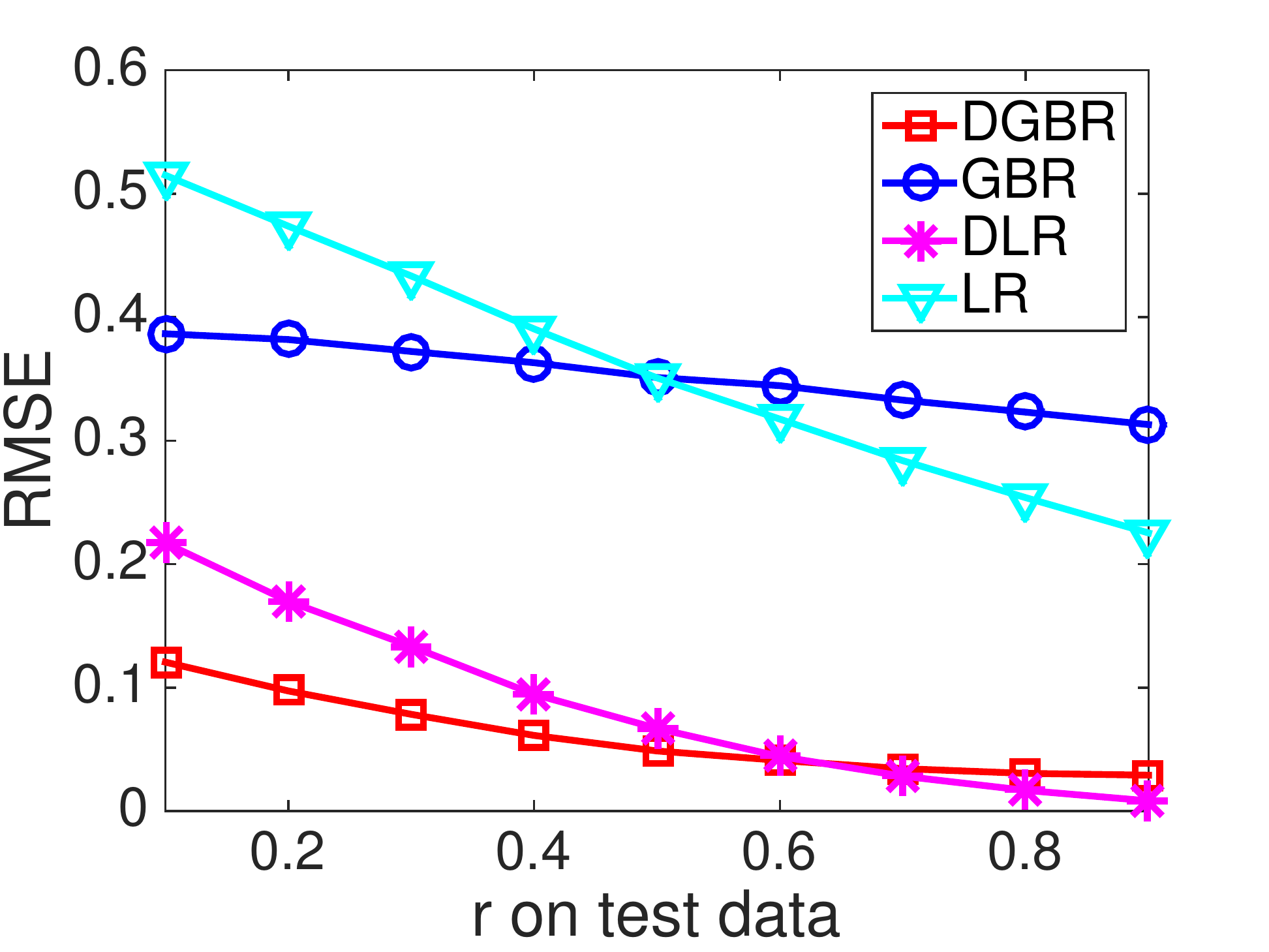}
}
\caption{\textbf{Setting $\mathbf{S}\perp \mathbf{V}$}: RMSE of outcome prediction on various testing datasets by varying sample size $n$ on training dataset.}
\label{fig:Pvs-simulation_s0v}
\end{figure*}

\begin{figure*}[tb]
\centering
\subfloat[Trained on $n=1000,p=20,r=0.85$ \label{fig:Pvs-RMSE_1000_20_85_s2v}]{
  \includegraphics[width=2.1in]{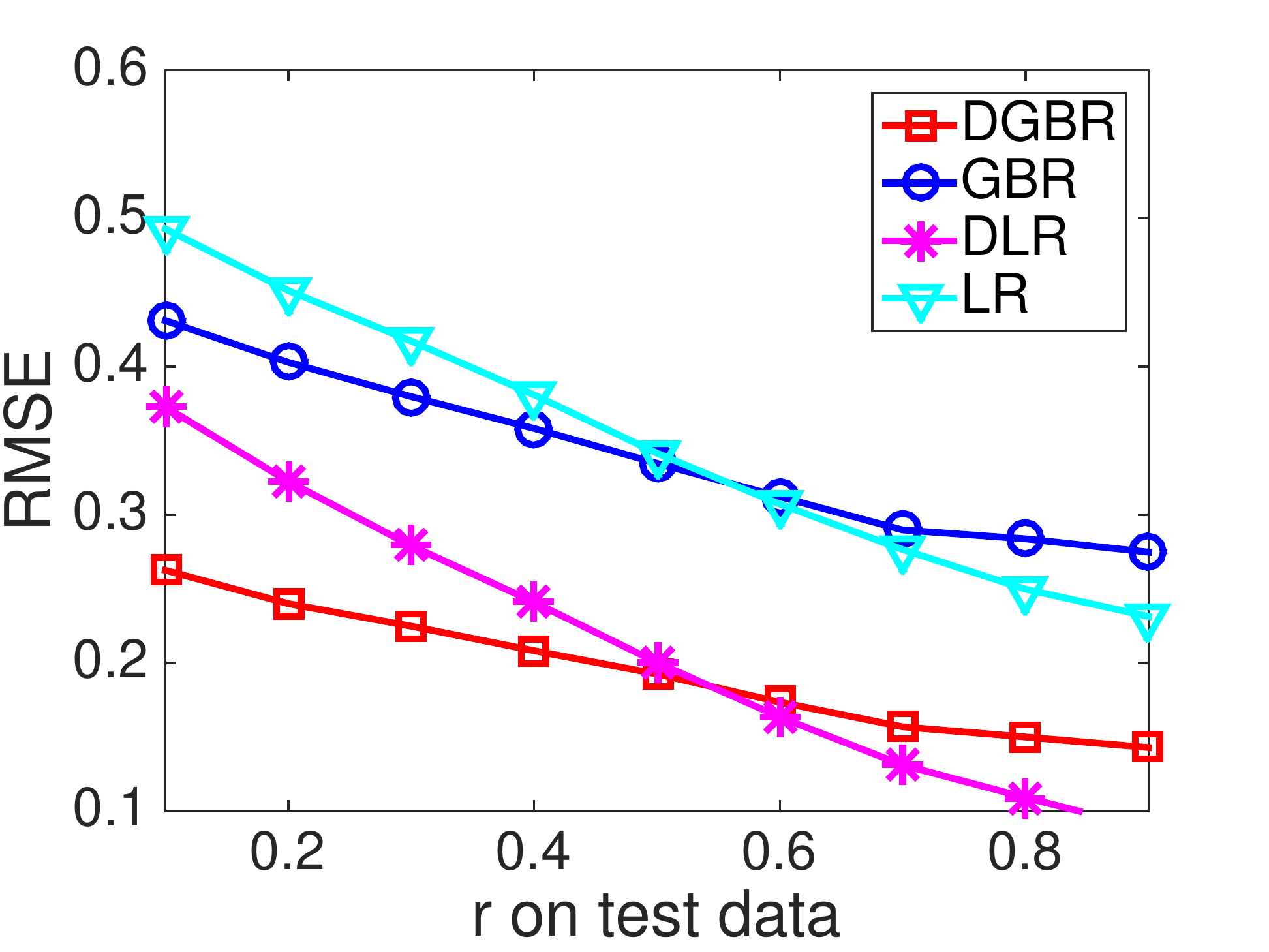}
}
\subfloat[Trained on $n=2000,p=20,r=0.85$\label{fig:Pvs-RMSE_2000_20_85_s2v}]{
  \includegraphics[width=2.1in]{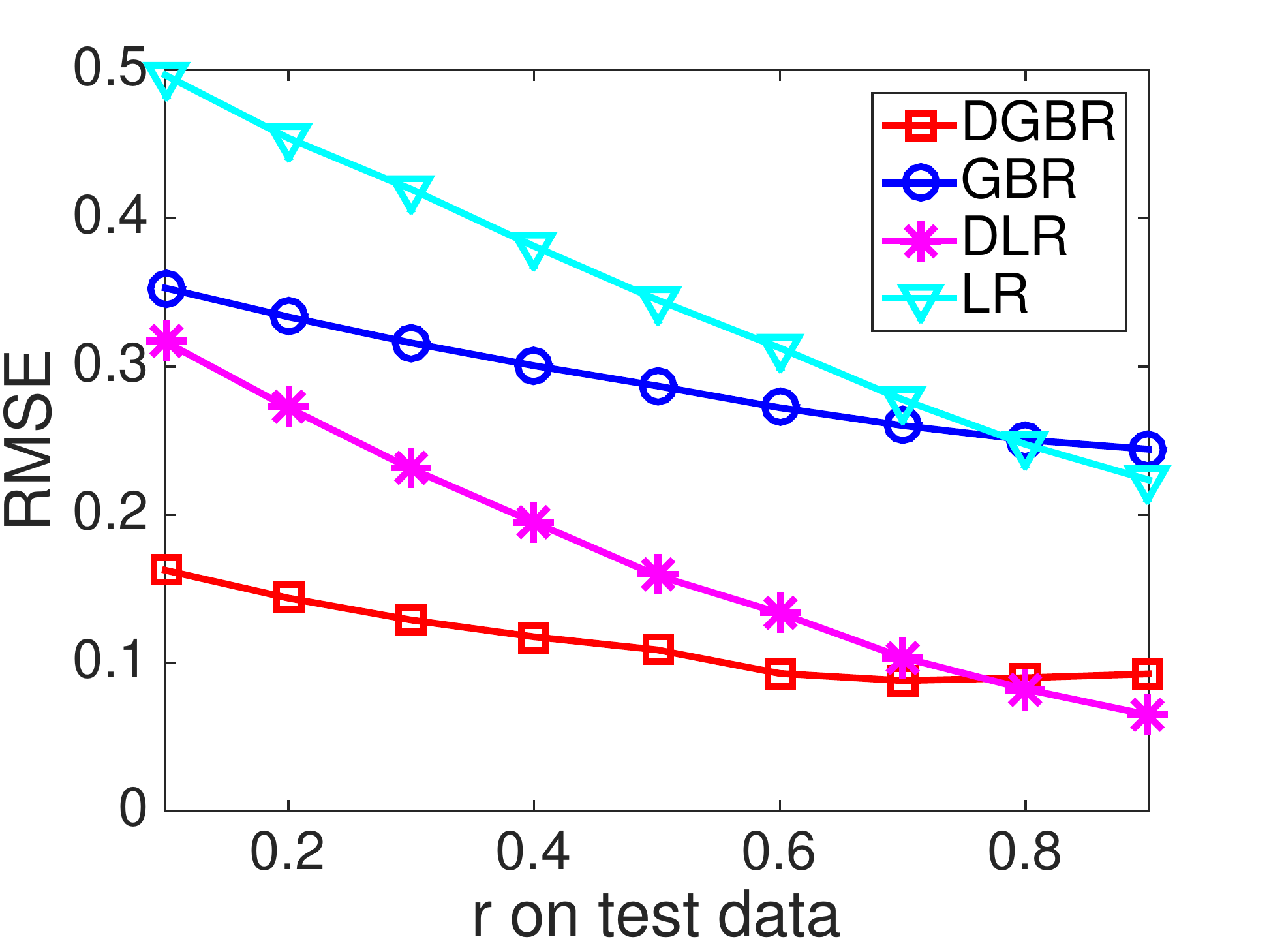}
}
\subfloat[Trained on $n=4000,p=20,r=0.85$\label{fig:Pvs-RMSE_4000_20_85_s2v}]{
  \includegraphics[width=2.1in]{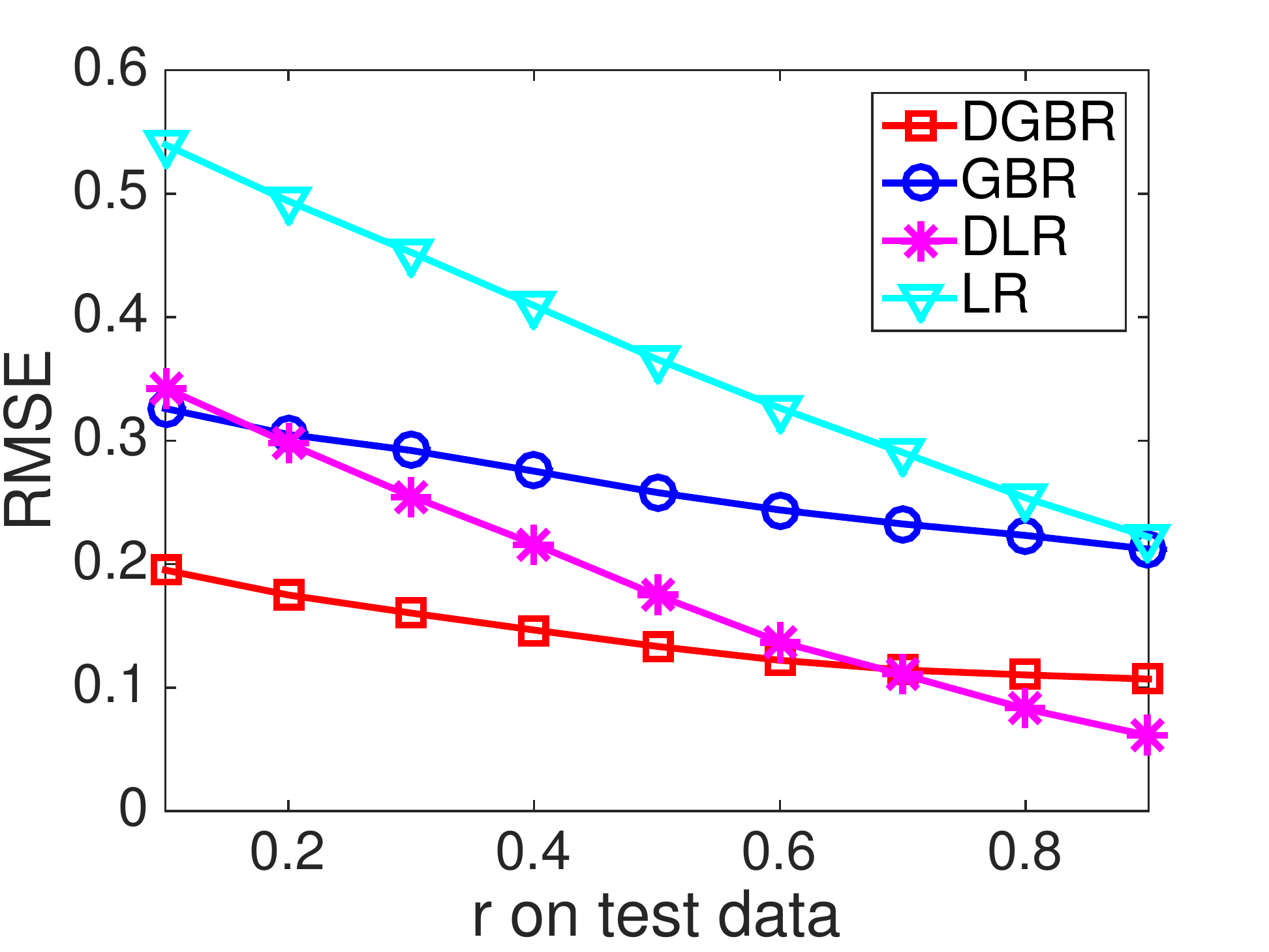}
}
\caption{\textbf{Setting $\mathbf{S}\rightarrow \mathbf{V}$}: RMSE of outcome prediction on various testing datasets by varying sample size $n$ on training dataset.}
\label{fig:Pvs-simulation_s2v}
\end{figure*}

\begin{figure*}[tb]
\centering
\subfloat[Trained on $n=1000,p=20,r=0.85$ \label{fig:Pvs-RMSE_1000_20_85_v2s}]{
  \includegraphics[width=2.1in]{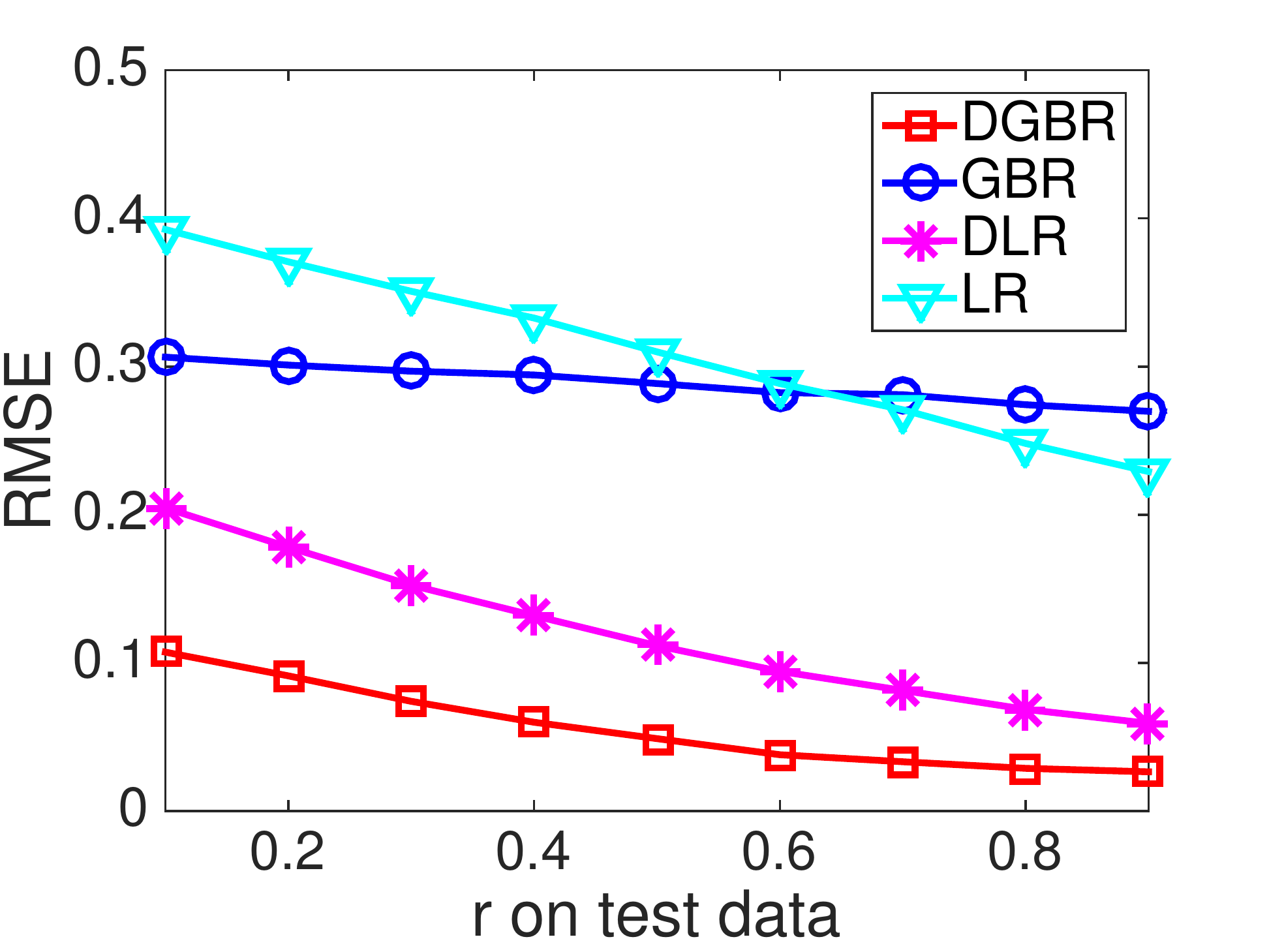}
}
\subfloat[Trained on $n=2000,p=20,r=0.85$\label{fig:Pvs-RMSE_2000_20_85_v2s}]{
  \includegraphics[width=2.1in]{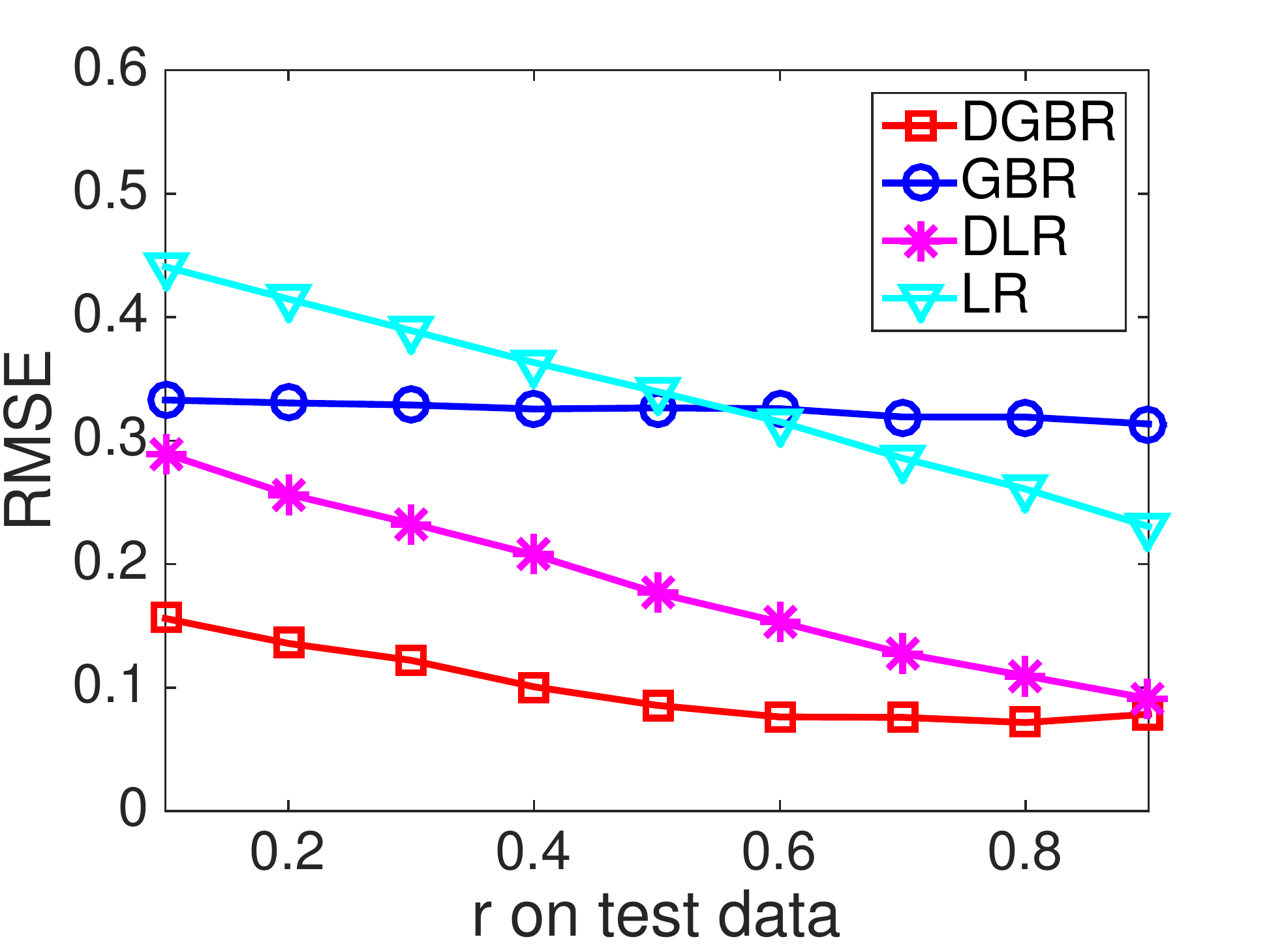}
}
\subfloat[Trained on $n=4000,p=20,r=0.85$\label{fig:Pvs-RMSE_4000_20_85_v2s}]{
  \includegraphics[width=2.1in]{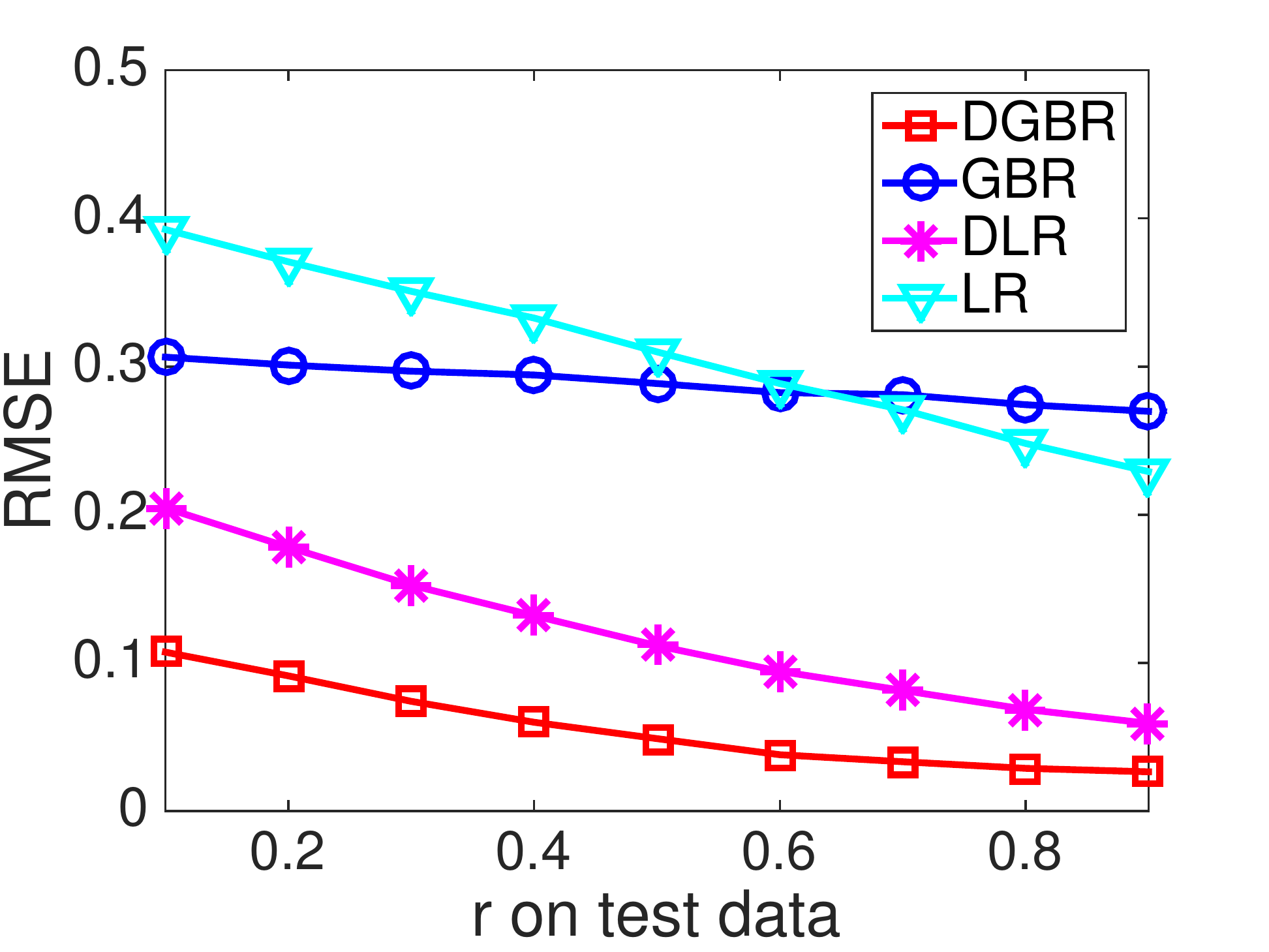}
}
\caption{\textbf{Setting $\mathbf{V}\rightarrow \mathbf{S}$}: RMSE of outcome prediction on various testing datasets by varying sample size $n$ on training dataset.}
\label{fig:Pvs-simulation_v2s}
\end{figure*}

\subsubsection{Experiments by Varying $P(\mathbf{V}|\mathbf{S})$ and Results}

In this sub experiment, we generate the response variable $Y$ with the function $g$ as following:
\begin{eqnarray}
\nonumber Y\!\!\!&=&\!\!\!\Scale[0.88]{1/(1+\exp(-\sum_{\mathbf{X}_{\cdot,i}\in \mathbf{S}_l}\alpha_i\cdot \mathbf{X}_{\cdot,i}-\sum_{\mathbf{X}_{\cdot,j}\in \mathbf{S}_n}\beta_j\cdot \mathbf{X}_{\cdot,j}\cdot \mathbf{X}_{\cdot,j+1}))},
\end{eqnarray}
where $\alpha_i = (-1)^{i}$ and $\beta_j = p/2$.  And to make $Y$ binary, we set $Y=1$ when $Y\geq0.5$, otherwise $Y=0$.

Here, we vary $P(\mathbf{V}|\mathbf{S})$ also via biased sample selection with a bias rate $r\in(0,1)$. 
Specifically, for each sample, we select it with probability $r$ if its noisy features equal to a mediate variable $\mathbf{Z}$, that is $\mathbf{V}_i=\mathbf{Z}_i$; otherwise we select it with probability $1-r$, where $\mathbf{Z}_i = \sum_{j=i}^{i+5}(-1)^{j}\cdot \mathbf{S}_j$, and $r>.5$ corresponds to positive correlation between $\mathbf{Z}_i$ and $\mathbf{V}_i$.
The same, from the generation of $Y$, we know that, given stable features $\mathbf{S}$, noisy features $\mathbf{V}$ are independent of $Y$. 
After biased sample selection, $\mathbf{V}$ could be highly correlated with response variable $Y$, but it is still independent with $Y$ conditional on stable features $\mathbf{S}$.
Thus, the Assumption \ref{asmp:stable} is valid under this setting.
Therefore, with identifying stable features $\mathbf{S}$, our algorithm can make a stable prediction across environments.

\hide{
However, since $\mathbf{S}$ is an important factor in determining $Y$ and thus whether a unit is selected when its noisy features are high, controlling for $\mathbf{S}$ when estimating the correlation between $Y$ and $\mathbf{V}$ reduces that correlation. 
}

In this part experiments, we generate different synthetic data by varying sample size $n=\{1000,2000,4000\}$.
We report the experimental results under settings $\mathbf{S}\perp \mathbf{V}$, $\mathbf{S}\rightarrow \mathbf{V}$, and $\mathbf{V}\rightarrow \mathbf{S}$ in Figure~\ref{fig:Pvs-simulation_s0v}, \ref{fig:Pvs-simulation_s2v} $\&$ \ref{fig:Pvs-simulation_v2s} , respectively. 
From these results, we can obtain the same observations that (i) The traditional classification methods LR and DLR can not address the stable prediction problem in all settings, (ii) Comparing with baselines, our method achieves a more stable prediction in different settings. The GBR method is more stable than LR, and our DGBR algorithm is more stable than DLR, and (iii) Our DGBR model makes a more precise and stable prediction than GBR model across environments.

\subsubsection{Visualization of Embedded Features}

\begin{figure}[t]
\centering
\includegraphics[width=2.4in]{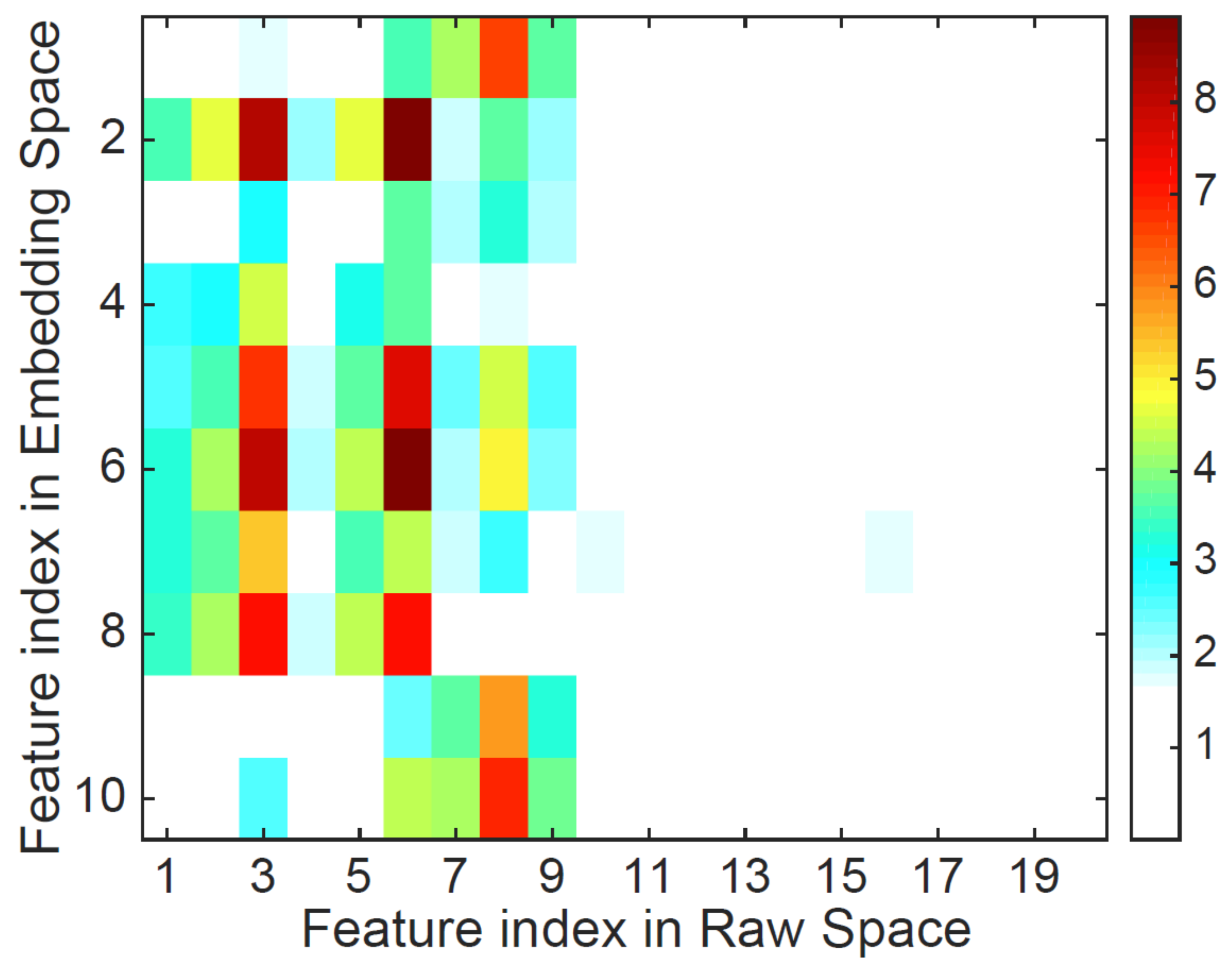}
\caption{Embedding weights in our DGBR algorithm, where $\mathbf{X}_{\cdot,1}, \cdots, \mathbf{X}_{\cdot,9}$ are stable features $\mathbf{S}$ and others are noisy features $\mathbf{V}$. It illustrates that our DGBR can achieve $Y$ and $\mathbf{V}$ are independent, since the features in embedding space have few information of noisy feature $\mathbf{V}$ in raw sapce.}
\label{fig:raw2embedding}
\end{figure}

In Figure~\ref{fig:raw2embedding}, we also show that the embedded features in our DGBR algorithm have few information of noisy features $\mathbf{V}$ from raw space. This demonstrates that our DGBR could approximately preserve the independence between $Y$ and $\mathbf{V}$ of global balancing, thus can identify stable features and make a stable prediction across unknown environments.

\subsubsection{Parameter Analysis}

\begin{figure*}[tb]
\centering
\subfloat[\label{fig:PA_balancing}]{
  \includegraphics[width=2.1in]{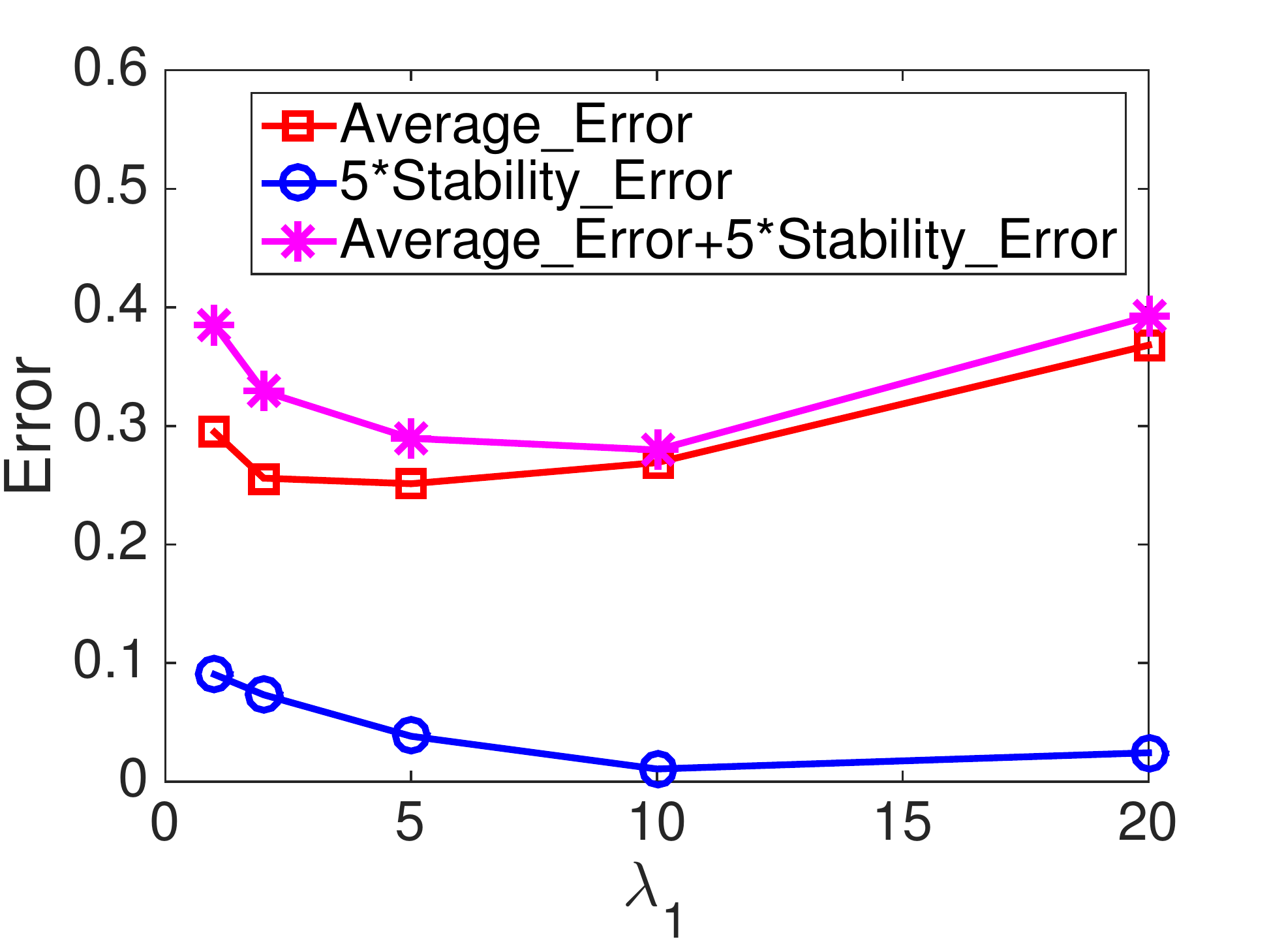}
}
\subfloat[\label{fig:PA_autoencoder}]{
  \includegraphics[width=2.1in]{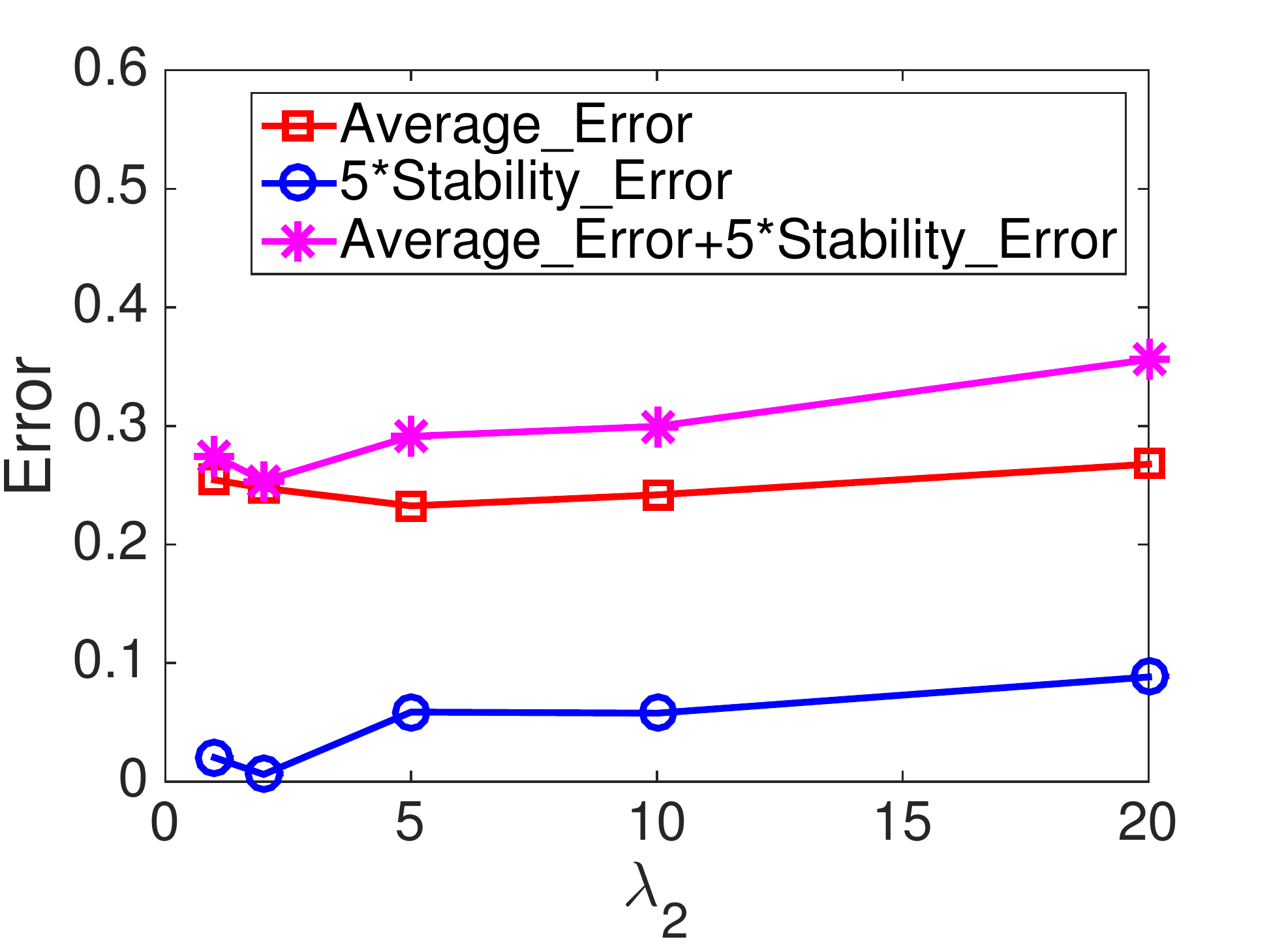}
}
\subfloat[\label{fig:PA_weights}]{
  \includegraphics[width=2.1in]{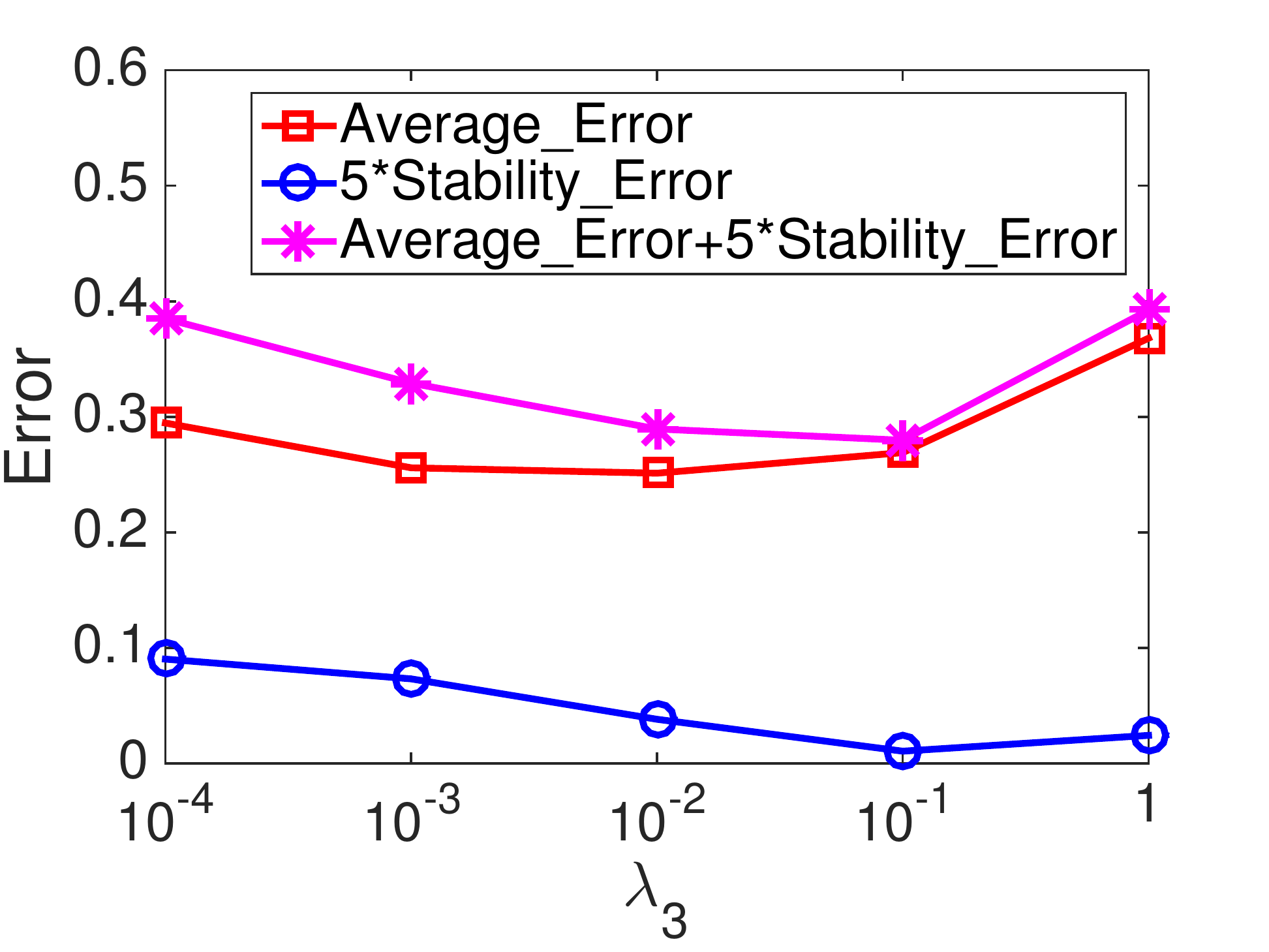}
}
\caption{The effect of hyper-parameters $\lambda_1$, $\lambda_2$, and $\lambda_3$.}
\label{fig:Parameter_Analysis}
\end{figure*}

In our DGBR algorithm, we have some hyper-parameters, such as $\lambda_1$ for constraining the error of global balancing, $\lambda_2$ constraining the loss of auto-encoder term, $\lambda_3$ constraining the variance of the global sample weights, and so on. 
In this section, we investigate how these hyper-parameters affect the results.
We tuned these parameters in our experiments with cross validation by grid searching, based on our constructed validation data.
We report the $Average\_Error$, $5*Stability\_Error$, and $Average\_Error+5*Stability\_Error$ on a synthetic dataset under setting $\mathbf{S}\perp \mathbf{V}$ with $n=2000$ and $p=20$.

\textbf{Tradeoffs between prediction and covariate balancing:} We first show how the hyper-parameter $\lambda_1$ affects the performance in Figure~\ref{fig:PA_balancing}.
The parameter of $\lambda_1$ restrain the error of global balancing. 
We can see that initially the value of both $Average\_Error$ and $Stability\_Error$ decreases when the value of $\lambda_1$ increases. This is intuitive as the data could be more balanced with the increased value of $\lambda_1$, and balanced data could help to identify stable features and remove some noise for more precise prediction. However, when the value of $\lambda_1$ increases further, the value of $Stability\_Error$ decreases, but the value of $Average\_Error$ starts to increase slowly. Large value of $\lambda_1$ makes the algorithm concentrate on global balancing component at the expense of the prediction component.
Both prediction and global balancing components are essential for stable prediction.

\textbf{Feature representation:}
Here, we show how the hyper-parameter $\lambda_2$ affects the results in Figure~\ref{fig:PA_autoencoder}. The value of $Average\_Error$ decreases with $\lambda_2$, since a high value of $\lambda_2$ leads to more accurate prediction. Initially, $Stability\_Error$ decreases with $\lambda_2$, but it starts to increase when $\lambda_2 \geq 5$. 
It is important to choose an appropriate value of $\lambda_2$ for learning feature representation, but our method is not very sensitive to this parameter.

\textbf{The variance of global sample weights:}
Figure \ref{fig:PA_weights} shows how the value of $\lambda_3$ affect performance.
Both the value of $Average\_Error$ and $Stability\_Error$ decrease when the value of $\lambda_3$ increases, since appropriate constraints on the variance of global sample weights could prevent some samples from becoming dominate in whole data, and thus help to improve the precision and robustness of prediction. However, when the value of $\lambda_3$ grows too large, those errors increase. Too large value of $\lambda_3$ could lead the learned global sample weight to fail to make appropriate tradeoffs between balancing and prediction.  

\subsection{Experiments on Real World Data}

\subsubsection{Online Advertising Dataset}

The real online advertising dataset we used is collected from Tencecnt WeChat App\footnote{http://www.wechat.com/en/} during September 2015.
In WeChat, each user can share (receive) posts to (from) his/her friends as like the Twitter and Facebook.
Then the advertisers could push their advertisements to users, by merging them into the list of the user's wallposts.
For each advertisement, there are two types of feedbacks: ``Like'' and ``Dislike''.
When the user clicks the ``Like'' button, his/her friends will receive the advertisements with this action.

The online advertising campaign used in our paper is about the LONGCHAMP handbags
for young women.\footnote{http://en.longchamp.com/en/womens-bags}
This campaign contains 14,891 user feedbacks with Like and 93,108 Dislikes.
For each user, we have their features including (1) demographic attributes, such as age, gender,
(2) number of friends, (3) device (iOS or Android), and (4) the user settings on WeChat, for example, whether allowing strangers
to see his/her album  and whether installing the online payment service.

%\par \noindent \textbf{Experimental Settings.}
\par \textbf{Experimental Settings.} In our experiments, we set $Y_i=1$ when user $i$ likes the ad, otherwise $Y_i=0$.
For non-binary user features, we dichotomize them around their mean value.
Considering the overlap assumption in assumption \ref{asmp:overlap}, we only preserve users' features which satisfied $0.2 \leq\frac{\#\{x=1\}}{\#\{x=1\}+\#\{x=0\}} \leq 0.8$.
All the predictors and response variable in our experiment are binary.
\hide{
Finally, our experimental dataset contains 19 user features as predictor variables and user feedback as outcome variable, all of them are binary.
}

\begin{figure}[t]
\centering
\subfloat[RMSE\label{fig:RMSE_ad}]{
  \includegraphics[width=1.8in]{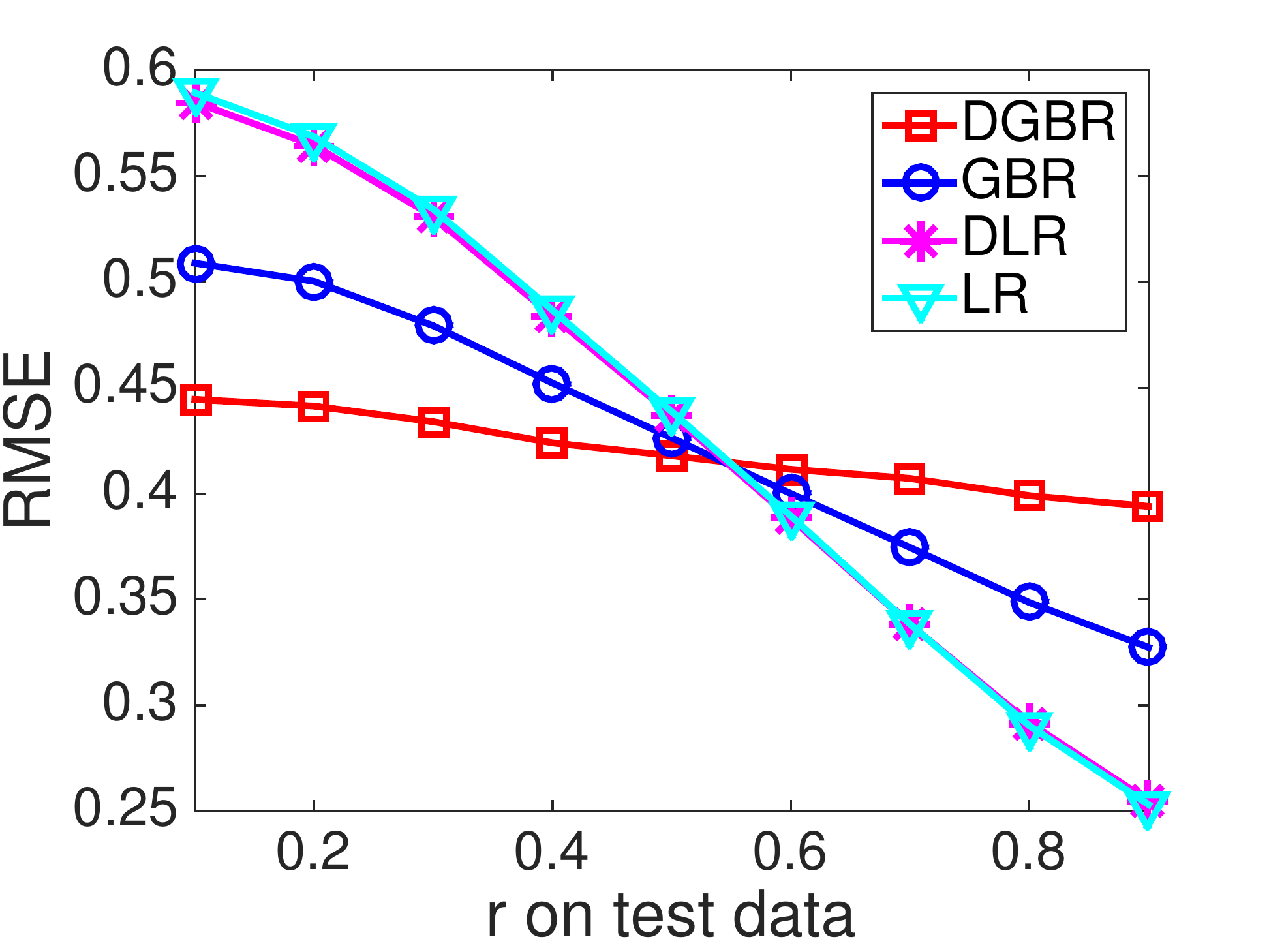}
}
\subfloat[Average\_Error \& Stability\_Error\label{fig:Accuracy_Stability}]{
  \includegraphics[width=1.8in]{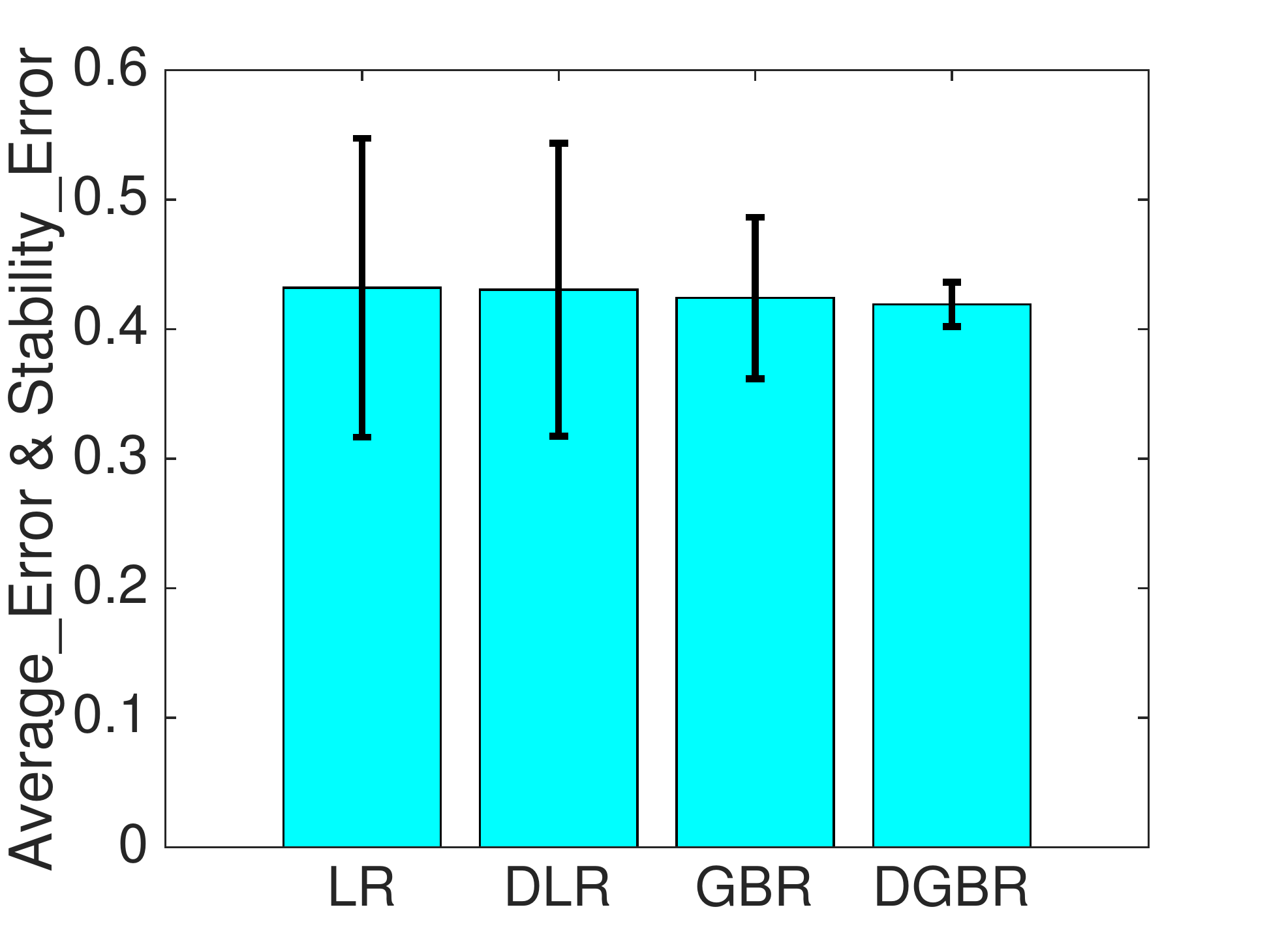}
}
\caption{Our proposed DGBR algorithm makes the most stable prediction on whether user will like or dislike an advertisement.}
\label{fig:prediction_vary_nf}
\end{figure}

\begin{figure}[t]
\centering
\subfloat[Predictor  $mail\ plugin$\label{fig:RMSE_us_14}]{
  \includegraphics[width=1.8in]{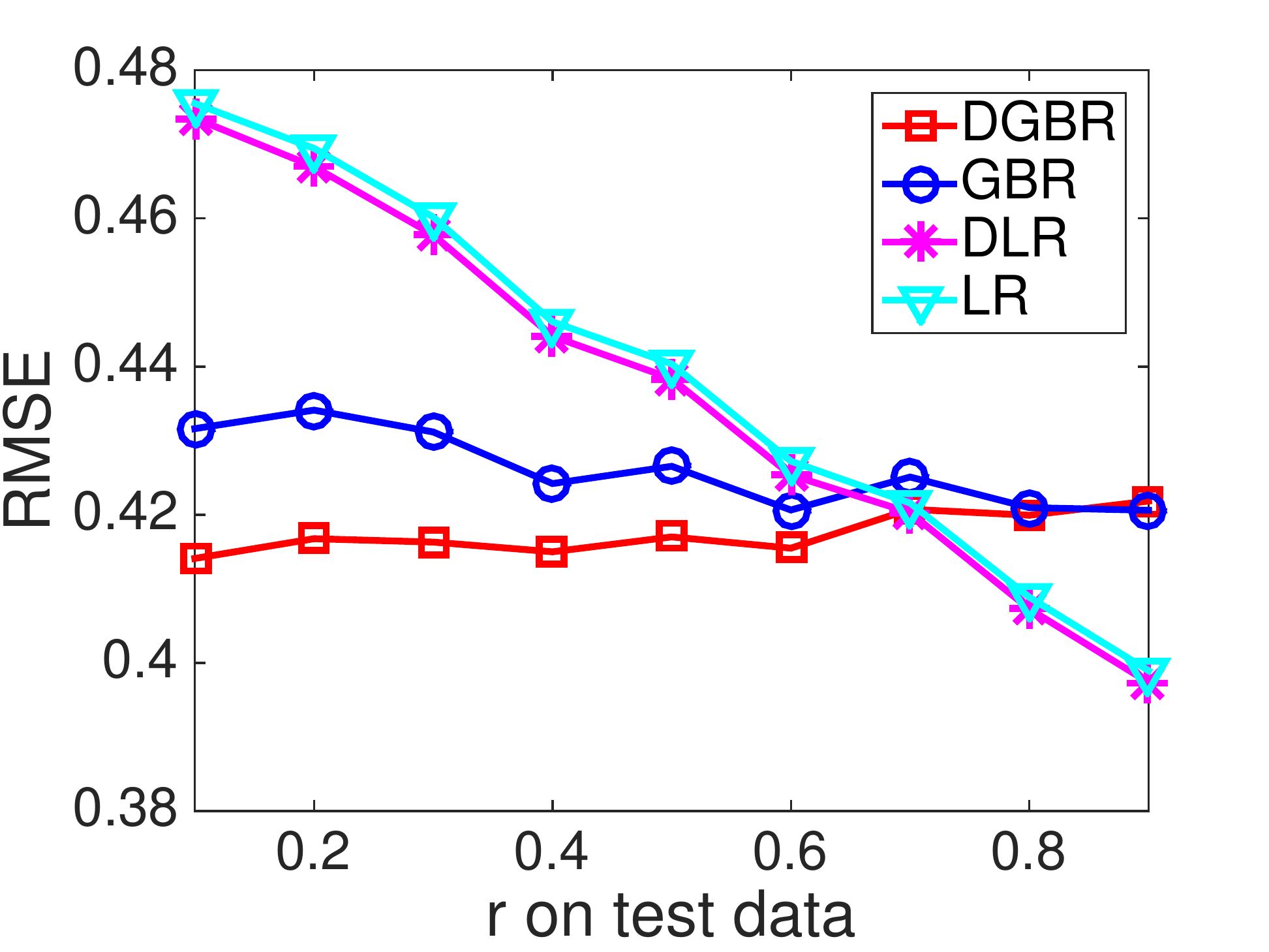}
}
\subfloat[Predictor $bottle\ plugin$\label{fig:RMSE_us_19}]{
  \includegraphics[width=1.8in]{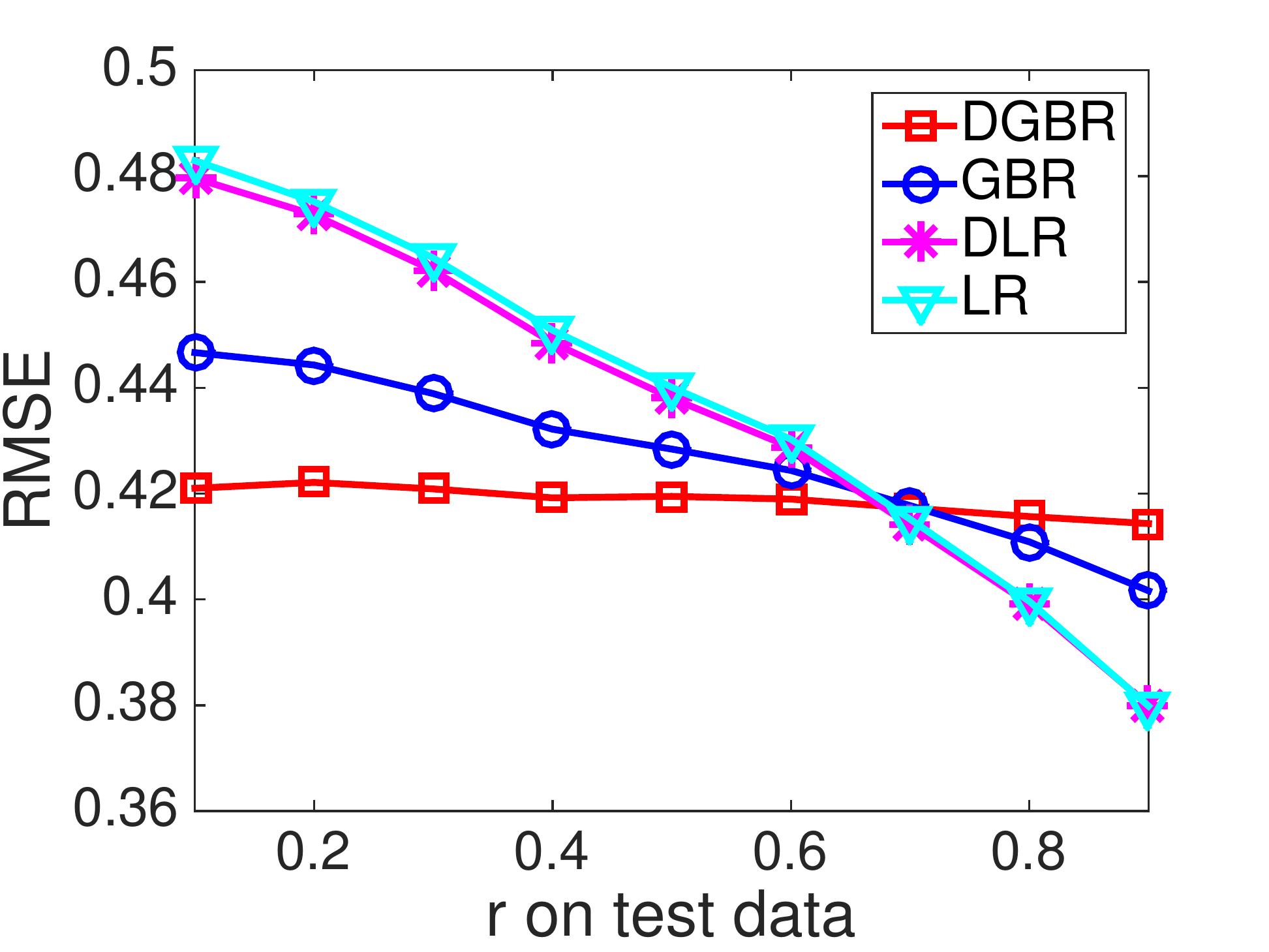}
}
\caption{RMSE of outcome prediction by varying bias rate $r$ between one predictor and outcome.}
\label{fig:prediction_vary_one_nf}
\end{figure}

In order to test the performance of our proposed model, we execute the experiments with two different settings. 
The first experimental setting is similar with the setting on synthetic dataset.
We generate different environments by biased sample selection via bias rate $r$.
In this setting, we choose those features which have no associations with outcome as noisy features for biased sample selection.
In second experimental setting, we generate the various environments by dataset separation with users' feature. Specifically, we separate the whole dataset into 4 parts by users' age, including $Age \in [20,30)$, $Age \in [30,40)$, $Age \in [40,50)$ and $Age \in [50,100)$.

\hide{
\begin{figure}[t]
\centering
\includegraphics[width=2.6in]{figures/RMSE_ad}
%\vspace{-0.1in}
\caption{Our proposed DGBR algorithm makes the most stable prediction on whether user will like or dislike an advertisement across environments.}
\label{fig:prediction_vary_nf}
\end{figure}
}

\begin{figure}[t]
\centering
\includegraphics[width=2.4in]{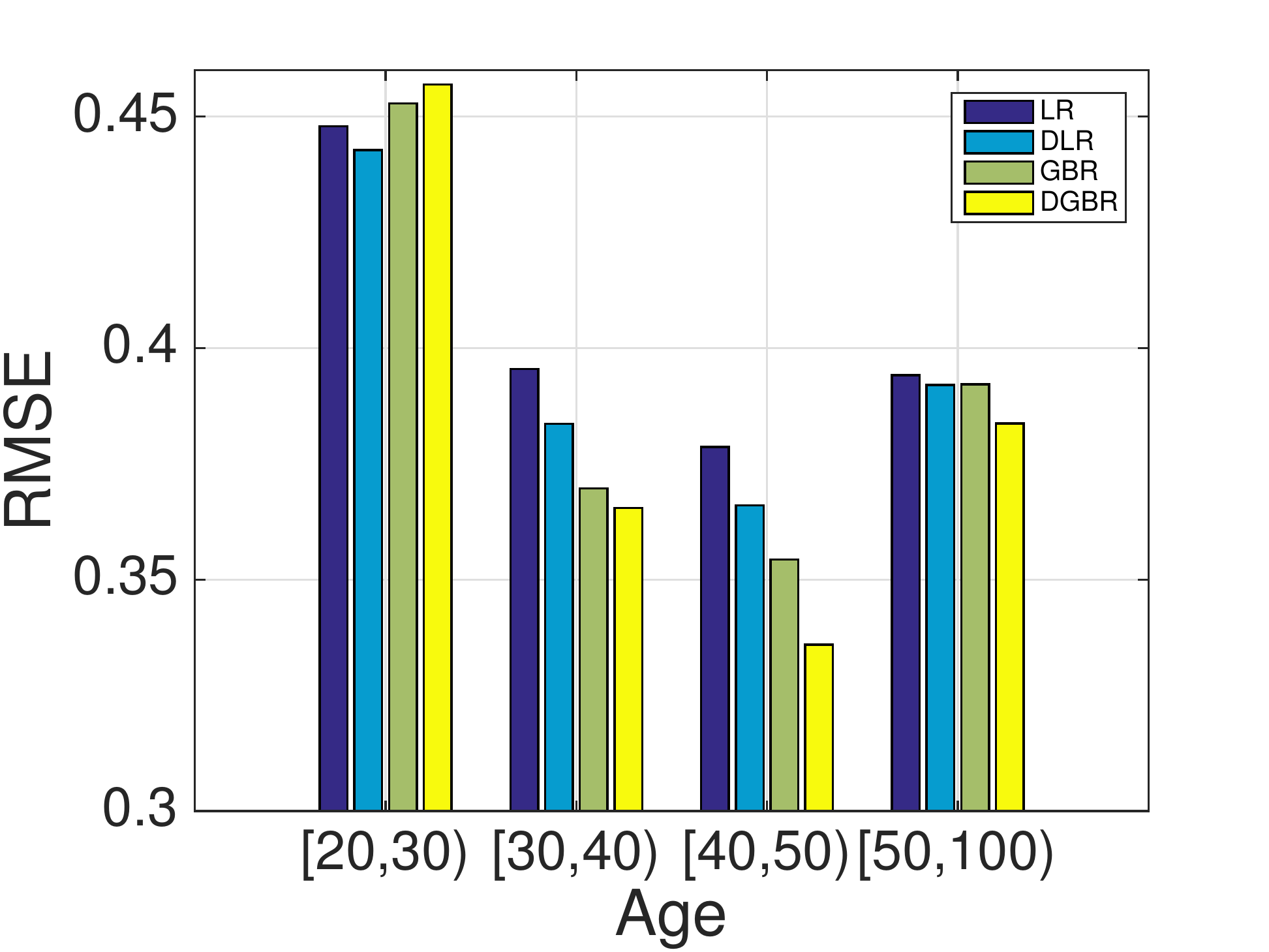}
\vspace{-0.1in}
\caption{Prediction across environments separated by age. The models are trained on dataset where uses' $Age\in [20,30)$, but tested on various datasets with different users' age range. }
\label{fig:prediction_via_separation}
\end{figure}

\par \textbf{Results on Setting 1.} 
Based on the first experimental setting, we plot the results in Figure~\ref{fig:prediction_vary_nf} and Figure~\ref{fig:prediction_vary_one_nf}.
Under this setting, we trained all algorithms on a dataset with bias rate $r=0.6$ for four noisy features. 
Then we test the performance of our proposed algorithm and baselines on various test data with different bias rate on these four noisy features, and report the $RMSE$ in Fig. \ref{fig:RMSE_ad}. 
To explicitly demonstrate the advantage of our proposed algorithm, we plot the $Average\_Error$ and $Stability\_Error$ as defined in Eq. (\ref{metrics:acc}) and (\ref{metrics:stb}) in Fig. \ref{fig:Accuracy_Stability}.
We further generate additional test data by varying bias rate $r$ on other features, with results in Fig.~\ref{fig:prediction_vary_one_nf}. 
Fig.~\ref{fig:RMSE_us_14} and \ref{fig:RMSE_us_19} show that DGBR makes the most stable prediction across test data.
Overall, the results and their interpretation are very similar to the simulation experiments. 

\begin{figure}[t]
\centering
\includegraphics[width=2.4in]{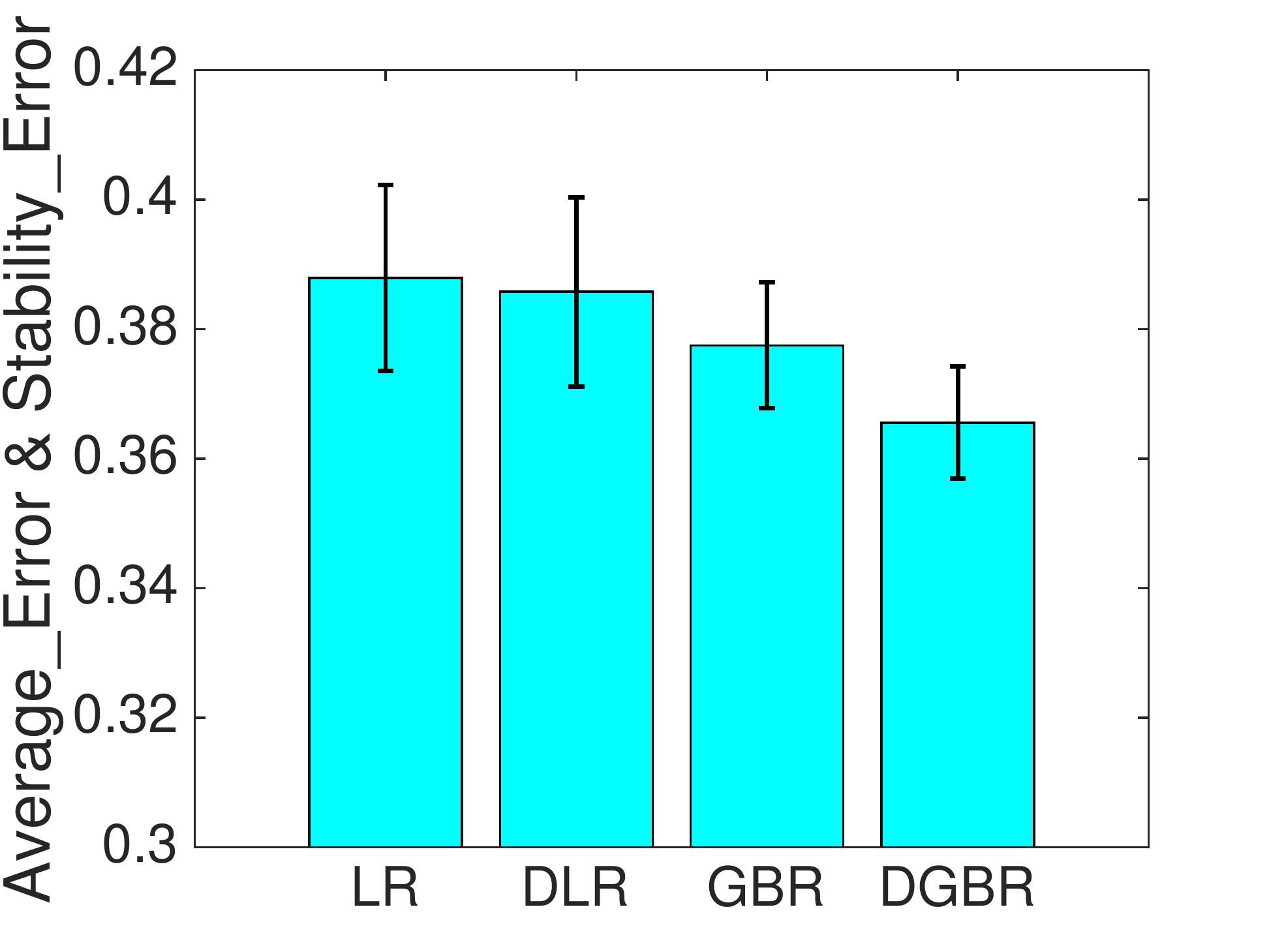}
\caption{$Average\_Error$ and $Stability\_Error$ of all algorithms across environments after fixing $P(Y)$ as the same with its value on global dataset.}
\label{fig:stability_via_separation}
\end{figure}

\par \textbf{Results on Setting 2.} 
Based on the second experimental setting, we plot the results in Figure~\ref{fig:prediction_via_separation}, where we separate the dataset into four environments by users' age, including $Age \in [20,30)$, $Age \in [30,40)$, $Age \in [40,50)$ and $Age \in [50,100)$. 
We trained all algorithms on dataset where users' $Age \in [20,30)$, then we test them on all the four environments.
From Figure \ref{fig:prediction_via_separation}, we could find that our DGBR algorithm achieves comparable result to the baselines on test data with users' $Age \in[20,30)$, where the distributions of variables are similar with the one on the training data.
On such test data, the spurious correlation between noisy features and outcome could help baselines make a more precise prediction.
%Hence, the baselines make a more precise prediction on test data with users' $Age \in[20,30)$.
While on the other three parts of test dataset, whose distributions are different with training dataset, our DGBR algorithm obtains the best prediction performance. 
The main reason is that our algorithm can reduce or even remove the spurious effect of noisy features on outcome and find out the stable features for stable prediction.

We can infer that the stability of DGBR algorithm is not as good as baselines in Fig.~\ref{fig:prediction_via_separation}; this occurs because the distribution of outcome $P(Y)$ varied across these four environments.  After we fixed $P(Y)$  by data sampling on the outcome with $P(Y=1) = \frac{14,891}{14,891+93,108}$ on the global dataset, we report the $Average\_Error$ and $Stability\_Error$ of all algorithms across four environments in Figure \ref{fig:stability_via_separation}.
And we find that as the $P(Y)$ is stable, DGBR outperforms baselines.

\section{Conclusion}
In this paper, we focus on how to make a stable prediction across unknown environments, where the data distribution of unknown environments might be very different with the distribution of training data.
We argued that most previous methods for addressing stable prediction are deficient because either they need the distribution of test data as prior knowledge or rely on diversity of training datasets from different environments.
Therefore, we propose a Deep Global Balancing Regression algorithm for stable prediction across environments by jointly optimizing the deep auto-encoder model and global balancing model.
The global balancing model can identify the causal relationship between predictor variables and response variable, while the deep auto-encoder model is designed for capturing the non-linear structure among variables and making global balancing easier and less noisy.
We prove that our algorithm can make a stable prediction from both theoretical analysis and empirical experiments.
The experimental results on both synthetic and real world datasets show that our DGBR algorithm outperforms the baselines for stable prediction across unknown environments.

\hide{
In this paper, we did not do sensitivity analysis on how the validation dataset are constructed. We leave a further exploration of this issue for future work.
}

% Acknowledgements should go at the end, before appendices and references

\acks{This work is supported by the National Program on Key Basic
Research Project (No. 2015CB352300), and the National
Natural Science Foundation of China (No. 61772304, No.
61521002, No. 61531006, and No. U1611461). Thanks for
the research fund of Tsinghua-Tencent Joint Laboratory for
Internet Innovation Technology, and the Young Elite Scientist
Sponsorship Program by CAST.
Ruoxuan Xiong's research was supported by Charles and Katharine Lin Graduate Fellowship.
Bo Li's research was supported by the
Tsinghua University Initiative Scientific Research Grant, No.
20165080091; National Natural Science Foundation of China,
No. 71490723 and No. 71432004; Science Foundation of
Ministry of Education of China, No. 16JJD630006. 
Susan Athey's research was supported by the Office of Naval Research under grant N00014-17-1-2131 and the Sloan foundation. }

% Manual newpage inserted to improve layout of sample file - not
% needed in general before appendices/bibliography.

%\newpage

\appendix

\section{Proof of Lemma \ref{pro:population_overlap}}
\label{Appendix:A}

\begin{proof}
Assume treatment variable is $T=\mathbf{X}_{i,j}$ and $\mathbf{X}_{i,-j}$ are covariates. From the propensity score is bounded away from zero and one, and
$\exists (x_1^0, \cdots, x_{j-1}^0, x_{j+1}^0, \cdots, x_{p}^0)$, $P(\mathbf{X}_{i,-j} = (x_1^0, \cdots, x_{j-1}^0, x_{j+1}^0, \cdots, x_{p}^0)) > 0 $, from 

\begin{eqnarray*}
&& P(\mathbf{X}_i = (x_1^0, \cdots, x_{j-1}^0, x_j, x_{j+1}^0, \cdots, x_{p}^0))  \\
&=& P(\mathbf{X}_{i,-j} =(x_1^0, \cdots, x_{j-1}^0, x_{j+1}^0, \cdots, x_{p}^0)) \cdot \\
&&P(\mathbf{X}_{i,j} = x_j| \mathbf{X}_{i,-j} = (x_1^0, \cdots, x_{j-1}^0, x_{j+1}^0, \cdots, x_{p}^0))
\end{eqnarray*}
we have 
\begin{eqnarray}
0 < P(\mathbf{X}_i = (x_1^0, \cdots, x_{j-1}^0, x_j, x_{j+1}^0, \cdots, x_{p}^0) < 1 \label{tmp1} 
\end{eqnarray}
for $x_j = 0$ or $x_j = 1$.

Next is to proof $\forall x$ ($x$ is binary), $$0 < P(\mathbf{X}_i=x) < 1$$ from inequality (\ref{tmp1}). Let $k \neq j$, from
\begin{eqnarray*}
&& P(\mathbf{X}_i = (x_1^0, \cdots, x_{j-1}^0, x_j, x_{j+1}^0, \cdots, x_{p}^0))  \\
&=& P(\mathbf{X}_{i,-k} = (x_1^0, \cdots, x_{k-1}^0, x_{k+1}^0, \cdots, x_{p}^0)) \cdot \\
&&P(\mathbf{X}_{i,k} = x_k^0| X_{i,-k} = (x_1^0, \cdots, x_{k-1}^0, x_{k+1}^0, \cdots, x_{p}^0))
\end{eqnarray*}
and $0< P(\mathbf{X}_i = (x_1^0, \cdots, x_{j-1}^0, x_j, x_{j+1}^0, \cdots, x_{p}^0)) < 1$, we have $$P(\mathbf{X}_{i,-k} = (x_1^0, \cdots, x_{k-1}^0, x_{k+1}^0, \cdots, x_{p}^0))  > 0$$ Furthermore, $\mathbf{X}_{i,k}$ can also be viewed as the treatment variable, so $$0 < P(\mathbf{X}_{i,k} = x_k^0| \mathbf{X}_{i,-k} = (x_1^0, \cdots, x_{k-1}^0, x_{k+1}^0, \cdots, x_{p}^0)) < 1$$, and therefore, 
$$\Scale[0.9]{0 < P(\mathbf{X}_{i,k} = 1 -x_k^0| \mathbf{X}_{i,-k} = (x_1^0, \cdots, x_{k-1}^0, x_{k+1}^0, \cdots, x_{p}^0)) < 1}$$
We have  (without loss of generality, we assume $k < j$), $\forall x_k, x_j$
$$
\Scale[0.85]{0 < P(\mathbf{X}_{i} = (x_1^0, \cdots, x_{k-1}^0, x_k, x_{k+1}^0, \cdots,  x_{j-1}^0, x_j, x_{j+1}^0, \cdots, x_{p}^0) < 1}.
$$
We repeat the above for all other variables one by one, we have $\forall x$, 
$$
0 < P(\mathbf{X}_{i} = x) < 1
$$
\end{proof}

% you can choose not to have a title for an appendix
% if you want by leaving the argument blank
\section{Proof of Proposition \ref{prop:svindep}}
\label{Appendix:B}
\begin{proof}
If $0 < \hat{P}(\mathbf{X}_i = x) < 1$,  from Theorem \ref{thm1}, $W^*_i = \frac{1}{\hat{P}(\mathbf{X}_i = x)}$ satisfies equation (\ref{eq:L_balancing}) equals 0. Next is to show all variables in $\mathbf{X}$ are independent after balancing by this $W^*$. Note that 
\begin{eqnarray*}
\Scale[1.0]\sum_{i=1}^n W^*_i &=& n  \Scale[1.0]\sum_{x} \frac{1}{n}  \Scale[1.0]\sum_{i: X_{i}=x} W^*_i \\
&=&  \Scale[1.0] n \sum_{x} \hat{P}(\mathbf{X}_i= x) \cdot \frac{1}{\hat{P}(\mathbf{X}_i= x)}= n \cdot 2^{p}
\end{eqnarray*}
Similarly, $\sum_{i: \mathbf{X}_{i,j}=1} W^*_i  =  n \cdot 2^{p-1}$ and $\sum_{i: \mathbf{X}_{i,j}=0} W^*_i = n \cdot 2^{p-1}$. Denote the probability mass function of $\mathbf{X}$ weighted by $W^*$ as $\tilde P$. Thus, for $x = (x_1, \cdots, x_p)$,
$$
\tilde P(\mathbf{X}_{i} = (x_1, \cdots, x_p)) = \frac{\sum_{i: \mathbf{X}_{i,j}=x} W^*_i}{\sum_{i} W^*_i} = \frac{1}{2^p}
$$
and $\forall j$, $\tilde P(\mathbf{X}_{i,j} = x_j) = \frac{\sum_{i: \mathbf{X}_{i,j}=j} W^*_i}{\sum_{i} W^*_i}=\frac{1}{2}$, so we have 
$$
\tilde P(\mathbf{X}_{i} = (x_1, \cdots, x_p)) = \tilde P(\mathbf{X}_{i,1} = x_1) \cdots \tilde P(\mathbf{X}_{i,p} = x_p),
$$
which implies that covariates in $\mathbf{X}$ are independent after balanced by $W^*$. 

\hide{Let $\tilde{\mathbf{X}}$ be an ``extended'' matrix of $\mathbf{X} \in R^{n \times p}$ where each row $\mathbf{X}_i$ is duplicated $W^*_i = \frac{1}{\hat{P}(\mathbf{X}_i = x)}$ times. \footnote{$W^*_i$ does not need to be an integer.} Denote the number of rows in $\tilde{\mathbf{X}}$ to be $\tilde{n}$. When $0 < \hat{P}(\mathbf{X}_i = x) < 1$, 
\begin{eqnarray*}
\Scale[1.0]\sum_{i} W^*_i &=& \tilde{n}  \Scale[1.0]\sum_{x} \frac{1}{\tilde{n}}  \Scale[1.0]\sum_{i: \tilde{X}_{i}=x} W^*_i \\
&=&  \tilde{n} \Scale[1.0]\sum_{x} \hat{P}(\tilde{\mathbf{X}}_i= x) \cdot \frac{1}{\hat{P}(\tilde{\mathbf{X}}_i= x)}= \tilde{n} \cdot 2^{p}
\end{eqnarray*}
Similarly, $\sum_{i: \tilde{\mathbf{X}}_{i,j}=1} W^*_i  =  \tilde{n} \cdot 2^{p-1}$, $\sum_{i: \tilde{\mathbf{X}}_{i,j}=0} W^*_i = \tilde{n} \cdot 2^{p-1}$, and 
$\sum_{i: \tilde{\mathbf{X}}_{i,j}=x} W^*_i=\tilde{n}$. 
}

\end{proof}

\section{Proof of Lemma \ref{lemma:alpha}}
\label{Appendix:C}
\begin{proof}
$\forall k, j$, $k \neq j$, it has $0 \leq \frac{\sum_{i: X_{i,k}=1, X_{i,j}=1} \hat{W}_i }{\sum_{i: X_{i,j}=1} \hat{W}_i } \leq 1$ and $0 \leq \frac{\sum_{i: X_{i,k}=1, X_{i,j}=0} \hat{W}_i }{\sum_{i: X_{i,j}=0} \hat{W}_i } \leq 1$. Thus, $0 \leq \alpha \leq 1$, $\forall m$. Assume for $k, j$ and $k \neq j$, $\sum_{x: x_k = 1, x_j = 1} \mathbbm{1} (\sum_{i=1}^n \mathbbm{1}(X_i = x) = 0) = m_1$, $\sum_{x: x_j = 1} \mathbbm{1} (\sum_{i=1}^n \mathbbm{1}(X_i = x) = 0) = m_2$, $\sum_{x: x_k = 1, x_j = 0} \mathbbm{1} (\sum_{i=1}^n \mathbbm{1}(X_i = x) = 0) = m_3$ and $\sum_{x: x_j = 0} \mathbbm{1} (\sum_{i=1}^n \mathbbm{1}(X_i = x) = 0) = m_4$.

\begin{enumerate}
\item If $m = 0$, $\alpha = 0$ is a direct result from Theorem \ref{thm1}
\item If $0 < m \leq 2^{p-2}$, without loss of generality, assume $m_2 \geq m_4$,  \label{lemma1part2}
\begin{eqnarray*}
\alpha_{jk} =& \left| \frac{\sum_{i: X_{i,k}=1, X_{i,j}=1} \hat{W}_i }{\sum_{i: X_{i,j}=1} \hat{W}_i }  - \frac{\sum_{i: X_{i,k}=1, X_{i,j}=0} \hat{W}_i }{\sum_{i: X_{i,j}=0} \hat{W}_i } \right| \\
=& \left| \frac{2^{p-2} - m_1}{2^{p-1} - m_2} - \frac{2^{p-2} - m_3}{2^{p-1} - m_4} \right| \\
\leq&  \frac{2^{p-2}}{2^{p-1} - m_2} - \frac{2^{p-2} - m_4}{2^{p-1} - m_4}  
\end{eqnarray*}
Given $m_4 = m - m_2$, 
\begin{eqnarray*}
\frac{\partial \alpha_{jk}}{\partial m_2} = 2^{p-2} \left( \frac{1}{(2^{p-1} - m_2)^2} - \frac{1}{(2^{p-1} - m + m_2)^2} \right)
\end{eqnarray*}
which is positive when $m_2 \leq m/2$ (we assume $m_2 \geq m_4$), and therefore 
\begin{eqnarray*}
\alpha_{jk} \leq \frac{2^{p-2}}{2^{p-1} - m} - \frac{1}{2}
\end{eqnarray*}
\item If $2^{p-2} < m < 2^{p-1}$, without loss of generality, assume $m_2 \geq m_4$, when $m_2 \leq 2^{p-2}$, from \ref{lemma1part2}, we have 
\begin{eqnarray*}
\left| \frac{2^{p-2} - m_1}{2^{p-1} - m_2} - \frac{2^{p-2} - m_3}{2^{p-1} - m_4} \right| \leq&  \frac{2^{p-2}}{2^{p-2} - m_2} - \frac{2^{p-2} - m_4}{2^{p-1} - m_4}  \\
\leq& 1 - \frac{2^{p-2} - m + 2^{p-2}}{2^{p-1} - m + 2^{p-2}}
\end{eqnarray*}
when $m_2 > 2^{p-1}$, 
\begin{eqnarray*}
\left| \frac{2^{p-2} - m_1}{2^{p-1} - m_2} - \frac{2^{p-2} - m_3}{2^{p-1} - m_4} \right| \leq&  1 - \frac{2^{p-2} - m_4}{2^{p-1} - m_4} \\
 <& 1 - \frac{2^{p-2} - m + 2^{p-2}}{2^{p-1} - m + 2^{p-2}}
\end{eqnarray*}
because $\frac{2^{p-2} - m_3}{2^{p-2} - m_4}$ is decreasing in $m_4$. Thus 
\begin{eqnarray*}
\alpha \leq 1 - \frac{2^{p-2} - m + 2^{p-2}}{2^{p-1} - m + 2^{p-2}} = 1 - \frac{2^{p-1} - m}{3 \times 2^{p-2} - m}
\end{eqnarray*}
\item If $2^{p-1} \leq m$, let $m_1 = \floor*{\frac{m}{2}} - 2^{p-2}$, $m_2 = \floor*{\frac{m}{2}}  $, $m_3 = 2^{p-2}$, $m_4 = \ceil{\frac{m}{2}}$, which satisfy $m_2 + m_4 = 1$, $m_1 \leq m_2$, and $m_3 \leq m_4$. Moreover,

$$
\left| \frac{2^{p-2} - m_1}{2^{p-1} - m_2} - \frac{2^{p-2} - m_3}{2^{p-1} - m_4} \right| = 1
$$
together with $\alpha \leq 1$, we have $\alpha = 1$

\end{enumerate}
\end{proof}

\section{Proof of Theorem \ref{thm:mean_alpha}}
\label{Appendix:D}
\begin{proof}
The probability that $m$ different values in $\mathcal{X}$ do not appear in $\mathbf{X}$ equals the ratio of the number of solutions to 
\begin{eqnarray} \label{boxprob}
y_1 + y_2 + \cdots y_{2^p} = n,
\end{eqnarray}
where $m$ different $i$s have $y_i=0$, to the total number of solution to Eq. (\ref{boxprob}) without any constraint. The denominator is  ${{n + 2^p - 1} \choose {2^p - 1}}$. The numerator is the number of methods to select $m$ different $i$s, such that $y_i = 0$ multiplied by the number of solutions to $y_1 + y_2 + \cdots y_{2^p-m} = n$ without any constraint, which is ${2^p \choose m} {{n-1} \choose {2^p-1-m}} $. Thus the probability that $m$ different $x$s do not appear in $\mathbf{X}$  is
\begin{eqnarray*}
\frac{1}{{{n + 2^p - 1} \choose {2^p - 1}}} {2^p \choose m} {{n-1} \choose {2^p-1-m}} 
\end{eqnarray*}
With lemma \ref{lemma:alpha}, 
$$\Scale[1.0]{E\left[ \alpha \right] = \frac{1}{{{n + 2^p - 1} \choose {2^p - 1}}} \left\lbrace \sum_{m=0}^{2^{p} - 1} {2^p \choose m} {{n-1} \choose {2^p-1-m}} g(p,m)  \right\rbrace}, $$
where $g(p, m)$ is defined in (\ref{eqn:def_g_p_m}).
\end{proof}

\section{Proof of Theorem \ref{thm:upper_bound_risk}}
\label{Appendix:E}
\begin{proof}
Define $L_{\tilde P}(f) = E_{\tilde P}(l(f(\mathbf{X}), Y))$, where the probability mass function  $\tilde P(\mathbf{X}_i, Y_i) = \tilde P(\mathbf{X}_i) P(Y_i | \mathbf{X}_i)$ has $ P(Y_i = y|\mathbf{X}_i = x) = P(Y_i = y|\mathbf{S}_i=s, \mathbf{V}_i=v) =P(y|s)$ to be the same as that in Assumption \ref{asmp:stable} and  $\tilde P(\mathbf{X}_i = x) = \tilde p_x$, where $\tilde p_x$ is defined in Eq. (\ref{eqn:epsilon_x-def}) and equals $\frac{1}{\tilde{n}} \sum_{i=1}^n W_i^* \mathbbm{1}(\mathbf{X}_i = x)$. 

Let $\tilde{f}^* = \arg \min_{f} L_{\tilde P}(f)$. For all $f$, 
 \begin{eqnarray}
\Scale[0.9]{ |L_{\tilde P}(f) - L_{P}(f)| \leq \max_{x} \mathbb{E}[l(f(x), y)|x] \sum_{x} |\epsilon_x|,}\label{eq:risk-inequal}
 \end{eqnarray}
followed by
\begin{eqnarray*}
 \!\!\! \!\!\!\!\!\!\!\! &&\Scale[0.85]{ |L_{\tilde P}(f) - L_{P}(f)| = |\sum_{x} \tilde{p}_x \mathbb{E}[l(f^*(x), y)|x] - \sum_{x} p_x \mathbb{E}[l(f(x), y)|x]|} \\
 \!\!\! \!\!\!\!\!\!\!\! && \Scale[0.85]{= |\sum_{x} \tilde{p}_x \mathbb{E}[l(f(x), y)|x] - \sum_{x} (\tilde{p}_x - \epsilon_x) \mathbb{E}[l(f(x), y)|x]|}  \\
 \!\!\! \!\!\!\!\!\!\!\! && \Scale[0.85]{= |\sum_{x} \epsilon_x \mathbb{E}[l(f(x), y)|x]|  \leq \max_{x} \mathbb{E}[l(f(x), y)|x] \sum_{x} |\epsilon_x|}.\\
 \end{eqnarray*}
 $ \max_{x} \mathbb{E}[l(\hat{f}(x), y)|x]$ is bounded because $x$ are $y$  are binary and all weights in $f \in \mathcal{F}$ are bounded, where $ \mathcal{F}$ is the model class defined by the constraints in FWDGBR algorithm. From Eq. (\ref{eq:risk-inequal}), we have 
 \begin{eqnarray}
  \Scale[0.9]{L_{P}(\hat{f}) \leq L_{\tilde P}(\hat{f}) + \max_{x} E[l(\hat{f}(x), y)|x] \sum_{x} |\epsilon_x|}.\label{eqn:risk-tmp1}
 \end{eqnarray}

Next is to upper bound the difference between $L_{\tilde P}(\hat{f}) $ and $ L_{\tilde P}(f^*)$. Let $\mathcal{A} = \{x \mapsto l(f(x),y): f \in \mathcal{F} \}$ to be the loss class, where $l(\cdot)$ is the cross-entropy loss and $y$ is binary. From Lemma 3 in \cite{wan2013regularization}, the generalized bound of a $2$-class classifier with logistic cross-entropy loss function is related empirical Rademacher complexity, with probability at least $1-\delta$,
\begin{eqnarray}\label{eqn:loss-complexity}
\Scale[0.85]{L_{\tilde P}(\hat{f}) \leq L_{\tilde P}(f^*) + 4R_n(\mathcal{A}) + 3 \sqrt{\frac{log(2/\delta)}{2n}}}.
\end{eqnarray}

Note that the auto-encoder has $K$ layers to construct $\phi(\mathbf{X}_i)$ and another $K$ layers to reconstruct $\mathbf{X}_i$ from $\phi(\mathbf{X}_i)$. $Y_i$ is predicted by a logistic regression model on $\phi(\mathbf{X}_i)$. The Rademacher complexity depends on weight constraints  $\|\beta\|_2^2 \leq \lambda_4$, $\|\beta\|_1 \leq \lambda_5$ and $\sum_{k=1}^K (\|A^{(k)}\|_F^2+\|\hat{A}^{(k)}\|_F^2) \leq \lambda_7$ in the FWDGBR algorithm. The decoder from $\phi(\mathbf{X}_i)$ to $\mathbf{X}_i$ is not used to predict $Y_i$, so the decoder does not affect the complexity $R_n(\mathcal{A})$. 

Our goal is to give an upper bound on $R_n(\mathcal{A})$. Constraint $\sum_{k=1}^K (\|A^{(k)}\|_F^2+\|\hat{A}^{(k)}\|_F^2) \leq \lambda_7$ implies that $\sum_{k=1}^K \|A^{(k)}\|_F^2 \leq \lambda_7$, and together with $\|b^{(k)}\|_2 \leq M^{(k)}$, implies that $\|A^{(k)}_j\|_2 \leq \sqrt{\lambda_7+(M^{(k)})^2}$. Let $B_k = \sqrt{\lambda_7+(M^{(k)})^2}$. Constraints $\sum_{k=1}^K (\|A^{(k)}\|_F^2+\|\hat{A}^{(k)}\|_F^2) \leq \lambda_7$ and $\|b^{(k)}\|_2 \leq M^{(k)}$ imply $\|[A^{(k)}_j, b^{k}_j]\|_2 \leq B_k$.

We can employ Theorem 3.1 in  \cite{zhai2018adaptive} to obtain the empirical Rademacher complexity $R_n(\mathcal{A})$. Since $\mathbf{X}$ is binary, $\|\mathbf{X}\|_{\max}  = \max_{i,j} |X_{i,j}|= 1$. Theorem 3.1 in  \cite{zhai2018adaptive} does not have bias term in each layer. We can add constant $1$ to neurons in $k$-th layer $\phi(\mathbf{X}_i)^{(k)}$ to fit in the framework of Theorem 3.1 in \cite{zhai2018adaptive}. Thus, the dimension of the $k$-th layer is $l_k$ for $k = 0, 1, 2, \cdots, K$.  The retain vector is $\theta^k = [1]^{l_k}$ in our case (corresponding to the dropout rate in each layer is 0). If $\|\beta\|_2^2 \leq \lambda_4$ is tighter than $\|\beta\|_1 \leq \lambda_5$, that is, $1/p=1/2$ and $1/q = 1/2$ for all layers, we have 
\[ \Scale[0.85]{R_n(\mathcal{A}) \leq  2^{K+1} \sqrt{\frac{2log(2p)}{n}} \sqrt{\lambda_4 l_K} \prod_{k=1}^K B_k (l_{k-1})^{1/2}}\]
On the other hand, if $\|\beta\|_1 \leq \lambda_5$ is tighter than $\|\beta\|_2^2 \leq \lambda_4$, that is $1/p=1$ and $1/q=0$ for the $K$-th layer, so $l_{K}^{1/q}=1$ and
\[ \Scale[0.85]{R_n(\mathcal{A}) \leq  2^{K+1} \sqrt{\frac{2log(2p)}{n}} \lambda_5 \prod_{k=1}^K B_k (l_{k-1})^{1/2} }\]
We combine these two cases and have 
\begin{eqnarray}\label{eqn:rademacher}
 \Scale[0.85]{R_n(\mathcal{A}) \leq  2^{K+1} \sqrt{\frac{2log(2p)}{n}} \min( \sqrt{\lambda_4 l_K},\lambda_5)\prod_{k=1}^K B_k (l_{k-1})^{1/2}}
\end{eqnarray}
Plug Inequality (\ref{eqn:rademacher}) into Inequality (\ref{eqn:loss-complexity}), we have 
\begin{eqnarray}\label{eqn:risk-tmp2}
\nonumber &&\!\!\! \!\!\! \!\!\! \!\!\!\!\!\!\!\! \Scale[0.85]{ L_{\tilde P}(\hat{f}) \leq L_{\tilde P}(f^*)} \\
\nonumber  \!\!\! \!\!\!\!\!\!\!\! && \Scale[0.85]{ + 2^{K+3} \sqrt{\frac{2log(2p)}{n}} \min( \sqrt{\lambda_4 l_K},\lambda_5)\prod_{k=1}^K B_k (l_{k-1})^{1/2}} \\
\!\!\! \!\!\!\!\!\!\!\! && \Scale[0.85]{+ 3 \sqrt{\frac{log(2/\delta)}{2n}}.}
\end{eqnarray}
 \hide{
 Using Chernoff bound and results in Empirical Risk Minimization (using Chernoff Bound), since $|\mathcal{F}|$ is finite and $l(f(x), y) \in [a,b]$, with probability $\geq 1 - \delta$, for all $f \in \mathcal{F}$
$$
\Scale[0.9]{|\hat{L}(f) - L_{\tilde P}(f)| \leq (b-a) \sqrt{\frac{log2|\mathcal{L}|+log(1/\delta)}{2n}}}.
$$
Thus, 
 \begin{eqnarray}
 \Scale[0.9]{ L_{\tilde P}(\hat{f})  \leq L_{\tilde P}(\tilde{f}^*)  + 2(b-a) \sqrt{\frac{log2|\mathcal{F}|+log(1/\delta)}{2n}}}.\label{eqn:risk-tmp2}
 \end{eqnarray}
 }
 The last step is to bound the difference between $L_{\tilde P}(\tilde{f}^*) $ and $ L_{P}(f^*)$. When $L_{\tilde P}(\tilde{f}^*) \geq L_{P}(f^*)$, 
\begin{eqnarray}
\nonumber \!\!\! \!\!\!\!\!\!\!\! &&\Scale[0.85]{L_{\tilde P}(\tilde{f}^*) - L_{P}(f^*) = \sum_{x} \tilde{p}_x \mathbb{E}[l(\tilde{f}^*(x), y)|x] - \sum_{x} p_x \mathbb{E}[l(f^*(x), y)|x]} \\
\nonumber \!\!\! \!\!\!\!\!\!\!\! &&\Scale[0.8]{= \sum_{x} \tilde{p}_x \mathbb{E}[l(\tilde{f}^*(x), y)|x] - \sum_{x} \tilde{p}_x \mathbb{E}[l(f^*(x), y)|x] + \sum_{x} \epsilon_x \mathbb{E}[l(f^*(x), y)|x]} \\ 
   \!\!\! \!\!\!\!\!\!\!\! &&\Scale[0.85]{\leq \sum_{x} \epsilon_x \mathbb{E}[l(f^*(x), y)|x]} \label{ineq:risk} \\ 
\nonumber \!\!\! \!\!\!\!\!\!\!\! &&\Scale[0.85]{\leq \max_{x} \mathbb{E}[l(f^*(x), y)|x] \sum_{x: \epsilon_x > 0} \epsilon_x }
\end{eqnarray}
Eq. (\ref{ineq:risk}) holds followed by $\tilde{f}^*(x)$ minimizes $L_{\tilde P}(\tilde{f})$, so $L_{\tilde P}(\tilde{f}^*) \leq L_{\tilde P}(f^*) $, and then $\sum_{x} \tilde{p}_x \mathbb{E}[l(\tilde{f}^*(x), y)|x] \leq \sum_{x} \tilde{p}_x E[l(f^*(x), y)|x]$. Thus, 
 \begin{eqnarray}
 \Scale[0.9]{L_{\tilde P}(\tilde{f}^*)  \leq L_{P}(f^*) + \max_{x} \mathbb{E}[l(f^*(x), y)|x] \sum_{x: \epsilon_x > 0} \epsilon_x}\label{eqn:risk-tmp3}
 \end{eqnarray}
always holds.
From Eq. (\ref{eqn:risk-tmp1}), (\ref{eqn:risk-tmp2}), (\ref{eqn:risk-tmp3}), we have
\begin{eqnarray*}
 \!\!\! \!\!\!\!\!\!\!\! &&\Scale[0.85]{ L_{P}(\hat{f}) \leq L_{P}(f^*) + \max_{x} \mathbb{E}[l(\hat{f}(x), y)|x] \sum_{x} |\epsilon_x|  } \\
 \!\!\! \!\!\!\!\!\!\!\! && \Scale[0.85]{+ 2^{K+3} \sqrt{\frac{2log(2p)}{n}} \min( \sqrt{\lambda_4 l_K},\lambda_5)\prod_{k=1}^K B_k (l_{k-1})^{1/2} }  \\
  \!\!\! \!\!\!\!\!\!\!\! && \Scale[0.85]{+  3 \sqrt{\frac{log(2/\delta)}{2n}} + \max_{x} \mathbb{E}[l(f^*(x), y)|x] \sum_{x: \epsilon_x > 0}  \epsilon_x }  \\
 \!\!\! \!\!\!\!\!\!\!\! &\leq&\Scale[0.85]{ L_{P}(f^*)+ 2^{K+3} \sqrt{\frac{2log(2p)}{n}} \min( \sqrt{\lambda_4 l_K},\lambda_5)\prod_{k=1}^K B_k (l_{k-1})^{1/2}  },\\
  \!\!\! \!\!\!\!\!\!\!\! && \Scale[0.85]{+ 3 \sqrt{\frac{log(2/\delta)}{2n}} + 2 \max_{x, f} \mathbb{E}[l(f(x), y)|x] \sum_{x} |\epsilon_x|  }  
\end{eqnarray*}
with probability $\geq 1 - \delta$.
\end{proof}

%\vskip 0.2in
\bibliography{sample}

\end{document}